\documentclass[10pt, twocolumn]{article}
\usepackage{arxiv}
\usepackage{geometry}
\geometry{
 left=21mm,
 right=21mm,
 top=20mm,
 bottom=20mm,
 }
\setlength{\columnsep}{8mm}

\usepackage{hyperref}

\usepackage{authblk}

\usepackage[dvipsnames]{xcolor}
\usepackage{natbib}
\usepackage{amsmath,amsthm,amssymb}
\newcommand\numberthis{\addtocounter{equation}{1}\tag{\theequation}}
\usepackage{dsfont}
\usepackage{stmaryrd}
\usepackage{graphicx}
\usepackage{float}
\usepackage{subfigure}
\usepackage{dblfloatfix}
\usepackage[noabbrev]{cleveref}
\usepackage{pifont}

\usepackage{booktabs}
\usepackage{eurosym}
\usepackage{enumitem}
\usepackage{microtype}

\usepackage{algorithm,algorithmic}

\newtheorem{theorem}{Theorem}[section]
\newtheorem{lemma}[theorem]{Lemma}

\theoremstyle{theorem}

\newtheoremstyle{TheoremNum}
    {\topsep}{\topsep}              
    {\itshape}                      
    {}                              
    {\bfseries}                     
    {.}                             
    { }                             
    {\thmname{#1}\thmnote{ \bfseries #3}}
\theoremstyle{TheoremNum}
\newtheorem{customTheorem}{Theorem}

\theoremstyle{definition}
\newtheorem{definition}[theorem]{Definition}

\theoremstyle{definition}

\theoremstyle{definition}

\theoremstyle{definition}
\newtheorem{example}[theorem]{Example}

\theoremstyle{definition}
\newtheorem{model}[theorem]{Model}

\definecolor{norange}{RGB}{230,120,20}
\definecolor{ngreen} {RGB}{98,158,31}

\definecolor{blindorange}{RGB}{215,131,37}
\definecolor{blindblue} {RGB}{81,172,226}
\definecolor{blindpurple} {RGB}{193,114,177}
\definecolor{blindgreen} {RGB}{33,145,106}

\usepackage[hang,flushmargin]{footmisc}

\usepackage{wrapfig}

\def\gs{\vspace{-0.5em}}

\renewcommand\epsilon{\varepsilon}
\renewcommand\phi{\varphi}

\def\Z{\mathcal Z}
\def\F{\mathcal F}

\definecolor{mydarkblue}{rgb}{0,0.08,0.45}
\hypersetup{
    colorlinks=true,
    linkcolor=mydarkblue,
    citecolor=mydarkblue,
    filecolor=mydarkblue,
    urlcolor=mydarkblue,
}

\usepackage{times}

\footskip = 25pt 

\title{Adaptive Conformal Predictions for Time Series}
\author[,1,2,3]{Margaux Zaffran\thanks{Corresponding author: \texttt{margaux.zaffran@inria.fr}}}
\author[3]{Aymeric Dieuleveut}
\author[1,4]{Olivier Féron}
\author[1]{Yannig Goude}
\author[2,5]{Julie Josse}

\affil[1]{Eletrcicité De France R\&D, Palaiseau, France}
\affil[2]{INRIA Sophia-Antipolis, Montpellier, France}
\affil[3]{CMAP, Ecole Polytechnique, IP Paris, Palaiseau, France}
\affil[4]{FiME, Palaiseau, France}
\affil[5]{IDESP, Montpellier, France}

\begin{document}

\maketitle

\begin{abstract}

Uncertainty quantification of predictive models is crucial in decision-making problems. Conformal prediction is a general and theoretically sound answer. However, it requires exchangeable data, excluding time series. While recent works tackled this issue, we argue that Adaptive Conformal Inference \citep[ACI,][]{gibbs_adaptive_2021}, developed for distribution-shift time series, is a good procedure for time series with general dependency. We theoretically analyse the impact of the learning rate on its efficiency in the exchangeable and auto-regressive case. We propose a parameter-free method, AgACI, that adaptively builds upon ACI based on online expert aggregation. We lead extensive fair simulations against competing methods that advocate for ACI's use in time series. We conduct a real case study: electricity price forecasting. The proposed aggregation algorithm provides efficient prediction intervals for day-ahead forecasting. All the code and data to reproduce the experiments is made available.
\end{abstract}

\section{Introduction}

The increasing use of renewable intermittent energies leads to more dependent and volatile energy markets. Therefore, an accurate electricity price forecasting is required to stabilize energy production planning, gathering loads of research works as evidenced by recent substantial reviews \citep{weron_electricity_2014,lago_forecasting_2018-1,lago_forecasting_2021}. Furthermore, probabilistic forecasts are needed to develop risk-based strategies \citep{gaillard_additive_2016,maciejowska_probabilistic_2016,nowotarski_recent_2018,uniejewski_regularized_2021}. On the one hand, the lack of uncertainty quantification of predictive models is a major barrier to the adoption of powerful machine learning methods. On the other hand, probabilistic forecasts are only valid asymptotically or upon strong assumptions on the data.

Conformal prediction \citep[CP,][]{vovk_machine-learning_1999, vovk_algorithmic_2005,papadopoulos_inductive_2002} is a promising framework to overcome both issues. It is a general procedure to build predictive intervals for any (black box) predictive model, such as neural networks, which are \textit{valid} (i.e. achieve nominal marginal coverage) in finite sample and without any distributional assumptions except that the data are exchangeable. 

Thereby, CP has received increasing attention lately, favored by the development of \textit{split conformal prediction} \citep[SCP,][reformulated from \textit{inductive} CP, \citeauthor{papadopoulos_inductive_2002}, \citeyear{papadopoulos_inductive_2002}]{lei_distribution-free_2018}. 
More formally, suppose we have $n$ training samples $\left( x_i, y_i \right) \in \mathds{R}^d \times \mathds{R}$, $i \in \llbracket 1,n \rrbracket$, realizations of random variables $(X_1, Y_1) \dots, (X_n,Y_n)$, and that we aim at predicting a new observation $y_{n+1}$ at $x_{n+1}$. Given a \textit{miscoverage rate} $\alpha \in [0,1]$ fixed by the user (typically 0.1 or 0.05) the aim is to build a predictive interval $\mathcal{C}_{\alpha}$ such that:
\begin{equation}
\mathds{P}\left\{Y_{n+1} \in \mathcal{C}_{\alpha}\left(X_{n+1}\right)\right\} \geq 1-\alpha,
\label{eq:cp_prop}
\end{equation}
with $\mathcal{C}_{\alpha}$ as small as possible, in order to be informative. For the sequel, we call a \textit{valid interval} an interval satisfying \cref{eq:cp_prop} and an \textit{efficient interval} when it is as small as possible \citep{vovk_algorithmic_2005, shafer_tutorial_2008}. 

To achieve this, SCP first splits the $n$ points of the training set in two sets $\rm{Tr}, \rm{Cal}\subset \llbracket 1,n \rrbracket$, to create a \textit{proper training set}, $\rm{Tr}$, and a \textit{calibration set}, $\rm{Cal}$. On the proper training\- set a regression model $\hat \mu$ (chosen by the user) is fitted, and then used to predict on the calibration set. A \textit{conformity score} is applied to assess the conformity between the calibration's response values and the predicted values, giving $S_{\rm{Cal}}=  \{(s_i)_{i\in \rm{Cal}}\}$. In regression, usually the absolute value of the residuals is used, i.e.~$s_i = |\hat\mu(x_i) - y_i|$. Finally, a corrected\footnote{The correction $\alpha \to \hat\alpha$ is needed because of the inflation of quantiles in finite sample (see Lemma 2 in \citet{romano_conformalized_2019} or Section 2 in \citet{lei_distribution-free_2018}).} $(1-\hat\alpha)$-th quantile  of these scores $\widehat{Q}_{1-\hat \alpha}(S_{\rm{Cal}})$ is computed to define the size of the interval, which, in its simplest form, is centered on the predicted value: $\mathcal{C}_{\alpha}\left(x_{n+1}\right) = \widehat{C}_{\hat \alpha }(x_{n+1}):=[ \hat \mu(x_{n+1})\pm \widehat{Q}_{1-\hat \alpha}(S_{\rm{Cal}})]$. These steps are detailed in \Cref{app:scp}. More details on CP, including beyond regression, are given in \citet{vovk_algorithmic_2005,angelopoulos-gentle}. 

The cornerstone of SCP \textit{validity} results is the exchangeability assumption of the data~\citep[see][and \Cref{app:guarantees_scp}]{lei_distribution-free_2018}. 
However, this assumption is not met in time series forecasting problems. 
Despite the lack of theoretical guarantees, several works have applied CP to time series. \citet{dashevskiy_2008, dashevskiy_2011} apply original (\textit{inductive}) CP \citep{papadopoulos_inductive_2002} to both simulated (using Auto-Regressive Moving Average (ARMA) processes) and real network traffic data and obtain \textit{valid} intervals.
\citet{pmlr-v128-wisniewski20a, kath_conformal_2021} apply SCP respectively to financial data (e.g. markets makers' net positions) and to electricity price forecasting on various markets. In order to account for the temporal aspect, they consider an online version of SCP. In both studies, the \textit{validity} varied greatly depending on the markets and the underlying regression model, suggesting that further developments of CP and theoretical guarantees for time series are needed. 

To this end, \citet{chernozhukov_exact_2018} extend the CP theory to ergodic cases in order to include dependent data. \citet{pmlr-v139-xu21h} improve on that theory and propose a new algorithm, Ensemble Prediction Interval (EnbPI), adapted to time series by adding a sequential aspect.

Another case that breaks the exchangeability assumption is \textit{distribution shift}, which allows for example to deal with cases where the test data is shifted with respect to the training data.  \citet{tibshirani_conformal_2019} consider covariate shift while \citet{cauchois_robust_2020} tackle a joint distributional shift setting (that is, of $(X,Y)$). In both studies, a single shift in the distribution is considered, a major limitation for applying these methods to time series. In an adversarial setting, \citet{gibbs_adaptive_2021} propose Adaptive Conformal Inference (ACI), accounting for an undefined number of shifts on the joint distribution. It is based on refitting the predictive model, as well as updating online the quantile level used by a recursive scheme depending on an hyper-parameter $\gamma$ (a learning rate). Furthermore, they prove an asymptotic \textit{validity} result for any data distribution. 

\begin{figure}[!b]
\gs\gs
    \centering
    \includegraphics[scale=0.19]{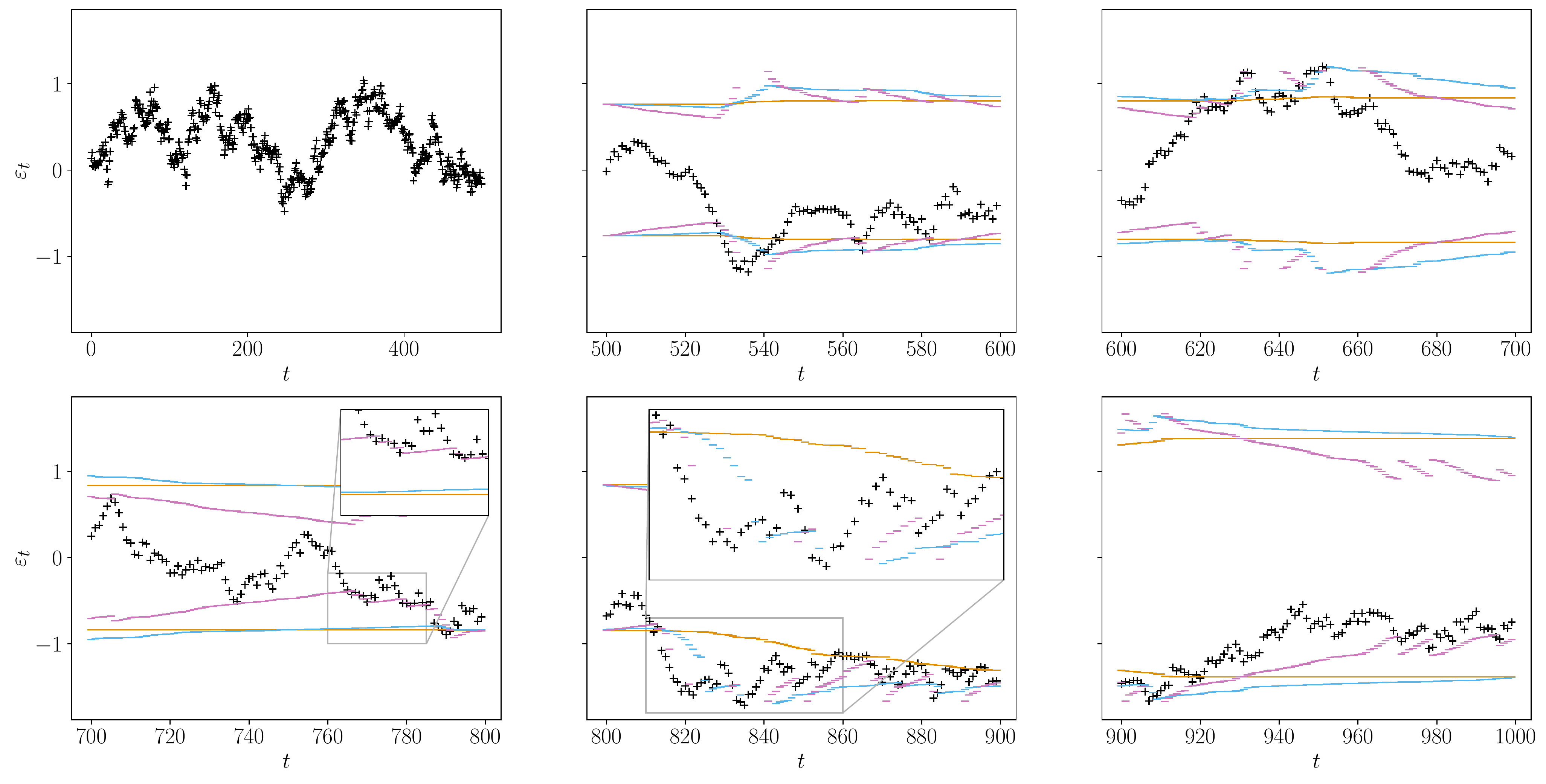}
    \gs \gs \gs
    \caption{ACI on one simulated path $\varepsilon_t$, $t=1, \dots, 1000$, from an AR(1) process (in black). The first 500 values form the initial calibration set (top left subplot), and predicted interval bounds are computed on the last 500 points (5 last subplots, 100 on each) for  $\gamma =0$ (\textcolor{blindorange}{orange}),  $\gamma = 0.01$ (\textcolor{blindblue}{blue}) and $\gamma = 0.05$ (\textcolor{blindpurple}{purple}).}
    \label{fig:acp_example}
\end{figure}
We argue in this work that the design and guarantees of ACI can be beneficial for dependent data without distribution shifts. We illustrate this on  a toy example in \Cref{fig:acp_example}, assuming that the fitted regression model produces AR(1) residuals. The two ACI versions ($\gamma\!\!=$\textcolor{blindblue}{$0.01$} and \textcolor{blindpurple}{$0.05$} intervals) adapt better to the data than classical online SCP (\textcolor{blindorange}{$\gamma=0$}). 

\textbf{Contributions.} We propose to analyse ACI \citep{gibbs_adaptive_2021} in the context of time series with general de\-pendency and make the following contributions:
\begin{itemize}[topsep=0pt,noitemsep,leftmargin=*,wide]
    \item Relying on an asymptotic analysis of ACI's behaviour for simple time series distribution, we prove that ACI deteriorates \textit{efficiency} in an exchangeable case while improving it in an AR setting with a well-chosen $\gamma$ (\Cref{sec:theory}).
    \item We introduce AgACI, a parameter-free method using online expert aggregation, to avoid choosing $\gamma$, achieving good performances in terms of $\textit{validity}$ and $\textit{efficiency}$~(\Cref{sec:aci_adaptive}). 
    \item We compare ACI to EnbPI and online SCP on extensive synthetic experiments  
    and we propose an easy-to-interpret visualisation combining \textit{validity} and \textit{efficiency} (\Cref{sec:benchmark}).
    \item We forecast and give predictive intervals on French electricity prices, an area where accurate predictions, but also controlled predictive intervals, are required (\Cref{sec:comp_real}).
\end{itemize}
To allow for better benchmarking of existing and new methods, we provide (re-)implementations in Python of all the described methods and a complete pipeline of analysis on \href{https://github.com/mzaffran/AdaptiveConformalPredictionsTimeSeries}{GitHub}.

\gs

\section{Setting: ACI for time series}
\label{sec:acp}
\gs
In this section, we introduce ACI and our framework. We consider $T_0$ observations $\left( x_1, y_1 \right), \dots, \left( x_{T_0}, y_{T_0} \right)$ in $\mathds{R}^d \times \mathds{R}$. The aim is to predict the response values and give predictive intervals for $T_1$ subsequent observations $x_{T_0+1},\dots,x_{T_0+T_1}$ sequentially: at any prediction step $t \in \llbracket T_0+1, T_0+T_1 \rrbracket$, $y_{t-T_0},\dots,y_{t-1}$ have been revealed. Thereby, the data $\left(\left(x_{t-T_0}, y_{t-T_0}\right), \dots, \left(x_{t-1}, y_{t-1}\right)\right)$ are used for the construction of the predicted interval. 

\textbf{Adaptive Conformal Inference.} Proposed by \citet{gibbs_adaptive_2021}, ACI is designed to adapt CP to temporal distribution shifts. The idea of ACI is twofold. First, one considers an online procedure with a random split\footnote{\Cref{fig:scheme_both}(a) with \textcolor{blindorange}{training} and \textcolor{blindblue}{calibration} part shuffled randomly.}, i.e., $\rm{Tr}_t$ and  $\rm{Cal}_t$ are random subsets of the last $T_0$ points. Second, to improve adaptation when the data is highly shifted, an \textit{effective miscoverage level} $\alpha_t$,  updated recursively, is used instead of the target level $\alpha$. Set $\alpha_1 = \alpha$, and for $t\geq 1$ 
\begin{align}
\begin{cases}
\widehat{C}_{\alpha_t}\left(x_{t}\right) &=  [\hat \mu(x_{t}) \pm \widehat{Q}_{1- \alpha_t}(S_{\rm{Cal}_t})] \\ 
\alpha_{t+1}&=\alpha_{t}+\gamma\left(\alpha-\mathds{1}\{ y_{t} \notin \widehat{C}_{\alpha_t}\left(x_{t}\right) \}\right),
\label{eq:update_scheme}
\end{cases}
\end{align}
for  $\gamma \geq 0$\footnote{ACI actually wraps around \textit{any} CP procedure, here the definition is given using mean regression SCP.}. If ACI does not cover at time~$t$, then ${\alpha_{t+1} \leq \alpha_t}$, and the size of the predictive interval increases; conversely when it covers. Nothing prevents $\alpha_t \leq 0$ or $\alpha_t \geq 1$. While the later is rare (as $\alpha$ is small) and produces by convention $\widehat{C}_{\alpha_t}(\cdot) = \{\hat \mu(\cdot)\}$ (i.e.~$\widehat{Q}_{1-\alpha_t}= 0$) , the former can happen frequently for some $\gamma$, giving $\widehat{C}_{\alpha_t} \equiv \mathds{R}$ ($\widehat{Q}_{1-\alpha_t}=+\infty$). 
 
\textbf{How to deal with infinite intervals.} A specificity of ACI's algorithm is thus to often produce infinite intervals. Defining the \textit{average} length of an interval is then impossible. In order to assess the \textit{efficiency} in the following, we consider two solutions: 
(i) imputing the length of infinite intervals by (twice) the overall maximum of the residuals, or $Q(1)$ if  the residual's quantile function is known and bounded\footnote{This happens in practice when the response and prediction are bounded, e.g., thanks to physical/real constraints as for the spot prices presented in \Cref{sec:data_real}, that are bounded by market rules.};  
(ii) focusing on the median instead. 

\textbf{ACI on time series with general dependency.} As highlighted by \citet{pmlr-v128-wisniewski20a,kath_conformal_2021}, the first step to adapt a method for dependent time series is to work online which is the case for ACI. Moreover, the update of the quantile level according to the previous error implies that ACI could cope with a fitted model that has not correctly caught the temporal evolution, such as a trend, a seasonality pattern or a dependence on the past. Therefore, ACI is a perfect candidate for CP for time series with general dependency.
To account for the temporal structure, we change the random split to a sequential split.\footnote{As in \Cref{fig:scheme_both}(a). This is also consistent with OSSCP (Sec.~\ref{sec:osscp}).}

To gain understanding on ACI in the context of dependent temporal data, we analyse a situation where a fitted regression model $\hat{\mu}$ produces AR(1) residuals, thus $y_t-\hat \mu(x_t) = \varepsilon_t$, where $\varepsilon_t$ is an AR(1) process: $\varepsilon_{t+1} = 0.99\varepsilon_t + \xi_{t+1}$, with $\xi_t \sim \mathcal{N}(0,0.01)$. We plot this toy example in \Cref{fig:acp_example}, for $T_0 = T_1 = 500$. 
Three versions of ACI are compared: \textcolor{blindorange}{$\gamma = 0$}, the quantile level is not updated but the calibration set $\rm{Cal}_t$ is; \textcolor{blindblue}{$\gamma = 0.01$} and \textcolor{blindpurple}{$\gamma = 0.05$}. To obtain an insightful visualisation\footnote{We suggest focusing the visualisation on the scores to analyse the behaviour of CP methods, as they are at the core of the \textit{validity} proof. A detailed discussion on this is given in App.~\ref{app:how_to_visu}}, we represent the interval $[\pm \widehat{Q}_{1-\alpha_t}(S_{\rm{Cal}_t})]$ instead of $\widehat{C}_{\alpha_t}(x_t)$. When no intervals are displayed, ACI is predicting~$\mathds{R}$. Here and in the sequel, we use $\alpha=0.1$.

In this toy example, the coverage rate among many observations is \textit{valid} for $\gamma \in \{0.01,0.05\}$ (90\% and 92\% of points included) but not for $\gamma = 0$ (72.6\%). Moreover, \Cref{fig:acp_example} shows that the type of errors depends on $\gamma$. For $\gamma = 0$, ACI excludes consecutive observations (e.g. for $t \in [810,860]$, zoomed-in plot). For $\gamma \in \{0.01,0.05\}$, ACI manages to adapt to these observations, and the higher the $\gamma$, the less the adaptation is delayed. Furthermore, when the residuals are small and far from both interval bounds, ACI quickly reduces the interval's length and produces more \textit{efficient} intervals. Consequently, ACI may also not cover on points for which the residuals have a relatively small values compared to the calibration's values (e.g. for $t \in [760,785]$).

\section{Impact of $\gamma$ on ACI efficiency}
\label{sec:theory}
The choice of the parameter $\gamma$ strongly impacts the behaviour of ACI: while the method always satisfies the \textit{asymptotic validity} property, i.e.~$\frac{1}{T}\sum_{t=1}^{T} \mathds{1}\{ y_{t} \notin \widehat{C}_{\alpha_t}\left(x_{t}\right)\} \overset{a.s.}{\underset{T\to \infty}{\longrightarrow}} \alpha$ 
\citep[Proposition 4.1 in][]{gibbs_adaptive_2021}, this property does not give any insight on the length of resulting intervals. Besides, this guarantee directly stems from the fact that  $\frac{1}{T}\sum_{t=1}^{T} \mathds{1}\{ y_{t} \notin \widehat{C}_{\alpha_t}\left(x_{t}\right) \} -\alpha \le 2/(\gamma T)$. This tends to suggest the use of larger $\gamma$ values, that unfortunately generate frequent infinite intervals. Here, we thus analyse the impact of $\gamma$ on ACI's \textit{efficiency} in simple yet insightful cases: in \Cref{subsec:theory-exch}, focusing on the exchangeable case, then in \Cref{sec:theory_ar}, with a simple AR process on the residuals. 

\textbf{Approach.} Our focus is on the impact of the key parameter~$\gamma$. Analysing simple theoretical distributions allows to build intuition on the behaviour of the algorithm for more complex data structure. In order to derive theoretical results, we thus make  supplementary modelling assumptions on the residuals, and do not consider the impact of the calibration set: we introduce $Q$ the quantile function of the scores and assume, for all $\hat \alpha$ and $t$, $\widehat Q_{1-\hat \alpha}(S_{\rm{Cal}_t})=Q(1-\hat \alpha)$. This corresponds to considering the limit as $\#\rm{Cal}\to \infty$. This allows to focus on the impact of recursive updates in \eqref{eq:update_scheme} and  describe their behaviour by relying on Markov Chain theory.

\subsection{Exchangeable case}
\label{subsec:theory-exch}

ACI is usually applied in an adversarial context. If the scores are actually exchangeable, ACI's \textit{validity} would not improve upon SCP (known to be quasi-exactly \textit{valid}), thus assessing ACI's impact on \textit{efficiency} is necessary. Define $L(\alpha_t) = 2Q(1-\alpha_t)$ the length of the interval predicted by the adaptive algorithm at time $t$, and ${L_0 = 2Q(1-\alpha)}$ the length of the interval predicted by the non-adaptive algorithm (or equivalently, $\gamma = 0$).

\begin{theorem}
Assume that: (i) $\alpha \in \mathds{Q}$; 
    (ii) the scores are exchangeable with quantile function $Q$; (iii) the quantile function is perfectly estimated at each time (as defined above); (iv) the quantile function $Q$ is bounded and $\mathcal{C}^4([0,1])$.
Then, for all $\gamma>0$,  $\left(\alpha_t\right)_{t > 0}$ forms a Markov Chain, that admits a stationary distribution $\pi_\gamma$, and
\begin{equation*}
    \frac{1}{T}\sum\limits_{t=1}^T L(\alpha_t) \overset{a.s.}{\underset{T \rightarrow +\infty}{\longrightarrow}} \mathds{E}_{\pi_\gamma}[L] \overset{\text{not.}}{=} \mathds{E}_{\tilde \alpha \sim  \pi_\gamma}[L(\tilde \alpha)].
\end{equation*}
Moreover, as $\gamma \to 0$, 
\begin{equation*}
      \mathds{E}_{\pi_\gamma}[L] = L_0 + Q''(1-\alpha)\frac{\gamma}{2}\alpha(1-\alpha) + O(\gamma^{3/2}).
\end{equation*}
\label{thm:iid}
\end{theorem}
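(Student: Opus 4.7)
Under the perfect-quantile assumption and the continuity of $Q$ (guaranteed by $Q \in \mathcal{C}^4$), the miscoverage event $E_t := \mathds{1}\{y_t \notin \widehat{C}_{\alpha_t}(x_t)\}$ is conditionally $\mathrm{Bernoulli}(\alpha_t)$ given $\alpha_t \in (0,1)$ and the past (using exchangeability of the scores), while the stated conventions on $\widehat{Q}_{1-\alpha_t}$ force $E_t = 0$ when $\alpha_t > 1$ and $E_t = 1$ when $\alpha_t < 0$. Hence $\alpha_{t+1} = \alpha_t + \gamma(\alpha - E_t)$ is a time-homogeneous Markov chain. Writing $\alpha = p/q$ (assumption (i)), the rescaled chain $\beta_t := (\alpha_t - \alpha)/\gamma$ has increments in $\{p/q, (p-q)/q\}$ and thus lives on the countable lattice $(1/q)\mathds{Z}$, where irreducibility on the interior is immediate. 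Positive recurrence I would obtain from a Foster--Lyapunov drift inequality with $V(\beta) = \beta^2$, since $\mathds{E}[\alpha - E_t \mid \alpha_t] = \alpha - \alpha_t$ in the interior provides a restoring drift that dominates outside a finite set. Existence and uniqueness of $\pi_\gamma$ follow, and Birkhoff's ergodic theorem combined with boundedness of $L$ yields the almost-sure limit.

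\textbf{Asymptotic expansion.} Rewriting the recursion in the interior as
\begin{equation*}
\alpha_{t+1} - \alpha = (1-\gamma)(\alpha_t - \alpha) - \gamma(E_t - \alpha_t)
\end{equation*}
exposes a contractive AR(1) driven by bounded martingale-difference innovations with conditional variance $\alpha_t(1-\alpha_t)$. Equating expectations at stationarity gives $\mathds{E}_{\pi_\gamma}[\alpha_t] = \alpha$; equating variances and using $\mathds{E}[\alpha_t(1-\alpha_t)] = \alpha(1-\alpha) - \mathrm{Var}(\alpha_t)$ yields the clean identity $\mathrm{Var}_{\pi_\gamma}(\alpha_t) = \tfrac{\gamma}{2}\alpha(1-\alpha)$, both up to boundary corrections. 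A fourth-order Taylor expansion of $L(a) = 2Q(1-a)$ around $a = \alpha$, permitted by $Q \in \mathcal{C}^4([0,1])$, then gives
\begin{equation*}
\mathds{E}_{\pi_\gamma}[L] = L_0 - 2Q'(1-\alpha)\,\mathds{E}_{\pi_\gamma}[\alpha_t - \alpha] + Q''(1-\alpha)\,\mathrm{Var}_{\pi_\gamma}(\alpha_t) + R_\gamma,
\end{equation*}
and plugging in the two moments above recovers the announced leading term.

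\textbf{Remainder and main obstacle.} What remains is to prove $R_\gamma = O(\gamma^{3/2})$. The expected scaling $\mathds{E}_{\pi_\gamma}[(\alpha_t-\alpha)^k] = O(\gamma^{k/2})$ follows from iterating the AR(1) representation to write $\alpha_t - \alpha = -\gamma \sum_{s<t} (1-\gamma)^{t-1-s}(E_s - \alpha_s)$ as a geometric sum of bounded martingale differences, and computing centred moments combinatorially; the cubic contribution is then $O(\gamma^{3/2})$ and the quartic remainder $O(\gamma^2)$. I expect the principal obstacle to be showing that the boundary corrections do not pollute this expansion: the identities $\mathds{E}_{\pi_\gamma}[\alpha_t] = \alpha$ and $\mathrm{Var}_{\pi_\gamma}(\alpha_t) = \tfrac{\gamma}{2}\alpha(1-\alpha)$ are exact only on the event $\{\alpha_t \in (0,1)\}$, so one has to prove that $\pi_\gamma(\alpha_t \notin [0,1])$ decays faster than any power of $\gamma$. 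This should follow from sub-Gaussian concentration of $\beta_t$ under $\pi_\gamma$, inherited from the boundedness of the innovations and the geometric contraction factor $1-\gamma$, which is enough to absorb all boundary contributions into the $O(\gamma^{3/2})$ remainder.
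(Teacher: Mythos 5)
Your proposal follows the same skeleton as the paper's proof: reduce the update to the Markov chain $\alpha_{t+1}=\alpha_t+\gamma(\alpha-B_{P(\alpha_t)})$ (with $P$ the projection onto $[0,1]$), establish a unique stationary law $\pi_\gamma$ and invoke the ergodic theorem, then derive the expansion from the low-order moments of $\pi_\gamma$ via a fourth-order Taylor expansion of $Q$. Your contractive rewriting $\alpha_{t+1}-\alpha=(1-\gamma)(\alpha_t-\alpha)-\gamma(E_t-\alpha_t)$ is an equivalent route to the paper's moment identities (the paper squares the recursion and takes stationary expectations), and your targets $\mathds{E}_{\pi_\gamma}[\cdot]=\alpha$ and $\mathrm{Var}=\tfrac{\gamma}{2}\alpha(1-\alpha)$ are exactly the paper's first and second moment lemmas. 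Two of your choices are heavier than necessary: for existence of $\pi_\gamma$ you propose a Foster--Lyapunov drift argument on a countable lattice, whereas the chain is in fact confined to $]\gamma(\alpha-1),1+\gamma\alpha[$ (below $0$ the interval is $\mathds{R}$ so the chain can only move up, and symmetrically above $1$), so the state space is a \emph{finite} subset of the lattice and positive recurrence is free --- only irreducibility requires work, and it is not quite ``immediate'' (the paper's argument uses the gcd structure of the two step sizes).

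The one genuine soft spot is the boundary handling. You Taylor-expand $L(a)=2Q(1-a)$ in the variable $\alpha_t$ and defer all boundary effects to a claimed super-polynomial decay of $\pi_\gamma(\alpha_t\notin[0,1])$. But $Q(1-a)$ is not defined for $a<0$; the length is $L(\alpha_t)=2Q(1-P(\alpha_t))$, so the expansion must be carried out in $P(\alpha_t)-\alpha$, and correspondingly it is the moments of $P(\tilde\alpha)-\alpha$ that must be controlled. Once this is done, $\mathds{E}_{\pi_\gamma}[P(\tilde\alpha)]=\alpha$ holds exactly (no boundary correction at all), and the correction in the variance identity is absorbed using only $|\alpha_t-P(\alpha_t)|\le\gamma$ together with the Chebyshev-type bound $\mathds{P}_{\pi_\gamma}(\tilde\alpha\notin[0,1])\le C_\alpha^{-1}\mathds{E}_{\pi_\gamma}[(P(\tilde\alpha)-\alpha)^2]=O(\gamma)$; the sub-Gaussian concentration you invoke is true (Azuma on the geometric martingale sum) but is an unproved and unnecessary detour. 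Two further calibration points: the paper only establishes $\mathds{E}_{\pi_\gamma}[(P(\tilde\alpha)-\alpha)^4]=O(\gamma^{3/2})$, not the $O(\gamma^2)$ you claim, and this weaker rate suffices; and the third-order bound that is actually available is for the \emph{signed} moment $\mathds{E}_{\pi_\gamma}[(P(\tilde\alpha)-\alpha)^3]=O(\gamma^{3/2})$ --- the paper explicitly notes that the absolute third moment is not known to obey this rate, which is precisely why the fourth-order Lagrange form of the remainder is used rather than a third-order one. Your plan is consistent with this, but the distinction should be made explicit.
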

\gs\gs\gs\gs
\textbf{Interpretation of assumptions.} Assumption (i) is weak since a practitioner will always select $\alpha \in \mathds{Q}$ while  assumption (ii) describes the classical exchangeable setting. The main assumptions are (iii) and (iv): (iii) can be interpreted as considering an infinite calibration set while (iv) is necessary\footnote{$\forall \gamma\!\!>\!0$, $\mathds{P}_{\pi_\gamma}(\tilde \alpha \le 0)\!>\!0$: we need $ |Q(1)|\!<\!\infty$ to define $\mathds{E}_{\pi_\gamma}[L]$.} in order to define $\mathds{E}_{\pi_\gamma}[L]$: here, we extend $Q(1-\hat\alpha)$ by $Q(1)$ for $\hat \alpha<0$. Finally, the regularity assumption on $Q$ is purely technical. 

\textbf{Interpretation of the result.} For standard distributions, $Q''(1-\alpha) > 0$,\footnote{as $Q'(x) = \frac{1}{f(Q(x))}$ with $f$ the scores' probability density function, $Q'(x)$ increases locally around $x$ if and only if $f$ decreases locally around $Q(x)$ ($Q$ is increasing). Thus, $Q''(x) > 0$ if and only if $f$ decreases locally around $Q(x)$. Thereby, for $x = 1-\alpha$ high (usually the case), $Q''(1-\alpha) > 0$ for standard distributions.} and \Cref{thm:iid} implies that ACI on exchangeable scores \textit{degrades} the \textit{efficiency} linearly with $\gamma$ compared to CP. This is an important takeaway from the analysis, that underlines that such adaptive algorithms may actually hinder the performance if the data does not have any temporal dependency, and a small $\gamma$ is preferable. For example, if the residuals are standard gaussians, for $\alpha=0.01$, setting $\gamma = 0.03$ (resp. $\gamma = 0.05$) will increase the length by 1.59\% (resp. by 3.38\%) with respect to $\gamma = 0$.

\subsection{AR(1) case}
\label{sec:theory_ar}
We now consider the case of (highly) correlated residuals, which happens in many practical time series applications.

\begin{theorem}
\label{thm:ar}
Assume that: (i) $\alpha \in \mathds{Q}$; (ii) the residuals follow an AR(1) process (i.e., $\varepsilon_{t+1} = \varphi\varepsilon_{t}+\xi_{t+1}$ with $(\xi_t)_t$ i.i.d. random variables admitting a continuous density with respect to Lebesgue measure, of support $\mathcal{S}$) clipped at a large value $R$, and $[-R,R] \subset \mathcal{S}$;
(iii) the quantile function $Q$ of the stationary distribution of $(\varepsilon_t)_t$ is known; (iv) $Q$ is bounded by $R$. 
Then $(\alpha_t,\varepsilon_{t-1})$ is a homogeneous Markov Chain in $\mathds R^2$ that admits a unique stationary distribution $\pi_{\gamma,\varphi}$. Moreover, $$\frac{1}{T}\sum\limits_{t=1}^T L(\alpha_t) \overset{a.s.}{\underset{T \rightarrow +\infty}{\longrightarrow}} \mathds{E}_{\pi_{\gamma,\varphi}}[L].$$ 
\end{theorem}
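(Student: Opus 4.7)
The plan is to identify the right Markovian state, confine it to an effectively compact set, and then invoke standard ergodic theory for Markov chains on general state spaces.

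\textbf{Step 1: Markov structure.} I first observe that the update rule \eqref{eq:update_scheme}, combined with assumption (iii), makes $\alpha_{t+1}$ a deterministic function of $\alpha_t$ and $\varepsilon_t$ (via the indicator $\mathds{1}\{|\varepsilon_t|>Q(1-\alpha_t)\}$), while the AR(1) structure gives $\varepsilon_t = \varphi\varepsilon_{t-1} + \xi_t$ (clipped at $R$) with $\xi_t$ i.i.d.\ and independent from the past. Hence $(\alpha_{t+1},\varepsilon_t) = F(\alpha_t,\varepsilon_{t-1},\xi_t)$ for a fixed measurable $F$, and $\{(\alpha_t,\varepsilon_{t-1})\}_{t\ge 1}$ is a time-homogeneous Markov chain on $\mathds R^2$.

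\textbf{Step 2: Confinement of the state space.} Writing $\alpha = p/q$ in lowest terms (assumption (i)), every increment of $\alpha_t$ equals either $\gamma\alpha$ (coverage) or $-\gamma(1-\alpha)$ (non-coverage); thus $\alpha_t \in \alpha + (\gamma/q)\mathds Z$. A simple barrier argument shows that whenever $\alpha_t \ge 1$ the convention $Q(1-\alpha_t)=0$ forces almost-sure non-coverage (since $\varepsilon_t$ has a continuous density) and $\alpha_t$ strictly decreases, while for $\alpha_t \le 0$ the extension $Q(1-\alpha_t)=Q(1)\le R$ combined with the clipping yields almost-sure coverage and $\alpha_t$ strictly increases. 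Consequently the chain is attracted in one step to a finite subset $A_{\gamma}\subset\bigl[-\gamma(1-\alpha),1+\gamma\alpha\bigr]\cap(\alpha+(\gamma/q)\mathds Z)$, and $\varepsilon_{t-1}$ already lives in the compact set $[-R,R]$ by clipping. The effective state space $\mathcal X := A_\gamma \times [-R,R]$ is therefore compact.

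\textbf{Step 3: $\psi$-irreducibility, aperiodicity and existence/uniqueness of $\pi_{\gamma,\varphi}$.} For any $\alpha_t\in A_\gamma$ with $0<Q(1-\alpha_t)<R$, both increments $+\gamma\alpha$ and $-\gamma(1-\alpha)$ occur with strictly positive probability, because the continuous density of $\xi_t$ on $\mathcal S\supset[-R,R]$ gives a strictly positive conditional density of $\varepsilon_t$ given $\varepsilon_{t-1}$ on $[-R,R]$, hence a positive probability of $\{|\varepsilon_t|>Q(1-\alpha_t)\}$ and of its complement. Since $\gcd(p,q-p)=\gcd(p,q)=1$, the increments $\gamma\alpha$ and $-\gamma(1-\alpha)$ generate the lattice $(\gamma/q)\mathds Z$, and a Bézout argument shows that any two points of $A_\gamma$ communicate. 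Combining this with the strictly positive conditional density of $\varepsilon_t$ yields $\psi$-irreducibility of the joint chain on $\mathcal X$ for $\psi$ = (counting on $A_\gamma$) $\otimes$ (Lebesgue on $[-R,R]$); aperiodicity follows because staying at a given $\alpha$ value is possible over coprime numbers of steps. By the Meyn–Tweedie theory on general state spaces (e.g.\ positive Harris recurrence on compact spaces, Theorems~10.4.9 and~13.0.1 of Meyn and Tweedie), a unique invariant probability measure $\pi_{\gamma,\varphi}$ exists.

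\textbf{Step 4: Ergodic averages.} The function $L(\alpha_t)=2Q(1-\alpha_t)$ is bounded by $2R$ on $\mathcal X$, so $L\in L^{1}(\pi_{\gamma,\varphi})$; the law of large numbers for positive Harris chains (Meyn–Tweedie, Theorem~17.1.7) then gives the claimed almost-sure convergence
\[\frac{1}{T}\sum_{t=1}^{T} L(\alpha_t) \;\underset{T\to\infty}{\overset{a.s.}{\longrightarrow}}\; \mathds E_{\pi_{\gamma,\varphi}}[L].\]

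\textbf{Main obstacle.} The delicate part is Step 3: verifying $\psi$-irreducibility on the hybrid discrete-continuous space. One must argue cleanly that (a) the Bézout/lattice argument based on $\gcd(p,q-p)=1$ transfers to positive probability in finitely many steps, uniformly over the small set, and (b) the continuous density of $\xi_t$ survives the clipping and the affine map $\varepsilon\mapsto\varphi\varepsilon+\xi$. The boundary cases $Q(1-\alpha_t)\in\{0,R\}$ must be handled separately, using the drift noted in Step~2 to argue that these states are transient, so irreducibility on the recurrent class $A_\gamma \times [-R,R]$ is the correct object.
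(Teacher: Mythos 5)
Your proof follows essentially the same route as the paper's: lift to the joint chain $(\alpha_t,\varepsilon_{t-1})$ on the product of the finite lattice of reachable $\alpha$-values with $[-R,R]$, establish $\psi$-irreducibility for (counting measure) $\otimes$ (Lebesgue) using the lattice structure of the $\alpha$-increments together with the positive density of the innovations, and conclude Harris recurrence, a unique invariant law, and the almost-sure ergodic average from Meyn--Tweedie (the paper cites Theorems 10.0.1 and 17.1.7). The only cosmetic differences are that you additionally note aperiodicity (not needed for the a.s.\ statement) and phrase the irreducibility step via a B\'ezout argument rather than the paper's explicit path construction; the substance is identical.
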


We numerically estimate $\gamma^*_\varphi = \arg\!\min_\gamma \mathds{E}_{\pi_{\gamma,\varphi}}[L]$ in \Cref{fig:aci_ar_numerical}. To do so, AR(1) processes of length $T = 10^6$ are simulated for various $\varphi$ and asymptotic variance 1. 
ACI is applied on each of them, with 100 different $\gamma \in [0,0.2]$. \Cref{fig:aci_ar_numerical} (left) represents the average length depending on $\gamma$ for each $\varphi$, and (right)  the values of $\gamma$ minimizing this average length for each $\varphi$ (for 25 repetitions of the experiment). The average length is computed after imputing all the infinite intervals' length by the maximum of the process, as explained in \Cref{sec:acp}. A similar study using instead the median length is provided after the proofs in \Cref{app:aci_theory}.

\textbf{Interpretation.} We make the following observations:
\begin{enumerate}[noitemsep,topsep=0pt,wide]
\item  For high $\varphi$, ACI indeed improves for a strictly positive $\gamma$ upon $\gamma=0$. This proves that ACI can be used to produce smaller intervals for time series CP. The function $\gamma\mapsto \mathds E_{\pi_{\gamma,\varphi}}[L]$ decreases until $\gamma^*_\varphi$, then increases again, as expected because very large $\gamma$ cause the algorithm to be less stable and produce numerous infinite intervals.
\item In \Cref{fig:aci_ar_numerical} (left), zoomed-in plot, the black line represents asymptotic result of \Cref{thm:iid}. We retrieve here that the expected length is minimal for $\gamma=0$ and grows linearly with $\gamma$ around 0. This behaviour is very similar for $\varphi = 0.6$.
\item For any $\gamma$, the function $\varphi\mapsto \mathds E_{\pi_{\gamma,\varphi}}[L]$ is decreasing (\Cref{fig:aci_ar_numerical}, left). Indeed, stronger correlation between residuals (i.e., a higher $\varphi$), allows to build smaller intervals. This confirms that ACI's impact strengthens with the strength of the temporal dependence.
\item Surprisingly, the function $\varphi\mapsto\gamma^*_\varphi$, that corresponds to the optimal learning rate for a given signal, \textit{is non-monotonic}, (\Cref{fig:aci_ar_numerical}, right). As $\gamma=0$ is optimal for $\varphi=0$, the function first increases. However, the optimal learning rate then diminishes as $\varphi$ increases. 
This sheds light on the complex intrinsic tradeoffs of the method: for small values  of $\varphi$, using $\gamma>0$ simply degrades the \textit{efficiency}; for ``moderate'' values of $\varphi$ using a larger $\gamma$ is necessary to quickly benefit from the short-term dependency between residuals; finally, for larger values  of $\varphi$, the process exhibits a longer memory, thus it is crucial to find a smaller learning rate that produces more stable intervals, even if it means that the algorithm won't adapt as quickly.
\end{enumerate}

\begin{figure}[!t]
    \centering
    \begin{minipage}[b]{0.235\textwidth}
        \includegraphics[scale=0.3]{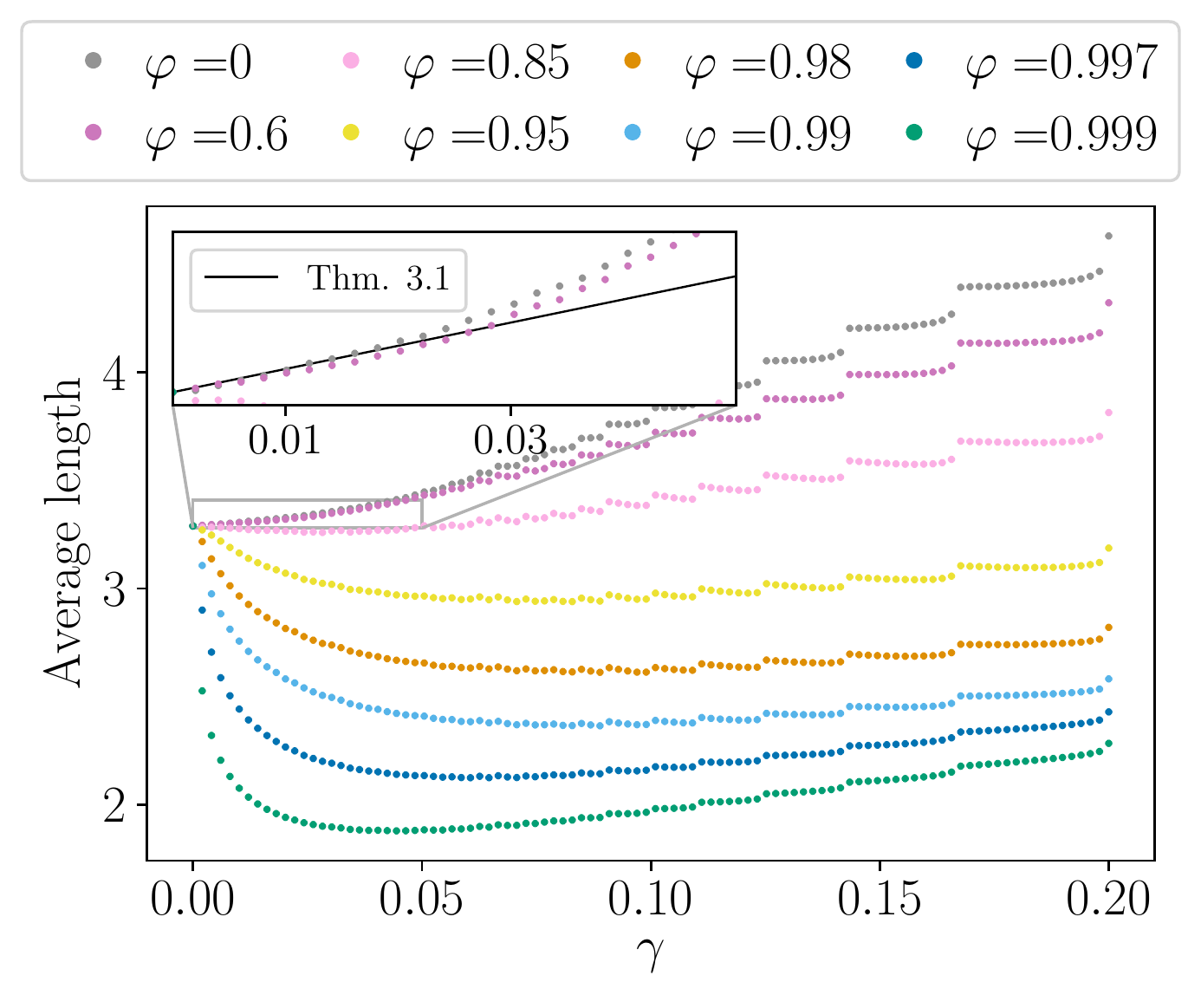}
    \end{minipage}
    \hfill
    \begin{minipage}[b]{0.235\textwidth}
        \includegraphics[scale=0.3]{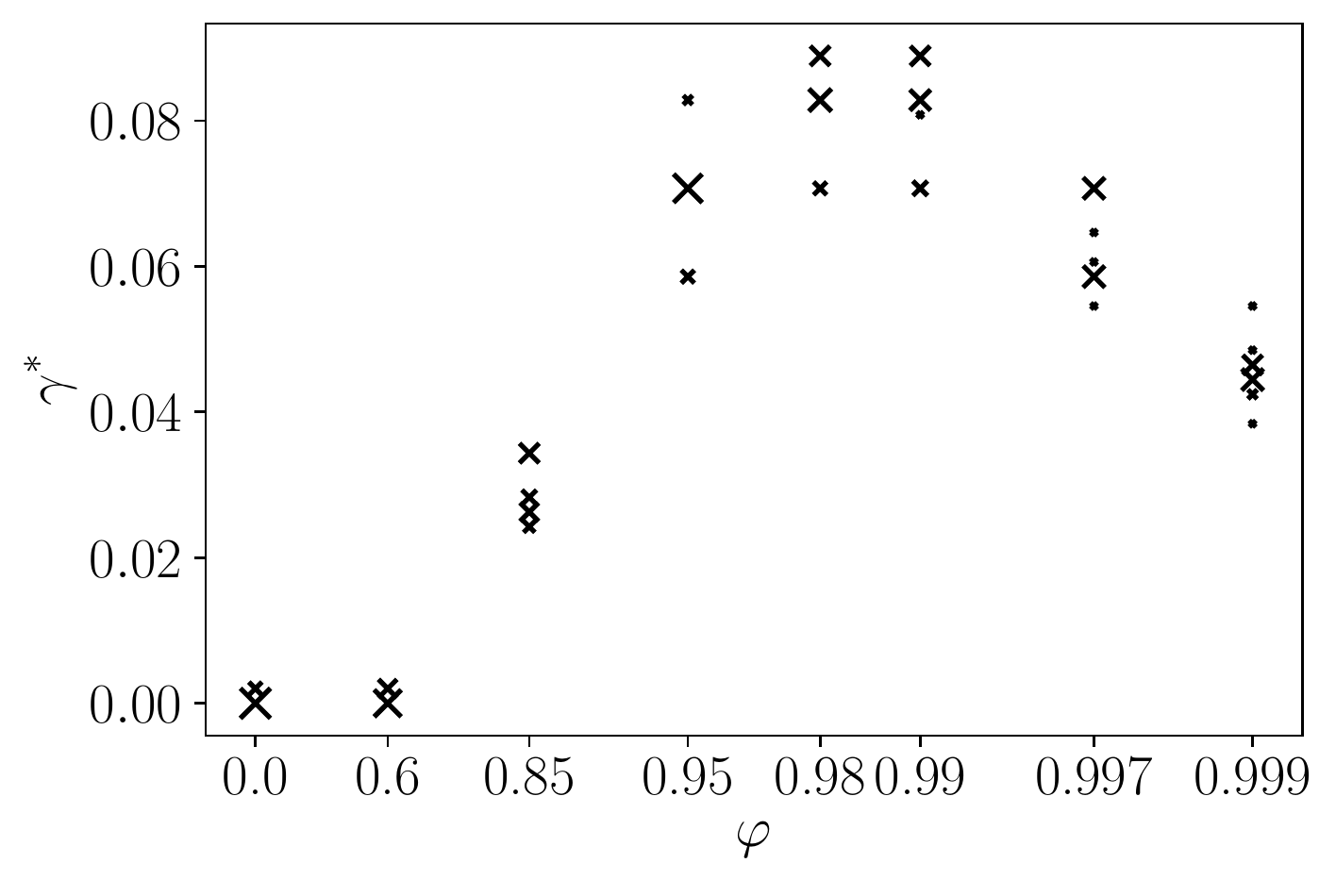}
    \end{minipage}
    \gs\gs
    \caption{Left: evolution of the mean length depending on $\gamma$ for various~$\varphi$. Right: $\gamma^*$ minimizing the average length for each $\varphi$ (each cross has a size proportional to the number of runs for which $\gamma^*$ was the minimizer).}
    \label{fig:aci_ar_numerical}
    \gs\gs
\end{figure}

Overall, these results highlight the importance of the choice of $\gamma$, as not choosing $\gamma^*$ can lead to significantly larger intervals. In addition, they provide insights on the corresponding dynamics. Yet the choice of $\gamma$ in more complex practical settings remains difficult: this calls for adaptive strategies.

\section{Adaptive strategies based on ACI}
\label{sec:aci_adaptive}
To prevent the critical choice of $\gamma$ an ideal solution is an adaptive strategy with a time dependent $\gamma$. We propose two  strategies based on running ACI for $K \in \mathds{N}$ values $\{(\gamma_k)_{k\le K}\}$ of $\gamma$, chosen by the user. Overall, this does not increase the computational cost because $\rm{Tr}_t$ and $\rm{Cal}_t$ are shared between all ACI; thus the only additional cost is the computation of the $K$ different quantiles. For any $x_t$, denote $\widehat{C}_{\alpha_{t,k}}(x_t)$ the interval at time $t$ built by ACI using $\gamma_k$. 

\textbf{Naive strategy.} A simple strategy is to use at each step the $\gamma$ that achieved in the past the best \textit{efficiency} while ensuring \textit{validity}. For stability purposes, consider a warm-up period $T_w \le T_1-1 $. For each $t\geq T_0+T_w$, we select $k^*_{t+1} \in \; {\text{argmin}_{k \in \mathcal{A}_t}} \left\{ {t}^{-1}\sum_{s = 1}^{t} \text{length}(\widehat{C}_{\alpha_{s,k}}(x_s)) \right\}$ with $\mathcal{A}_t = \{ k \in \llbracket 1,K \rrbracket \mid {t}^{-1}\sum_{s = 1}^{t} \mathds{1}_{y_s \in \widehat{C}_{\alpha_{s,k}}(x_s)} \geq 1 - \alpha \}$ 
or $k^*_{t+1} \in \; {\text{argmin}}_{k \in \llbracket 1,K \rrbracket} \{ | 1 - \alpha - {t}^{-1} \sum_{s = 1}^{t} \mathds{1}_{y_s \in \widehat{C}_{\alpha_{s,k}}(x_s)} | \}$ if $\mathcal{A}_t = \emptyset$. For the first $T_w$ steps, an arbitrary strategy is applied (in simulations, $\gamma = 0$  for $t \le T_w = 50$). 

\textbf{Online Expert Aggregation on ACI (AgACI).} Instead of picking one $\gamma$ in the grid, we introduce an adaptive aggregation of \textit{experts}~\citep{cesa2006prediction}, with expert $k$ being ACI with parameter $\gamma_k$. This strategy is detailed in \Cref{alg:agg_aci}. At each step $t$, it performs two independent aggregations of the $K$-ACI intervals $\widehat{C}_{\alpha_{t,k}}(\cdot) \overset{\text{not.}}{=} [\hat{b}^{(\ell)}_{t,k}(\cdot), \hat{b}^{(u)}_{t,k}(\cdot)]$,
one for each bound, and outputs $\widetilde{C}_{t}(\cdot) \overset{\text{not.}}{=} [\tilde{b}^{(\ell)}_{t}(\cdot), \tilde{b}^{(u)}_{t}(\cdot)]$. Aggregation computes an optimal weighted mean of the experts (Line 11), where the weights $\omega^{(\ell)}_{t,k}$, $\omega^{(u)}_{t,k}$ assigned to expert $k$ depend on all experts performances (suffered \textit{losses}) at time steps $1, \cdots, t$ (Line 9). We use the pinball loss $\rho_\beta$, as it is frequent in quantile regression, where the pinball parameter $\beta$ is chosen to $\alpha/2$ (resp. $1-\alpha/2$) for the lower (resp. upper) bound. These {losses} are plugged in the \textit{aggregation rule} $\Phi$. Finally, the aggregation rule can include the computation of the gradients of the loss
(\textit{gradient trick}). As aggregation rules require bounded experts, a thresholding step is added (Line 6).

We chose $\Phi$ to be the Bernstein Online Aggregation \citep[BOA,][]{wintenberger2017optimal}, that was successfully applied for financial data \citep{berrisch2021crps, remlinger2021expert}. We rely on R package OPERA \citep{opera}, which allows the user to easily select among many aggregation rules such as EWA \citep{vovk1990aggregating}, ML-Poly \citep{gaillard2014second} or FTRL \citep{ftrl, hazan2019introduction}, etc., that give similar results in our experiments. We use the gradient trick in the simulations. In the sequel, AgACI refers to AgACI using BOA and gradient~trick.

\setlength{\textfloatsep}{10pt}
\begin{algorithm}[!h]
\caption{Online Expert Aggregation on ACI (AgACI)}
\label{alg:agg_aci}
\begin{algorithmic}[1] 
\REQUIRE Miscoverage rate $\alpha$, grid $\{\gamma_k, k \in \llbracket 1,K \rrbracket \}$, aggregation rule $\Phi$, threshold values $M^{(\ell)},M^{(u)}$.
\STATE Let $\beta^{(\ell)} = \alpha/2$ and $\beta^{(u)} = 1-\alpha/2$
\FOR {$t \in \llbracket T_0 + 1 , T_0 + T_1 \rrbracket$} 
\STATE Set $\widetilde{C}_{t}(x_t) = [\tilde{b}^{(\ell)}_{t}(x_t),\tilde{b}^{(u)}_{t}(x_t)]$
\FOR {$k \in \llbracket 1,K \rrbracket$} 
\STATE Compute $\hat{b}^{(\cdot)}_{t,k}(x_t)$ using ACI with $\gamma_k$.
\STATE \textbf{if} $\hat{b}^{(\cdot)}_{t,k}(x_t) \notin \mathds{R}$ \textbf{then} set $\hat{b}^{(\cdot)}_{t,k}(x_t) = M^{(\cdot)}$
\ENDFOR
\FOR {$k \in \llbracket 1,K \rrbracket$}
\STATE $\begin{aligned}
    \omega^{(\cdot)}_{t,k} = \Phi\left(\left\{ \right.\right.& \rho_{\beta^{(\cdot)}}(y_s,\hat{b}^{(\cdot)}_{s,l}(x_s)), s \in \llbracket T_0+1, t \rrbracket,  \\
    &\left.\left. l \in \llbracket 1,K \rrbracket \right\}\right)
\end{aligned}$
\ENDFOR
\STATE Define $ \tilde{b}^{(\cdot)}_{t+1}(x) = \frac{\sum_{k=1}^K \omega^{(\cdot)}_{t,k} \hat{b}^{(\cdot)}_{t,k}(x)}{\sum_{l=1}^K \omega^{(\cdot)}_{t,l}}$ for any $x \in \mathds{R}^d$
\ENDFOR
\end{algorithmic}
\end{algorithm}

\section{Numerical evaluation on synthetic data sets}
\label{sec:benchmark}
In this section we conduct synthetic experiments on a wide range of data sets presented in \Cref{sec:data_syn}. The goal of this section is twofold. First, in  \Cref{sec:acp_gamma_emp}, comparing our proposed adaptive strategies to ACI with a wide range of $\gamma$ values. Second, in \Cref{sec:comp_syn}, comparing performances of AgACI and ACI to that of competitors -- namely EnbPI and online sequential SCP, described in \Cref{sec:baselines}.
\subsection{Data generation process and settings}
\label{sec:data_syn}
We generate data according to:
\begin{equation}
\begin{aligned}
Y_{t} = & 10 \sin \left(\pi X_{t,1} X_{t,2}\right)+20\left(X_{t,3}-0.5\right)^{2} \\
& + 10 X_{t,4} + 5 X_{t,5} + 0 X_{t,6}+\varepsilon_{t},
\end{aligned}
\label{eq:friedman}
\end{equation}
where the $X_{t}$ are multivariate uniformly distributed on $[0,1]$, and $X_{t,6}$ represents an uninformative variable. The noise $\varepsilon_{t}$ is generated from an ARMA(1,1) process of parameters $\varphi$ and $\theta$, i.e.~$ \varepsilon_{t+1} =  \varphi \varepsilon_{t} + \xi_{t+1} + \theta \xi_t $, with $\xi_t$ a white noise called the \textit{innovation} (see \Cref{app:arma} for details).  When the noise is i.i.d., one retrieves the simulations from \citet{Friedman}. The temporal dependence is present only in the noise in order to control its strength and its impact on the algorithms' performance.

Given the non-linear structure of the data generating process, we use  a random forest (RF) as predictive model, with the same hyper-parameters through all the experiments (specified in \Cref{app:rf}). 

\begin{figure*}[!b]
\gs\gs
\centering
\includegraphics[scale=0.28]{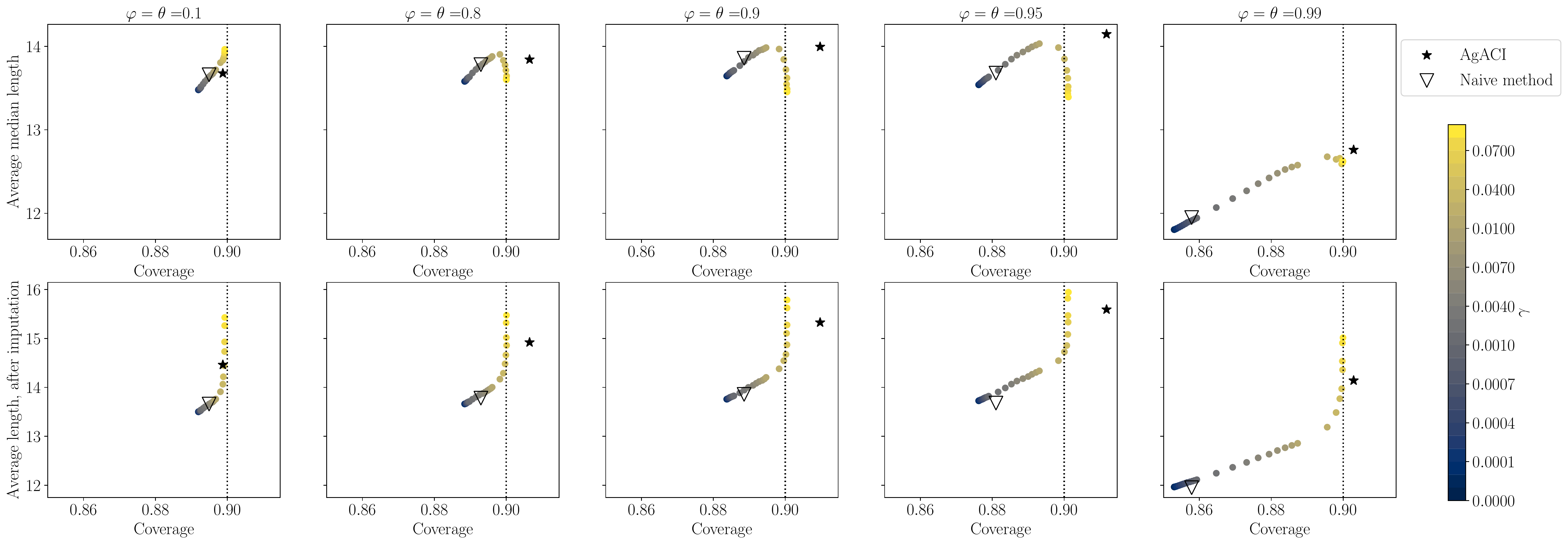}
\gs\gs
\caption{ACI performance with various $\theta$, $\varphi$ and $\gamma$ on data simulated according to \cref{eq:friedman} with a Gaussian ARMA(1,1) noise of asymptotic variance 10 (see \Cref{app:arma}). Top row: average median length w.r.t. the coverage. Bottom row: average length after imputation w.r.t. the coverage. Stars correspond to the proposed online expert aggregation strategy, AgACI, and empty triangles to the naive choice.}
\label{fig:acp_gamma}
\end{figure*}

To assess the impact of the temporal structure, we vary $\varphi$ and $\theta$ in $\{0.1,0.8,0.9,0.95,0.99\}$. To focus on the impact of the dependence structure, the value of the {innovation}'s variance is selected so that the asymptotic variance of $\epsilon_t$ is independent of $\varphi,\theta$: here we choose $\lim_{t\to\infty} \text{Var}(\varepsilon_t)=10$. For each set of parameters, we generate $n=500$ samples $(\varepsilon_t)_{t\in\llbracket1,T_0+T_1\rrbracket}$ with $T_0=200$. In the sequel we display the results on an ARMA(1,1) which are representative of all the results obtained. For the sake of simplicity, we consider $\varphi = \theta$. Complementary results (i) for an asymptotic variance of~1 (corresponding to a higher \textit{signal to noise} ratio), (ii) for AR(1) and MA(1) models are available in \Cref{app:syn_expe}.

\textbf{Joint visualisation of validity \& efficiency.} In order to simultaneously assess \textit{validity} and \textit{efficiency}, in \Cref{fig:acp_gamma,fig:arma_fixed_10,fig:comp_spot}, we represent on the same graph the empirical coverage and average median length (used for \textit{efficiency} as imputing the infinite bounds by the maximum of the whole sequence is not always feasible in practice). In those three figures, the vertical dotted line represents the target miscoverage rate, $\alpha=0.1$. Consequently, a method is \textit{valid} when it lies at the right of this line, and the lower the better. 
 
\subsection{Impact of $\gamma$, performance of AgACI}
\label{sec:acp_gamma_emp}
\Cref{fig:acp_gamma} illustrates the behaviour of ACI (with multiple values of $\gamma$), the naive strategy (empty triangles) and AgACI (black stars) for increasing (from left to right) values of $\varphi$, $\theta$, with $T_1 = 200$. In particular, the top row shows the joint \textit{validity} \& \textit{efficiency} and, for this figure only, we add in the bottom row the same graph using the average length after imputation (see details in \Cref{app:syn_expe}) to assess \textit{efficiency} in another way.

When $\gamma$ is small, one observes an undercoverage, which increases when the temporal dependency of $\varepsilon$ increases. Increasing $\gamma$ enables ACI to increase the interval's size faster when we do not cover, and thus to improve \textit{validity}, which is  achieved for high values of $\gamma$; however this also increases the frequency of uninformative (infinite) intervals, as deduced from the bottom row of \Cref{fig:acp_gamma}, where the average length after imputation grows with $\gamma$. Remark that these results do not contradict the validity result recalled at the beginning of \Cref{sec:theory}, which is only asymptotic while we predict on 200 points.
For $\varphi, \theta$ small, we observe that similarly to \Cref{thm:iid}, the \textit{efficiency} does not improve with $\gamma$.
For moderate values of $\varphi,\theta \in \{0.8, 0.9, 0.95\}$, we observe that the average median length is decreasing with $\gamma$ for $\gamma \ge 0.01$.
This effect is observable on average but not present in all the 500 experiments. 
One possible explanation is that the shrinking effect of ACI on the predicted interval enables to significantly reduce the predicted interval when $\gamma$ is large, and this effect is, on average, more important than the number of large intervals. 

Moreover, the naive strategy is clearly not \textit{valid}: indeed it results in greedily choosing a $\gamma$ that achieved good results in the past, and is consequently slightly more likely to fail to cover in future steps. Thereby, we do not consider it anymore. Finally, AgACI achieves \textit{valid} coverage without increasing the median length with respect to each expert, and even improves the coverage. Overall, it appears to be a good candidate as a parameter-free method.

\subsection{Description of baseline methods}
\label{sec:baselines}
We consider as baseline \textit{online sequential split conformal prediction} (OSSCP), a generalisation of SCP. The other competitor is EnbPI \citep{pmlr-v139-xu21h}, specifically designed for time series. Pseudo-codes and details are given in \Cref{app:comp_methods}. Offline SCP (for which $\rm{Tr}_t \equiv \rm{Tr}_0$ and $\rm{Cal}_t \equiv \rm{Cal}_0$)
is not considered as a competitor because it is unfair to compare an \textit{offline} algorithm to one that uses more recent data points. This corresponds to comparing a prediction at horizon $T_{\text{large}}$ to one at horizon $T_{\text{small}}$. This is a limitation of the comparison in \citet{pmlr-v139-xu21h}. 

\textbf{OSSCP.}
\label{sec:osscp}
We consider an online version of SCP by refitting the underlying regression model and recalibrating using the newest points. Moreover, to appropriately account for the temporal structure of the data, we use a \textit{sequential split} as in \citet{pmlr-v128-wisniewski20a}: at any $t$, the time indices in $\rm{Tr}_t$ are smaller than those of $\rm{Cal}_t$. Not randomizing aims at excluding future observations from $\rm{Tr}_t$, which may lead to an under-estimation of the errors on $\rm{Cal}_t$, thus eventually to smaller intervals with under-coverage. We compare both splitting strategies on simulations in \Cref{app:randomized_sequential}. OSSCP procedure is schematised in \Cref{fig:scheme_both}(a).

\begin{figure}
\centering
\includegraphics[scale=0.285]{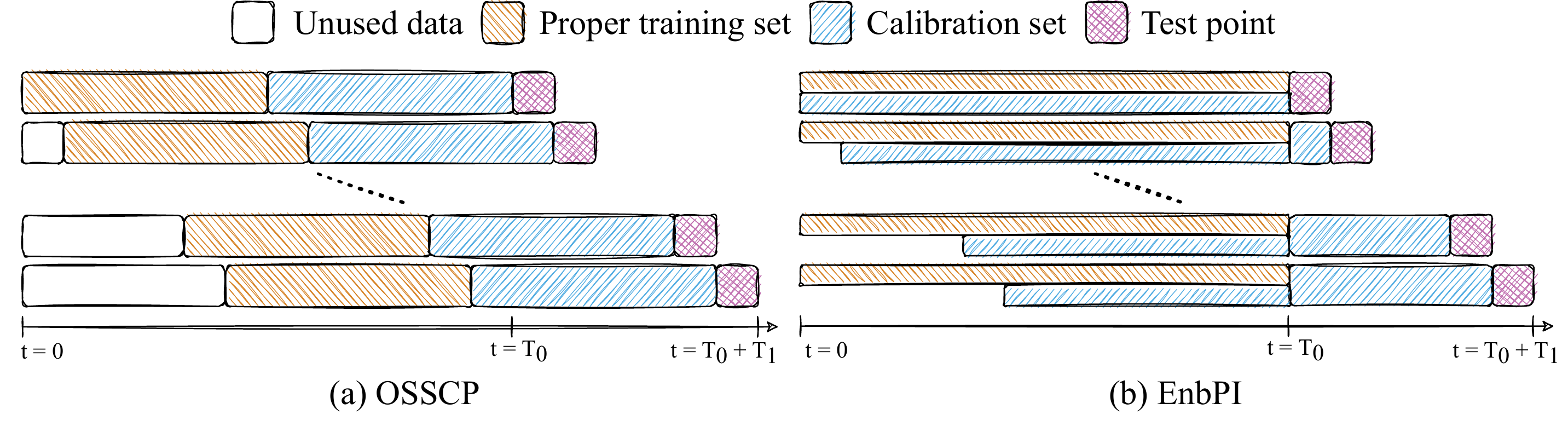}
\gs\gs\gs
\caption{Scheme of the two baselines: OSSCP and EnbPI. In (a), \textcolor{blindorange}{$\rm{Tr}$} and \textcolor{blindblue}{$\rm{Cal}$} have equal size, but it can be changed.}
\label{fig:scheme_both}
\end{figure}

\textbf{Original EnbPI.}
EnbPI, Ensemble Prediction Interval \citep{pmlr-v139-xu21h}, works by updating the list of \textit{conformity scores} with the most recent ones so that the intervals adapt to latest performances, without refitting the underlying regression model. Thereby, the predicted intervals can adapt to seasonality and trend. In EnbPI, $B$ bootstrap samples of the training set are generated and the regression algorithm is fitted on each bootstrap sample producing $B$ predictors. Finally, the predictors are aggregated in two ways: first, for each training point of index $t\le T_0$, EnbPI aggregates only the subset of predictors trained on bootstrap sample \textit{excluding} $(x_t,y_t)$. This way, EnbPI constructs a set of hold-out calibration scores. Second, for test points of index ${t > T_0}$ EnbPI aggregates all the $B$ predictors. A sketch of EnbPI is presented in \Cref{fig:scheme_both}(b). Note that in \citet{pmlr-v139-xu21h} they use a classical bootstrap procedure, not dedicated to time series.

They show empirically that it leads to \textit{valid} coverage on real world time series, such as hourly wind power production and solar irradiation, while offline SCP fails to attain \textit{valid} coverage. 

\textbf{EnbPI V2.} \citet{pmlr-v139-xu21h} used the mean aggregation during the training phase and the $(1-\alpha)$-th quantile of the predictors for the prediction. We consider using the mean aggregation all along the procedure as mixing both aggregations may hurt the performance of the algorithm (as shown in the following simulations). Note that simultaneously to our work, authors released an updated version on ArXiv \citep{xu2021conformal}, incorporating a similar change. 

\subsection{Experimental results: impact of $\varphi, \theta$}
\label{sec:comp_syn}
\Cref{fig:arma_fixed_10} presents the results for data generated as in \Cref{sec:data_syn}, for various $(\varphi,\theta)$. Each sample contains 300 observations, with $T_0 = 200$ and $T_1 = 100$. We compare AgACI (with $K=30$ experts),  ACI (with $\gamma \in \{0.01,0.05\}$), OSSCP, EnbPI and EnbPI V2 (with mean aggregation). To assess the impact and interest of an online procedure, we also add offline SCP. Finally, to ensure the robustness of our conclusions each experiment is repeated $n=500$ times, and \Cref{fig:arma_fixed_10} includes the standard errors  (given by $\frac{\hat{\sigma}_n}{\sqrt{n}}$, where $\hat{\sigma}_n$ is the empirical standard deviation). 

Each color is associated to a set $(\varphi,\theta)$, each marker to an algorithm. To improve readability, we often link markers of the same method. There are thus two ways of analysing \Cref{fig:arma_fixed_10}: for a given method, the lines highlight the evolution of its performance with $(\varphi,\theta)$; for a given data distribution, the set of markers of its color allow to compare the methods. \Cref{fig:arma_fixed_10}, and results on AR(1) in \Cref{app:syn_expe_baselines_10}, highlight that in an AR(1) or ARMA(1,1) process: 
    \begin{itemize}[topsep=0pt,itemsep=1pt,leftmargin=*,noitemsep]
        \item Refitting the method (OSSCP vs Offline SCP) brings a significant improvement, that increases with higher dependence (higher values for $\varphi$ and $\theta$).
        \item All methods produce smaller intervals for $\varphi = \theta = 0.99$.
        \item EnbPI looses coverage while producing shorter intervals when the dependence increases. The performance of EnbPI depends significantly on the type and strength of dependence. 
        \item EnbPI V2 is closer to the target coverage than original EnbPI. 
        \item OSSCP looses \textit{validity} \& coverage as $\varphi$ and $\theta$ increase.
        \item While ACI with $\gamma = 0.01$ also struggles for high values of $\varphi$ and $\theta$ such as 0.99, we observe that it still attains \textit{valid} coverage with a well chosen $\gamma$. Most importantly, ACI performances are robust to the increase of the dependence strength: except for the \textcolor{blindgreen}{$\varphi = \theta = 0.99$}, its markers are really close to each other.
        \item AgACI always nearly attains \textit{validity} (coverage is over $89.8\%$ for all $\varphi$), and achieves the best \textit{efficiency} performance among \textit{valid} methods.
    \end{itemize}
Note that  ACI's \textit{valid} coverage with some $\gamma$ comes at the price of predicting more infinite intervals. A more detailed analysis on this phenomenon is conducted in \Cref{app:acp_inf}. This can also be observed in graphs obtained with the average length after imputation, which are similar to \Cref{fig:arma_fixed_10} and \Cref{app:syn_expe_baselines_10}. In these graphs, the \textit{validity} remains unchanged as expected, but the \textit{efficiency} is more degraded for ACI with $\gamma = 0.05$ and for AgACI, since they produce more often uninformative intervals, as observed in \Cref{fig:acp_gamma}. 

\begin{figure}
\centerline{\includegraphics[scale=0.28]{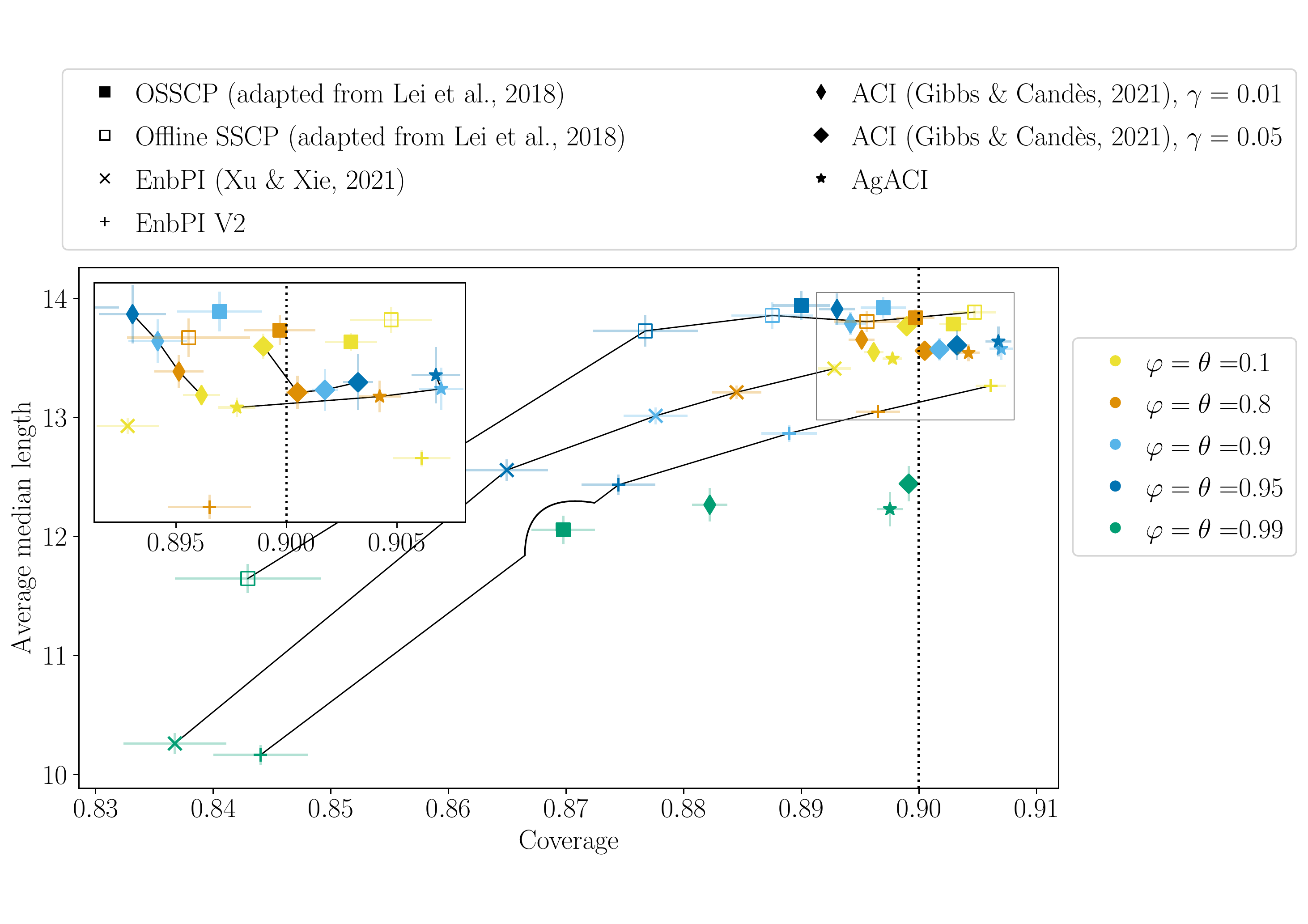}}
\gs\gs
\caption{Performance of various CP methods on data simulated according to \cref{eq:friedman} with a Gaussian ARMA(1,1) noise of asymptotic variance 10 (see \Cref{app:arma}). Results aggregated from 500 independent runs. Empirical standard errors displayed.}
\label{fig:arma_fixed_10}
\end{figure}

\textbf{Summary.} We highlight the following takeaways:
\begin{enumerate}[topsep=0pt,itemsep=1pt,leftmargin=*,noitemsep]
    \item The temporal dependence impacts the \textit{validity}.
    \item Online is significantly better than offline.
    \item \textbf{OSSCP.} Achieves \textit{valid} coverage for $\varphi$ and $\theta$ smaller than 0.9, but is not robust to the increasing dependence.
    \item \textbf{EnbPI.} Its \textit{validity}  strongly depends on the data distribution (it is \textit{valid} on a MA(1) noise, not in AR(1) and ARMA(1,1) noise). When the method is \textit{valid}, it produces the smallest intervals. EnbPI V2 method should be preferred.
    \item \textbf{ACI.} Achieves \textit{valid} coverage for every simulation settings with a well chosen $\gamma$, or for dependence such that $\varphi < 0.95$. 
    It is robust to the strength of the dependence.
   \item \textbf{AgACI.} Achieves \textit{valid} coverage for every simulation settings, with good \textit{efficiency}.
\end{enumerate}

\section{Forecasting French electricity spot prices}
\label{sec:comp_real}
In this last section, the task of forecasting French electricity spot prices with predictive intervals is considered in order to assess the methods on a real time series, and most importantly to show the relevance of ACI and AgACI in practice for time series without distribution shifts.

\subsection{Presentation of the price data}
\label{sec:data_real}

The data set contains the French electricity spot prices, set by an auction market, from 2016 to 2019. Each day $D$ before 12 AM, any producer (resp. supplier) submit their orders for the 24 hours of day $D+1$. An order consists of an electricity volume in MWh  offered for sale (resp. required to be purchased) and a price in \euro/MWh, at which they accept to sell (resp. buy) this volume. At 12 AM, the algorithm ``Euphemia'' \citep{euphemia} fixes the 24 hourly prices of day $D+1$ according to these offers and additional constraints. Thereby, it is an hourly data set, containing $(3 \times 365 + 366) \times 24 = 35 064$ observations. Our aim is to predict at day $D$ (before 12 AM) the 24 prices of day $D+1$. Given the prices' construction, we consider the following explanatory variables: day-ahead forecast consumption, day-of-the-week, 24 prices of the day $D-1$ and 24 prices of the day $D-7$. An extract of the considered data set is presented in \Cref{app:price_data}. 

\begin{figure}[!h]
\centerline{\includegraphics[width=\linewidth]{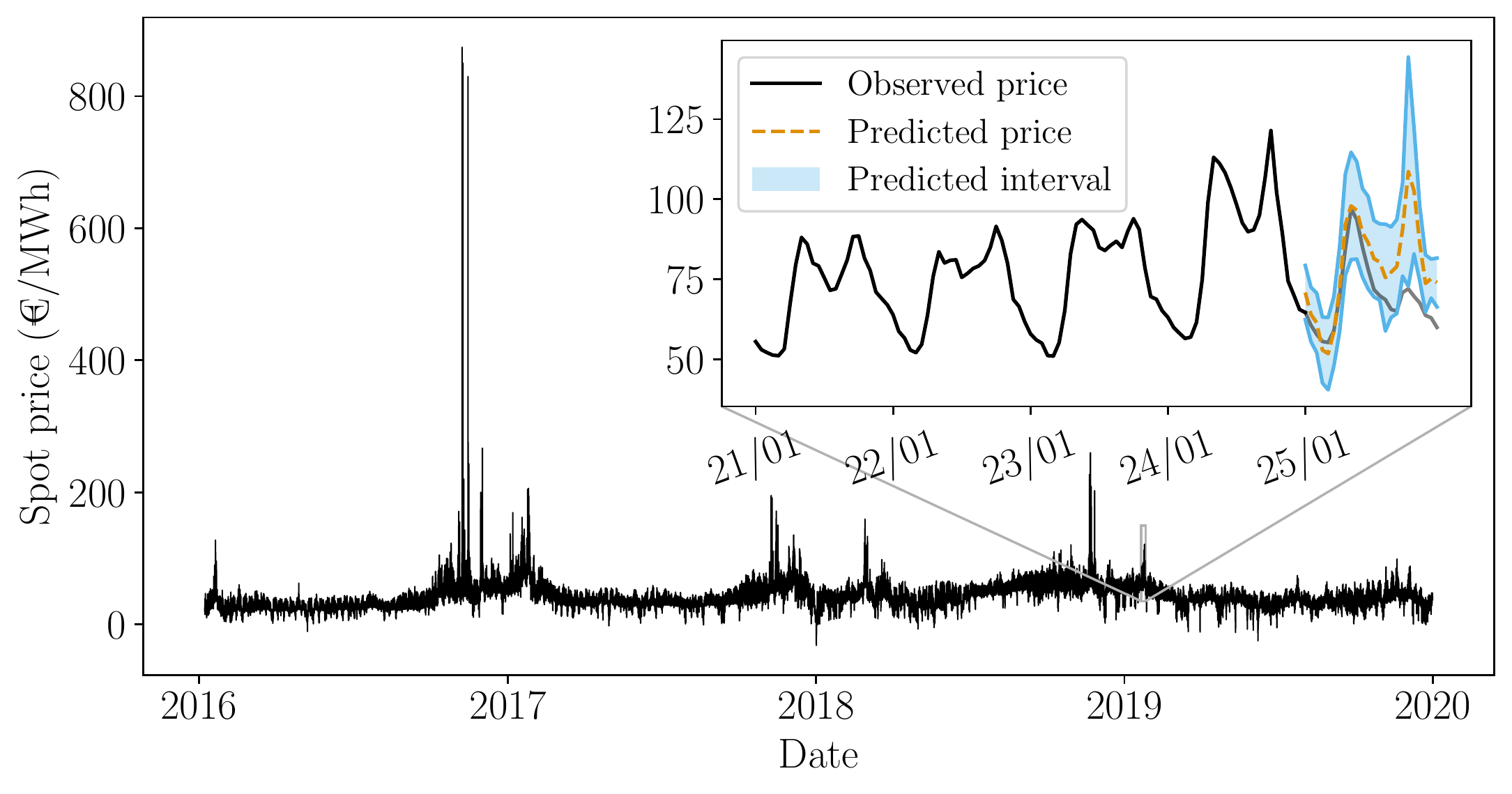}}
\gs\gs\gs
\caption{French electricity spot prices, from 2016 to 2019. Predicted intervals on the 25th of January 2019, using AgACI.}
\label{fig:spot}
\end{figure}
These prices exhibits medium to high peaks, as illustrated in \Cref{fig:spot} where the French prices had reached 800 \euro/MWh in fall 2016, compared to an average price of approximately 40 \euro/MWh in 2019. These extreme events are mainly due to the non-storability of electricity and the inelasticity of the demand: when the demand is high compared to the available production, production units with expensive production costs must be called, leading to a huge market price.

\subsection{Price prediction with predictive intervals in 2019}

Since the 24 hours have very distinct patterns, we fit one model per hour, using again RF. We predict for the year 2019, using a sliding window of 3 years, as described in \Cref{fig:scheme_both}(a), using one year and 6 months as proper training set and the most recent year and a half for calibration. The results are represented in \Cref{fig:comp_spot}.

\begin{figure}
\centerline{\includegraphics[scale=0.28]{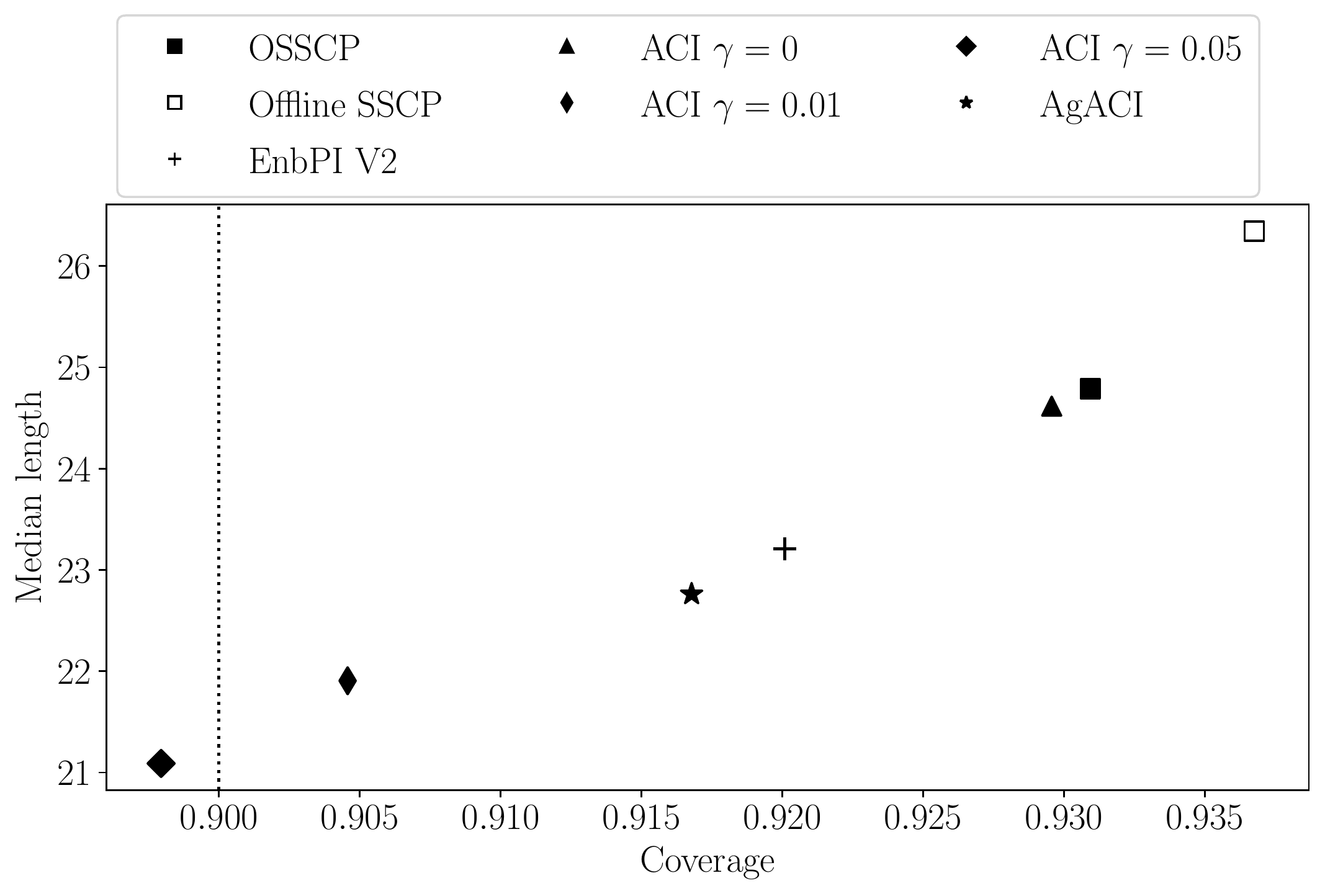}}
\caption{Performance of different CP methods on hourly spot electricity prices in France, trained from 2016 to 2018 and forecasted on 2019. Median length with respect to empirical coverage.}
\label{fig:comp_spot}
\end{figure}

\textbf{OSSCP} over-covers but to a lesser extent than the offline version. This can be explained by a low presence of peaks during the test period. Indeed, by updating the whole procedure, the high peaks are ``forgotten" which leads to small intervals while it is not the case for the offline version which leads to too large intervals. Thereby, online versions can help to improve $\textit{efficiency}$, in addition to $\textit{validity}$. \textbf{EnbPI} attains a \textit{valid} coverage by over-covering. The under-coverage observed in the simulation study is not systematic, as in \citet{pmlr-v139-xu21h}. \textbf{ACI} gives the smallest intervals with a correct coverage, for $\gamma = 0.01$ and $\gamma = 0.05$. The update of the quantile level  enables to shrink the intervals. While the simulation in \Cref{sec:comp_syn} study outlines that ACI improves \textit{validity}, this application illustrates that it can provide \textit{efficient} interval. \textbf{AgACI} is more \textit{efficient} than $\gamma = 0$ while maintaining \textit{validity}. Yet it slightly over-covers, and is slightly less \textit{efficient} than ACI with a well chosen~$\gamma$.

An illustration of the predicted intervals is given in the inset graphic of \Cref{fig:spot}, for AgACI, to highlight the practical relevance of such an approach on the spot prices.

\section{Conclusion}
This article shows why and how ACI can be used for interval prediction in the context of time series with general dependencies. We prove that ACI  deteriorates \textit{efficiency} compared to CP in the exchangeable case and analyse the dependency on $\gamma$ in the AR case with the support of numerical simulations. We propose an algorithm, AgACI, based on online expert aggregation, that wraps around ACI to avoid the choice of $\gamma$. We conduct extensive experiments on synthetic time series for various strengths and structures of time dependence, demonstrating ACI's robustness and better performances than baselines, with well chosen $\gamma$ or using AgACI. Finally we perform a detailed application study on the high-stakes electricity price forecasting problem in the energy transition era. Future work includes theoretical study of the proposed aggregation algorithm, including whether it preserves the asymptotic \textit{validity} observed experimentally or to quantify its \textit{efficiency} with respect to the performances of each {expert}. 

\section*{Acknowledgements}

We thank Maximilien Germain, Pablo Jiménez and Constantin Philippenko for interesting discussions. The work of A. Dieuleveut is partially supported by ANR-19-CHIA-0002-01/chaire SCAI.

\bibliography{ref}

\begin{thebibliography}{}

\bibitem[Angelopoulos and Bates, 2021]{angelopoulos-gentle}
Angelopoulos, A.~N. and Bates, S. (2021).
\newblock A gentle introduction to conformal prediction and distribution-free
  uncertainty quantification.
\newblock {\em arXiv preprint arXiv:2107.07511}.

\bibitem[Berrisch and Ziel, 2021]{berrisch2021crps}
Berrisch, J. and Ziel, F. (2021).
\newblock {CRPS} learning.
\newblock {\em Journal of Econometrics}.

\bibitem[Cai and Davies, 2012]{cai2012simple}
Cai, Y. and Davies, N. (2012).
\newblock A simple bootstrap method for time series.
\newblock {\em Communications in Statistics-Simulation and Computation},
  41(5):621--631.

\bibitem[Cauchois et~al., 2020]{cauchois_robust_2020}
Cauchois, M., Gupta, S., Ali, A., and Duchi, J.~C. (2020).
\newblock Robust {Validation}: {Confident} {Predictions} {Even} {When}
  {Distributions} {Shift}.
\newblock {\em arXiv preprint arXiv:2008.04267}.

\bibitem[Cesa-Bianchi and Lugosi, 2006]{cesa2006prediction}
Cesa-Bianchi, N. and Lugosi, G. (2006).
\newblock {\em Prediction, learning, and games}.
\newblock Cambridge {U}niversity {P}ress.

\bibitem[Chernozhukov et~al., 2018]{chernozhukov_exact_2018}
Chernozhukov, V., W{\"u}thrich, K., and Yinchu, Z. (2018).
\newblock Exact and {Robust} {Conformal} {Inference} {Methods} for {Predictive}
  {Machine} {Learning} with {Dependent} {Data}.
\newblock In {\em Conference {On} {Learning} {Theory}}, pages 732--749. PMLR.

\bibitem[Dashevskiy and Luo, 2008]{dashevskiy_2008}
Dashevskiy, M. and Luo, Z. (2008).
\newblock Network traffic demand prediction with confidence.
\newblock In {\em IEEE Global Telecommunications Conference}. {IEEE}.

\bibitem[Dashevskiy and Luo, 2011]{dashevskiy_2011}
Dashevskiy, M. and Luo, Z. (2011).
\newblock Time series prediction with performance guarantee.
\newblock {\em IET communications}, 5(8):1044--1051.

\bibitem[EUPHEMIA, 2019]{euphemia}
EUPHEMIA (2019).
\newblock Euphemia public description, single price coupling algorithm.

\bibitem[Friedman et~al., 1983]{Friedman}
Friedman, J.~H., Grosse, E., and Stuetzle, W. (1983).
\newblock Multidimensional additive spline approximation.
\newblock {\em SIAM J. Sci. Stat. Comput.}

\bibitem[Gaillard and Goude, 2021]{opera}
Gaillard, P. and Goude, Y. (2021).
\newblock {\em OPERA}.
\newblock R package version 1.2.0.

\bibitem[Gaillard et~al., 2016]{gaillard_additive_2016}
Gaillard, P., Goude, Y., and Nedellec, R. (2016).
\newblock Additive models and robust aggregation for {GEFCom2014} probabilistic
  electric load and electricity price forecasting.
\newblock {\em International Journal of Forecasting}, 32(3):1038--1050.

\bibitem[Gaillard et~al., 2014]{gaillard2014second}
Gaillard, P., Stoltz, G., and Van~Erven, T. (2014).
\newblock A second-order bound with excess losses.
\newblock In {\em Conference on Learning Theory}, pages 176--196. PMLR.

\bibitem[Gibbs and Cand{\`e}s, 2021]{gibbs_adaptive_2021}
Gibbs, I. and Cand{\`e}s, E. (2021).
\newblock Adaptive conformal inference under distribution shift.
\newblock In {\em Advances in Neural Information Processing Systems}.

\bibitem[Goehry, 2020]{goehry2020random}
Goehry, B. (2020).
\newblock Random forests for time-dependent processes.
\newblock {\em ESAIM: Probability and Statistics}, 24:801--826.

\bibitem[Goehry et~al., 2021]{goehry2021random}
Goehry, B., Yan, H., Goude, Y., Massart, P., and Poggi, J.-M. (2021).
\newblock Random forests for time series.
\newblock {\em HAL hal-03129751}.

\bibitem[H{\"a}rdle et~al., 2003]{hardle2003bootstrap}
H{\"a}rdle, W., Horowitz, J., and Kreiss, J.-P. (2003).
\newblock Bootstrap methods for time series.
\newblock {\em International Statistical Review}, 71(2):435--459.

\bibitem[Hazan, 2019]{hazan2019introduction}
Hazan, E. (2019).
\newblock Introduction to online convex optimization.
\newblock {\em arXiv preprint arXiv:1909.05207}.

\bibitem[Kath and Ziel, 2021]{kath_conformal_2021}
Kath, C. and Ziel, F. (2021).
\newblock Conformal prediction interval estimation and applications to
  day-ahead and intraday power markets.
\newblock {\em International Journal of Forecasting}, 37(2):777--799.

\bibitem[Kreiss and Paparoditis, 2012]{kreiss2012hybrid}
Kreiss, J.-P. and Paparoditis, E. (2012).
\newblock The hybrid wild bootstrap for time series.
\newblock {\em Journal of the American Statistical Association},
  107(499):1073--1084.

\bibitem[Lago et~al., 2018]{lago_forecasting_2018-1}
Lago, J., De~Ridder, F., and De~Schutter, B. (2018).
\newblock Forecasting spot electricity prices: {Deep} learning approaches and
  empirical comparison of traditional algorithms.
\newblock {\em Applied Energy}, 221:386--405.

\bibitem[Lago et~al., 2021]{lago_forecasting_2021}
Lago, J., Marcjasz, G., De~Schutter, B., and Weron, R. (2021).
\newblock Forecasting day-ahead electricity prices: {A} review of
  state-of-the-art algorithms, best practices and an open-access benchmark.
\newblock {\em Applied Energy}, 293:116983.

\bibitem[Lei et~al., 2018]{lei_distribution-free_2018}
Lei, J., G'Sell, M., Rinaldo, A., Tibshirani, R.~J., and Wasserman, L. (2018).
\newblock Distribution-{Free} {Predictive} {Inference} for {Regression}.
\newblock {\em Journal of the American Statistical Association},
  113(523):1094--1111.

\bibitem[Maciejowska et~al., 2016]{maciejowska_probabilistic_2016}
Maciejowska, K., Nowotarski, J., and Weron, R. (2016).
\newblock Probabilistic forecasting of electricity spot prices using {Factor}
  {Quantile} {Regression} {Averaging}.
\newblock {\em International Journal of Forecasting}, 32(3):957--965.

\bibitem[Meyn and Tweedie, 2012]{meyn2012markov}
Meyn, S.~P. and Tweedie, R.~L. (2012).
\newblock {\em Markov chains and stochastic stability}.
\newblock Springer Science \& Business Media.

\bibitem[Nowotarski and Weron, 2018]{nowotarski_recent_2018}
Nowotarski, J. and Weron, R. (2018).
\newblock Recent advances in electricity price forecasting: {A} review of
  probabilistic forecasting.
\newblock {\em Renewable and Sustainable Energy Reviews}, 81:1548--1568.

\bibitem[Papadopoulos et~al., 2002]{papadopoulos_inductive_2002}
Papadopoulos, H., Proedrou, K., Vovk, V., and Gammerman, A. (2002).
\newblock Inductive {Confidence} {Machines} for {Regression}.
\newblock In {\em Machine {Learning}: {ECML} 2002}, pages 345--356. Springer.

\bibitem[Remlinger et~al., 2021]{remlinger2021expert}
Remlinger, C., Brière, M., Alasseur, C., and Mikael, J. (2021).
\newblock Expert aggregation for financial forecasting.
\newblock {\em arXiv preprint arXiv:2111.15365}.

\bibitem[Romano et~al., 2019]{romano_conformalized_2019}
Romano, Y., Patterson, E., and Cand{\`e}s, E. (2019).
\newblock Conformalized {Quantile} {Regression}.
\newblock {\em Advances in Neural Information Processing Systems}, 32.

\bibitem[Saha et~al., 2021]{rf_spatial_depend}
Saha, A., Basu, S., and Datta, A. (2021).
\newblock Random forests for spatially dependent data.
\newblock {\em Journal of the American Statistical Association}, 0(0):1--19.

\bibitem[Shafer and Vovk, 2008]{shafer_tutorial_2008}
Shafer, G. and Vovk, V. (2008).
\newblock A {Tutorial} on {Conformal} {Prediction}.
\newblock {\em JMLR}, 9:51.

\bibitem[Shalev-Shwartz and Singer, 2007]{ftrl}
Shalev-Shwartz, S. and Singer, Y. (2007).
\newblock A primal-dual perspective of online learning algorithms.
\newblock {\em Machine Learning}, 69(2-3):115--142.

\bibitem[Tibshirani et~al., 2019]{tibshirani_conformal_2019}
Tibshirani, R.~J., Barber, R.~F., Cand{\`e}s, E., and Ramdas, A. (2019).
\newblock Conformal {Prediction} {Under} {Covariate} {Shift}.
\newblock {\em Advances in Neural Information Processing Systems}, 32:11.

\bibitem[Uniejewski and Weron, 2021]{uniejewski_regularized_2021}
Uniejewski, B. and Weron, R. (2021).
\newblock Regularized quantile regression averaging for probabilistic
  electricity price forecasting.
\newblock {\em Energy Economics}, page 105121.

\bibitem[Vovk et~al., 1999]{vovk_machine-learning_1999}
Vovk, V., Gammerman, A., and Saunders, C. (1999).
\newblock Machine-{Learning} {Applications} of {Algorithmic} {Randomness}.
\newblock In {\em Proceedings of the {Sixteenth} {International} {Conference}
  on {Machine} {Learning}}, pages 444--453. Morgan Kaufmann Publishers Inc.

\bibitem[Vovk et~al., 2005]{vovk_algorithmic_2005}
Vovk, V., Gammerman, A., and Shafer, G. (2005).
\newblock {\em Algorithmic {Learning} in a {Random} {World}}.
\newblock Springer US.

\bibitem[Vovk, 1990]{vovk1990aggregating}
Vovk, V.~G. (1990).
\newblock Aggregating strategies.
\newblock {\em Proc. of Computational Learning Theory}.

\bibitem[Weron, 2014]{weron_electricity_2014}
Weron, R. (2014).
\newblock Electricity price forecasting: {A} review of the state-of-the-art
  with a look into the future.
\newblock {\em International Journal of Forecasting}, 30(4):1030--1081.

\bibitem[Wintenberger, 2017]{wintenberger2017optimal}
Wintenberger, O. (2017).
\newblock Optimal learning with bernstein online aggregation.
\newblock {\em Machine Learning}, 106(1):119--141.

\bibitem[Wisniewski et~al., 2020]{pmlr-v128-wisniewski20a}
Wisniewski, W., Lindsay, D., and Lindsay, S. (2020).
\newblock Application of conformal prediction interval estimations to market
  makers' net positions.
\newblock In {\em Proceedings of the Ninth Symposium on Conformal and
  Probabilistic Prediction and Applications}, volume 128 of {\em Proceedings of
  Machine Learning Research}, pages 285--301. PMLR.

\bibitem[Xu and Xie, 2021a]{xu2021conformal}
Xu, C. and Xie, Y. (2021a).
\newblock Conformal prediction for dynamic time-series.
\newblock {\em arXiv preprint arXiv:2010.09107}.

\bibitem[Xu and Xie, 2021b]{pmlr-v139-xu21h}
Xu, C. and Xie, Y. (2021b).
\newblock Conformal prediction interval for dynamic time-series.
\newblock In {\em Proceedings of the 38th International Conference on Machine
  Learning}, volume 139 of {\em Proceedings of Machine Learning Research},
  pages 11559--11569. PMLR.

\end{thebibliography}
\bibliographystyle{apalike}

\newpage
\appendix
\onecolumn

\part*{Appendices}
The appendices are organized as follows. First, \Cref{app:scp} provides details about the Split Conformal Prediction procedure. Second,  \Cref{app:aci_theory} proves the results of \Cref{sec:theory} and conducts the numerical analysis of \Cref{sec:theory_ar} in the case where the \textit{efficiency} is computed using the median length. Then, \Cref{app:experimental_details} contains details on the experimental setup (hyper-parameters, settings, pseudo-codes of competing algorithms). Finally, \Cref{app:syn_expe,app:price} contain complementary numerical results, respectively on synthetic data sets and on the French electricity spot prices data set.

\section{Details on Split Conformal Prediction}
\label{app:scp}

In this section, we introduce and review the simplest theoretical properties of Split Conformal Prediction  (SCP). More specifically, we present the whole algorithm, the theoretical guarantees and discuss the visualisation challenges arising when visualising a CP procedure. 

\subsection{Split Conformal Prediction Algorithm}
\label{app:scp_alg}

\begin{algorithm}
\caption{Split Conformal Algorithm, with absolute value residuals scores}
\label{alg:SCP}
\begin{algorithmic}[1] 
\REQUIRE Regression algorithm $\mathcal{A}$, significance level $\alpha$, examples $z_{1}, \ldots, z_{T}$ with $z_t = \left(x_t,y_t\right)$. 
\ENSURE Prediction interval $\mathcal{\hat{C}}_{\alpha}(x) \text { for any } x \in \mathds{R}^d$. 
\STATE Randomly split $\{1, \ldots, T\}$ into two disjoint sets $\rm{Tr}$ and $\rm{Cal}$. 
\STATE Fit a mean regression function: $\hat{\mu}(\cdot) \leftarrow \mathcal{A}\left(\left\{z_t, t \in \rm{Tr}\right\}\right)$ 
\FOR {$j \in \rm{Cal} $} 
\STATE Set $s_j = |y_j-\hat{\mu}(x_j)|$, the \textit{conformity scores}
\ENDFOR
\STATE Set $S_{\rm{Cal}} = \{s_j, j\in \rm{Cal}\}$
\STATE Compute $\widehat Q_{1-\alpha^{\rm{SCP}}}\left(S_{\rm{Cal}}\right)$, the $1-\alpha^{\rm{SCP}}$-th empirical quantile of $S_{\rm{Cal}}$, with $1-\alpha^{\rm{SCP}} := (1-\alpha)\left(1+1/\left|\rm{Cal}\right|\right)$.
\label{ln:cp_quantile}
\STATE Set  $\mathcal{\hat{C}}_{\alpha}(x) = \left[\hat{\mu}(x)\pm \widehat Q_{1-\alpha^{\rm{SCP}}}\left(S_{\rm{Cal}}\right)\right]$, for any $x \in \mathds{R}^d$. 
\end{algorithmic}
\end{algorithm}
 
\subsection{Theoretical guarantees of Split Conformal Prediction}
\label{app:guarantees_scp}

Conformal prediction relies on the assumption that the data is exchangeable. 
\begin{definition}[Exchangeability]
$\left( X_t, Y_t \right) _{t = 1}^T$ are exchangeable if for any permutation $\sigma$ of $\llbracket 1,T \rrbracket$ we have:
\begin{equation*}
\mathcal{L}\left( \left(X_1,Y_1\right), \dots, \left(X_T,Y_T\right) \right) = \mathcal{L}\left( \left(X_{\sigma(1)},Y_{\sigma(1)}\right), \dots, \left(X_{\sigma(T)},Y_{\sigma(T)}\right) \right),
\end{equation*} 
where $\mathcal{L}$ designates the joint distribution.
\end{definition}

\citet{lei_distribution-free_2018} proves the following \Cref{thm:icp_guarantees} about SCP quasi-exact \textit{validity}.
\begin{theorem}
Suppose $\left( X_t, Y_t \right) _{t = 1}^{T+1}$ are exchangeable, and we apply \cref{alg:SCP} on $\left( X_t, Y_t \right) _{t = 1}^T$ to predict an interval on $X_{T+1}$, $\hat{\mathcal{C}}_{\alpha}\left(X_{T+1}\right)$. Then we have:\begin{equation*}
\mathds{P}\left\{Y_{T+1} \in \hat{\mathcal{C}}_{\alpha}\left(X_{T+1}\right)\right\} \geq 1-\alpha.
\end{equation*} 
If, in addition, the scores $S_{\rm{Cal}}$ have a continuous joint distribution, we also have an upper bound:
\begin{equation*}
\mathds{P}\left\{Y_{T+1} \in \hat{\mathcal{C}}_{\alpha}\left(X_{T+1}\right)\right\} \leq 1-\alpha +\frac{2}{T+2}.
\end{equation*} 
\label{thm:icp_guarantees}
\end{theorem}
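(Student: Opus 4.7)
The plan is to reduce the coverage statement to a rank statement for exchangeable scores, a standard two-step argument. First I would condition on the proper training set $\rm{Tr}$, which freezes the regression function $\hat\mu$ as a deterministic map. Since the index sets $\rm{Tr}$ and $\rm{Cal}\cup\{T+1\}$ are disjoint, exchangeability of the original data $(X_t,Y_t)_{t=1}^{T+1}$ passes to the conditional joint distribution of $(X_i,Y_i)_{i\in\rm{Cal}\cup\{T+1\}}$. Applying the fixed map $(x,y)\mapsto |y-\hat\mu(x)|$ coordinatewise preserves exchangeability, so the augmented score vector $(s_i)_{i\in\rm{Cal}\cup\{T+1\}}$ is itself exchangeable conditionally on $\rm{Tr}$.

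Next I would rewrite the coverage event as $Y_{T+1}\in\hat{\mathcal C}_\alpha(X_{T+1})\iff s_{T+1}\le \widehat Q_{1-\alpha^{\rm{SCP}}}(S_{\rm{Cal}})$. Setting $n=|\rm{Cal}|$ and $k=\lceil(1-\alpha)(n+1)\rceil$, the inflated quantile $\widehat Q_{1-\alpha^{\rm{SCP}}}(S_{\rm{Cal}})$ is precisely the $k$-th order statistic of $S_{\rm{Cal}}$ (since $1-\alpha^{\rm{SCP}}=(1-\alpha)(n+1)/n$). Hence the event is equivalent to $s_{T+1}$ having rank at most $k$ in the augmented multiset $(s_i)_{i\in\rm{Cal}\cup\{T+1\}}$. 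By exchangeability of these $n+1$ scores, their ranks form a random permutation, so the rank of $s_{T+1}$ is stochastically majorised by the uniform law on $\{1,\dots,n+1\}$ (any ties are broken in a way that can only inflate coverage). The conditional coverage probability is therefore at least $k/(n+1)\ge 1-\alpha$, and marginalising over $\rm{Tr}$ preserves this lower bound.

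For the upper bound, continuity of the joint law of the calibration-and-test scores implies that ties occur with probability zero, so the rank of $s_{T+1}$ is \emph{exactly} uniform on $\{1,\dots,n+1\}$. This gives coverage equal to $k/(n+1)\le ((1-\alpha)(n+1)+1)/(n+1)=1-\alpha+1/(n+1)$. The stated bound $2/(T+2)$ then follows from the split-size convention ($n\ge T/2$ absorbs the factor $2$); a cleaner form $1/(n+1)$ can equivalently be reported. The main subtlety I expect is the careful treatment of ties: under bare exchangeability, the rank of $s_{T+1}$ is only stochastically dominated by a uniform variable (and ties in $S_{\rm{Cal}}$ affect how the $k$-th order statistic is defined), so only the continuity hypothesis upgrades the inequality to equality and yields the two-sided bound. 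A secondary but trivial verification is that the randomised split in line 1 of the algorithm, being independent of the data, does not disturb the conditional exchangeability argument.
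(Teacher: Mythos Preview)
The paper does not actually give a proof of this theorem: it is stated in Appendix~A.2 and immediately attributed to \citet{lei_distribution-free_2018}. Your proposal is precisely the standard rank/exchangeability argument found in that reference (condition on $\rm{Tr}$, reduce coverage to a rank event for the $n+1$ exchangeable scores, use uniformity of ranks), so there is nothing to contrast---your route \emph{is} the canonical one. Your handling of the constant $2/(T+2)$ is also right: in \citet{lei_distribution-free_2018} the split is balanced, $|\rm{Cal}|=T/2$, which turns the sharper $1/(n+1)$ into the stated $2/(T+2)$; the paper's Algorithm~\ref{alg:SCP} leaves the split proportions implicit, but the theorem as written presupposes this convention.
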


\subsection{Examples of dependent scores when data noise is exchangeable}
\label{app:scp_examples}
In this subsection, we provide two examples that highlight the importance of adapting CP to time series. In these examples, the scores are non exchangeable while the true noise of the data is exchangeable. 

\begin{example}[Endogenous and not perfectly estimated] 
\label{ex:endo}
Assume ${X_t = Y_{t-1} \in \mathds{R}}$ and that
\begin{equation*}
    Y_t = aY_{t-1} +\varepsilon_t,
\end{equation*}
where $\varepsilon_t$ is a white noise. This corresponds to an order-1 Auto-Regressive (i.e. AR(1)). 

Assume that the fitted model is $\hat{f}_t(x) = \hat{a}x$, with $\hat{a} \neq a$. Then, for any $t$, we have that:
\begin{align*}
    \hat{\varepsilon}_t & = Y_t - \hat{Y}_t = \left(a-\hat{a}\right)Y_{t-1} + \varepsilon_t \\
    \hat{\varepsilon}_t & = a \hat{\varepsilon}_{t-1} + \xi_t
\end{align*}
with $\xi_t = \varepsilon_t - \hat{a} \varepsilon_{t-1}$. 

The residual process $\left(\hat{\varepsilon}_t\right)_{t \geq 0}$ is an ARMA(1,1) (Auto-Regressive Moving-Average, see section \ref{app:arma}) of parameters $\varphi = a$ and $\theta = -\hat{a}$.

Thus, we have generated dependent residuals (ARMA residuals) even though the underlying model only had white noise.
\qed
\end{example}
\begin{example}[Exogenous and misspecified]
Assume $X_t \in \mathds{R}^2$ and that:
\begin{equation*}
    Y_t = a X_{1,t} + b X_{2,t} + \varepsilon_t,
\end{equation*}
with $\varepsilon_t \underset{\text{i.i.d.}}{\sim} \mathcal{N}(0,1)$, $X_{2,t+1} = \varphi X_{2,t} + \xi_t, \; \; \xi_t \underset{\text{i.i.d.}}{\sim} \mathcal{N}(0,1)$ and $X_{1,t}$ can be any random variable. 

Assume that we misspecify the model such that the fitted model is $\hat{f}_t(x) = ax_{1}$ for any $t \geq 0$. Then, for any $t \geq 0$, we have that
\begin{equation*}
\hat{\varepsilon}_t = Y_t - \hat{Y}_t = b X_{2,t} + \varepsilon_t.
\end{equation*}
Thus, we have generated dependent residuals (Auto-Regressive residuals) even if the underlying model only had i.i.d. Gaussian noise. 
\qed
\label{ex:exo}
\end{example}

\subsection{How should we visualise CP predicted intervals?}
\label{app:how_to_visu}

We propose to have a closer look at how are constructed the prediction of this method. In this aim, we introduce \cref{mod:1}.

\begin{model}
\begin{equation*}
\begin{aligned}
    x_t & = \cos{\left(\frac{2 \pi}{180} t\right)} + \sin{\left(\frac{2 \pi}{180} t\right)} + \frac{t}{100} \\
    \varepsilon_{t+1} & = 0.99 \varepsilon_t + \xi_{t+1}, \; \; \; \xi_t \sim \mathcal{N}(0,0.01) & . \\
    Y_t & = f_t(x_t) + \varepsilon_t = x_t + \varepsilon_t
\end{aligned}
\end{equation*}
\label{mod:1}
\end{model}

In this \cref{mod:1}, the explanatory variables are deterministic. A generation from this model is represented in \Cref{fig:visu_data}. The first subplot, \Cref{fig:visu_data_reg}, represents $x_t$ across time. The second subplot, \Cref{fig:visu_data_noise}, represents the noise $\varepsilon_t$ across time. Finally, the last subplot, \Cref{fig:visu_data_entire}, represents the whole process $Y_t$ across time.

\begin{figure}[!h]
     \centering
     \begin{subfigure}[$f_t(x_t)$]{
         \includegraphics[width=0.25\textwidth]{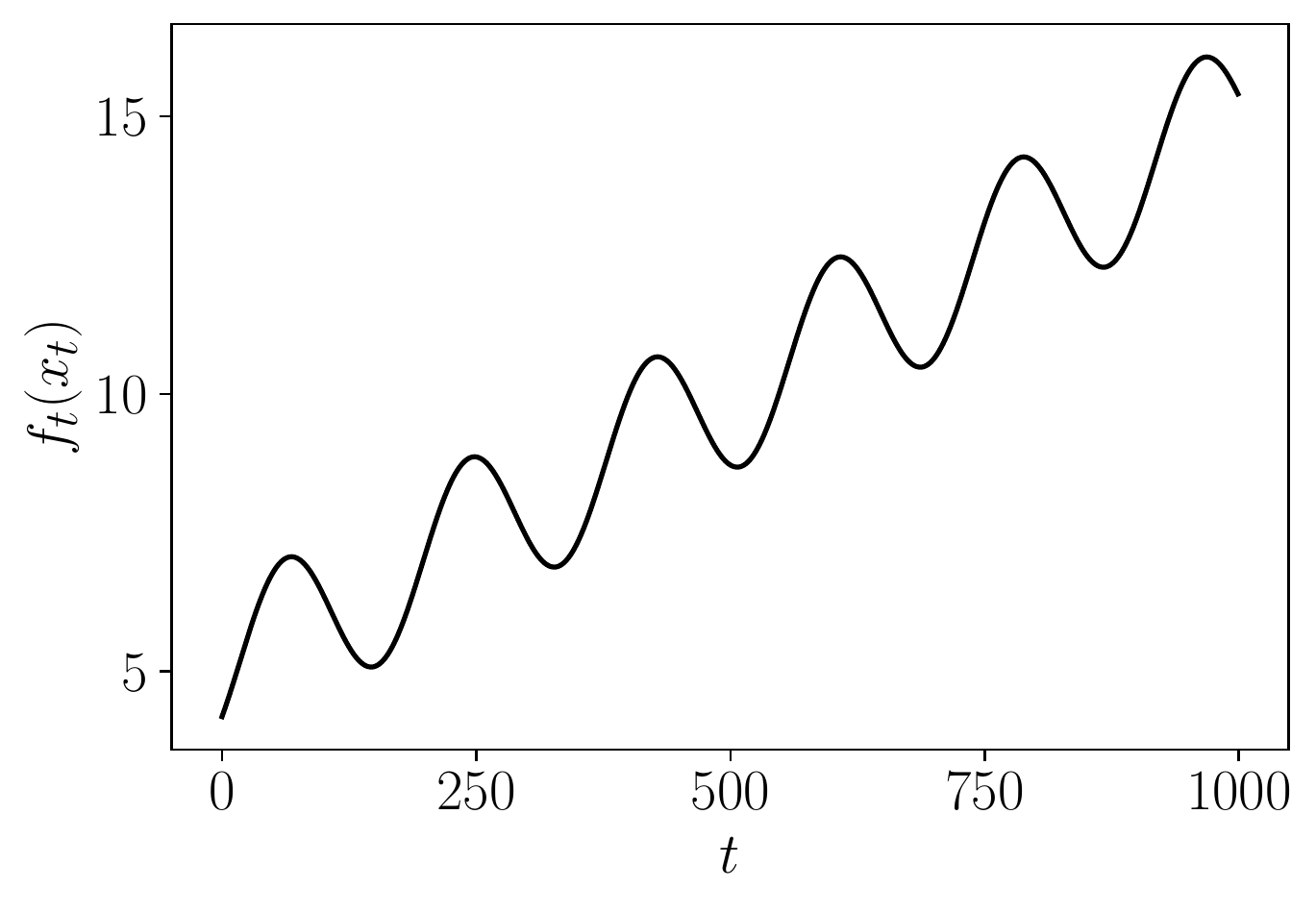}
         \label{fig:visu_data_reg}
     }
     \end{subfigure}
     \begin{subfigure}[$\varepsilon_t$]{
         \includegraphics[width=0.26\textwidth]{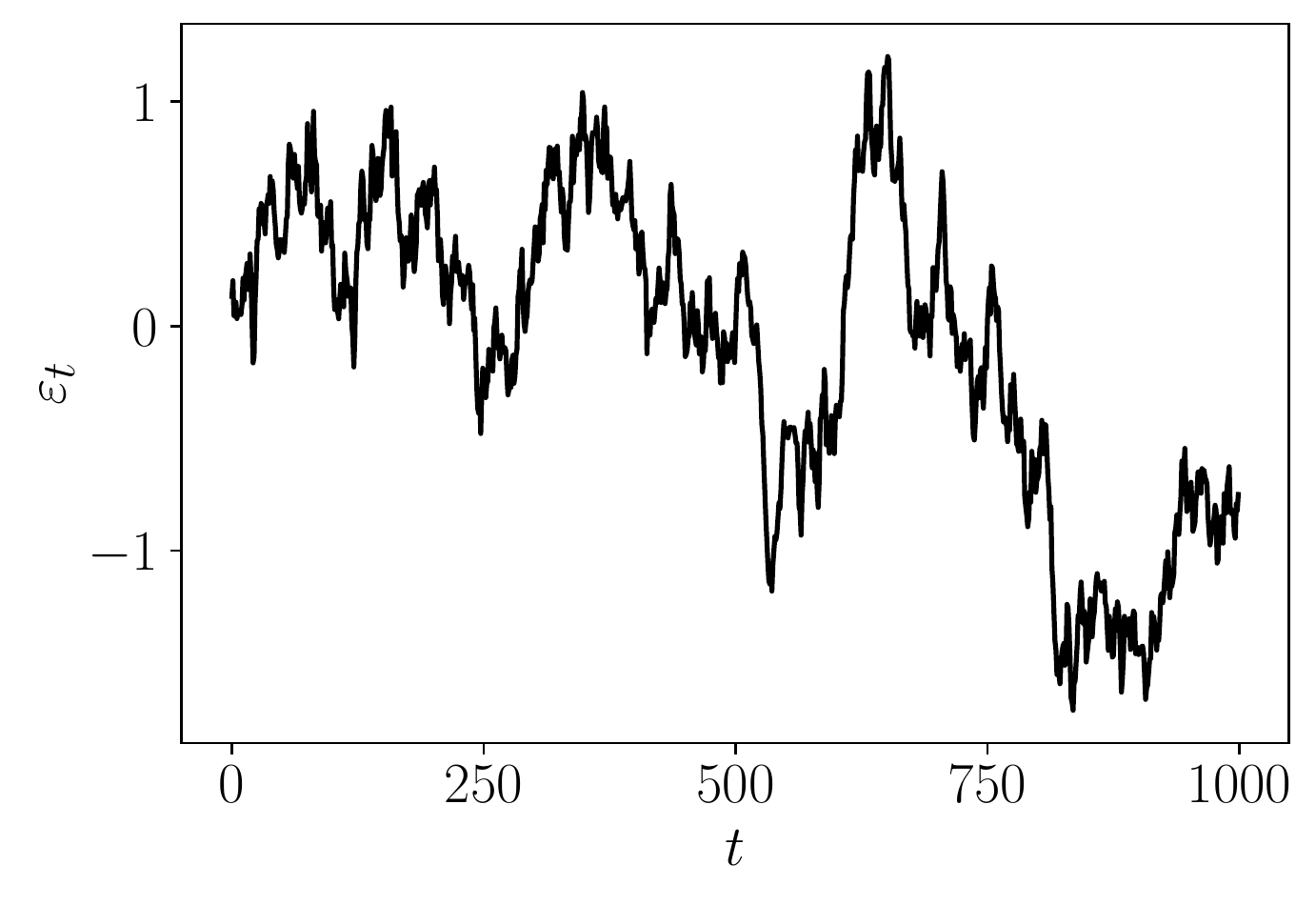}
         \label{fig:visu_data_noise}
     }
     \end{subfigure}
     \begin{subfigure}[$y_t$]{
         \centering
         \includegraphics[width=0.25\textwidth]{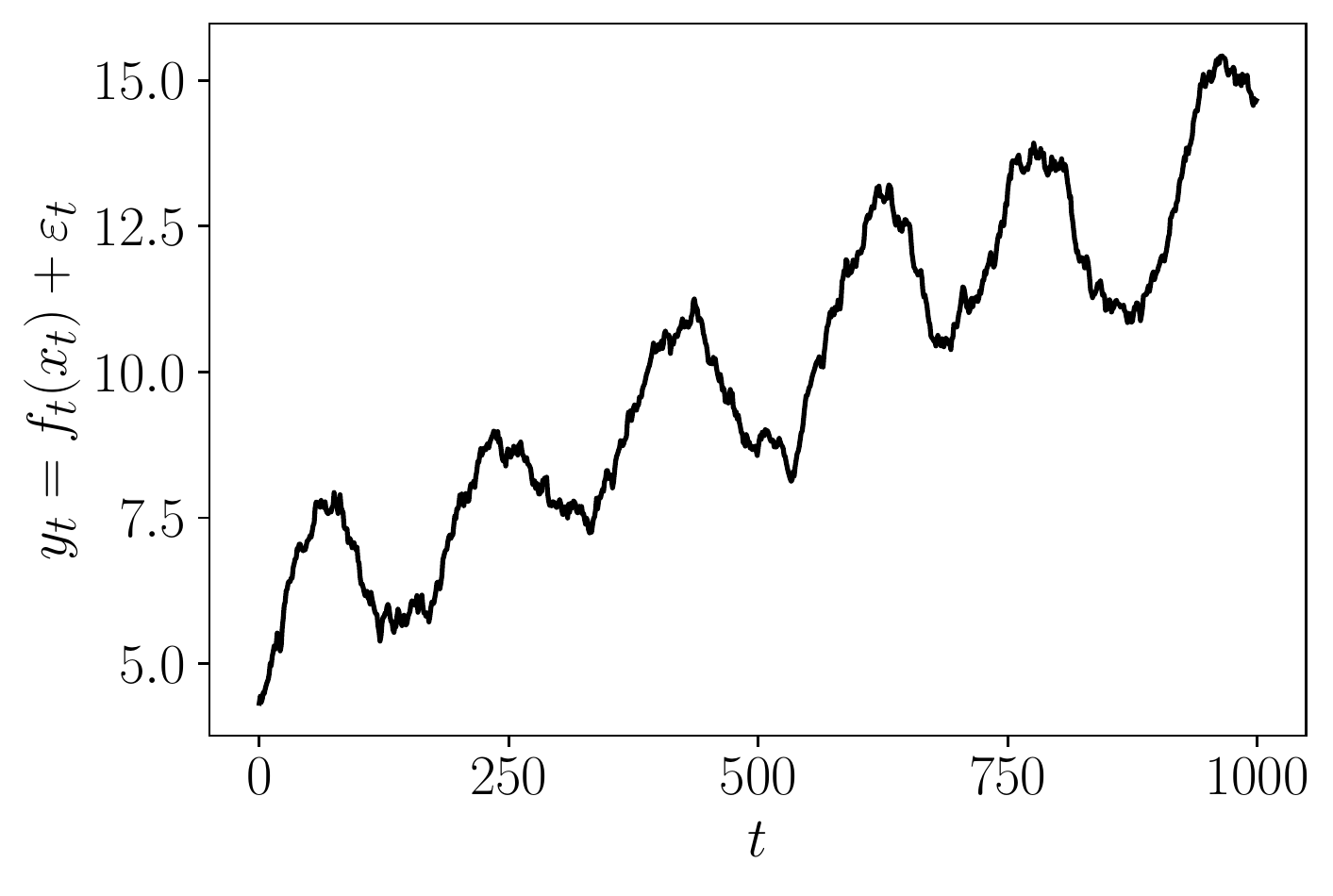}
         \label{fig:visu_data_entire}
        } 
     \end{subfigure}
        \caption{Representation of data simulated according to \cref{mod:1}.}
        \label{fig:visu_data}
\end{figure}

The aim is to predict intervals of coverage 0.9 for values of $Y_t$, at $t > 500$, that is to say $T_0 = 500$ here. For simplicity, we assume $\hat{f_t} = f_t$ at each time step $t$ and we do not represent the points used to obtain this perfect regression model. There are two ways of visualizing the predictions, that are represented in each row of \Cref{fig:visu_oscp}. If the focus of the analysis is on a specific application with the aim of analysing the whole prediction, it is relevant to represent the response $y_t$ itself and the associated intervals. This is represented in the first row of \Cref{fig:visu_oscp}. Nevertheless, to better understand a CP method, it is relevant to represent the scores and the corresponding intervals, rescaled. This is represented in the second row of \Cref{fig:visu_oscp} (even if the residuals are displayed and not their absolute value, i.e. the scores). 

\begin{figure*}[!h]
\centerline{\includegraphics[scale=0.25]{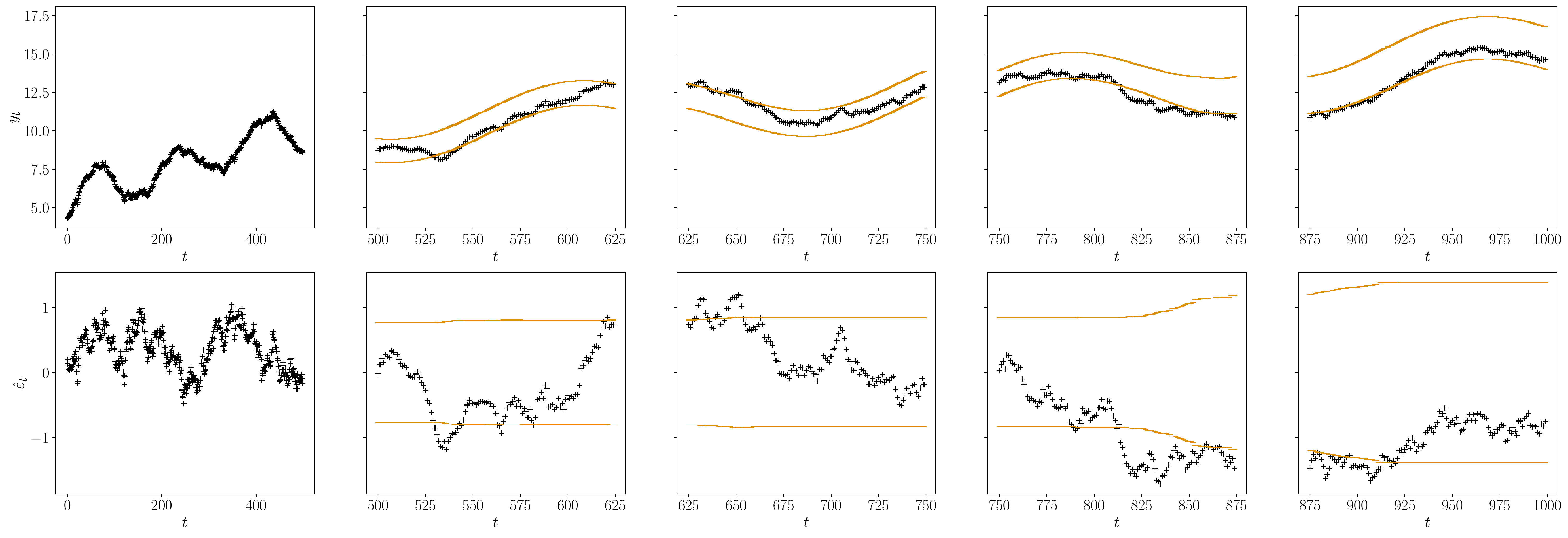}}
\caption{Visualisation of OSSCP on simulated data, from model \cref{mod:1}. 1000 data points are generated. The 500 first ones form the initial calibration set, displayed on the first subplot of each row. The 500 last ones are the ones the algorithm tries to predict. They are displayed on the 4 last subplots of each row: the second displays response from $t = 501$ to $t = 625$, the third from $t = 626$ to $t = 750$ and so on. Observed values are in black, predicted intervals bounds are displayed in orange}
\label{fig:visu_oscp}
\end{figure*}

To better understand the difference between the two visualizations, let's look specifically at some observations. In the first line of the \Cref{fig:visu_oscp}, we can see that the intervals widen for $t \in [801;900]$, while struggling to include the observations. Nevertheless, it is difficult to understand the underlying phenomenon on such a plot. Indeed, the points seem very similar to those for $t \in [660;720]$. What considerably influences the CP are the scores and not the observed values. Thus, in the second line, at times $t \in [801;900]$, we observe more clearly that the values go out of the previous range of values, being around 1.5 in absolute value. This explains why the intervals widen: the calibration set contains more and more high values, which increases the value of the quantile and, therefore, the length of the interval.
To conclude, to analyse and assess the performances of CP procedures, we recommend representing  the intervals around the \textit{conformity scores} (or the residuals, depending on the score function) rather than the observed values. This is because the scores are what truly determine the conformal behaviour. 

\section{Proof of the results presented in \Cref{sec:theory} and additional numerical experiments}
\label{app:aci_theory}
\subsection{Proof of \Cref{thm:iid}}
\label{subsec:proofiid}

We recall here \Cref{thm:iid}.

\begin{customTheorem}[\ref{thm:iid}]
Assume that: (i) $\alpha \in \mathds{Q}$; 
    (ii) the scores are exchangeable with quantile function $Q$; (iii) the quantile function is perfectly estimated at each time (as defined above); (iv) the quantile function $Q$ is bounded and $\mathcal{C}^4([0,1])$.
Then, for all $\gamma>0$,  $\left(\alpha_t\right)_{t > 0}$ forms a Markov Chain, that admits a stationary distribution $\pi_\gamma$, and
\begin{equation*}
    \frac{1}{T}\sum\limits_{t=1}^T L(\alpha_t) \overset{a.s.}{\underset{T \rightarrow +\infty}{\longrightarrow}} \mathds{E}_{\pi_\gamma}[L] \overset{\text{not.}}{=} \mathds{E}_{\tilde \alpha \sim  \pi_\gamma}[L(\tilde \alpha)].
\end{equation*}
Moreover, as $\gamma \to 0$, 
\begin{equation*}
      \mathds{E}_{\pi_\gamma}[L] = L_0 + Q''(1-\alpha)\frac{\gamma}{2}\alpha(1-\alpha) + O(\gamma^{3/2}).
\end{equation*}
\end{customTheorem}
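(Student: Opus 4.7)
The plan is to exploit assumptions (ii)--(iii) to first see that, conditionally on $\alpha_t$, the miscoverage indicator $\mathds{1}\{y_t\notin\widehat C_{\alpha_t}(x_t)\}$ is Bernoulli with parameter $p(\alpha_t)$, where $p(a)=a$ on $[0,1]$, $p(a)=0$ for $a<0$, and $p(a)=1$ for $a>1$. The recursion in \eqref{eq:update_scheme} therefore defines a time-homogeneous Markov chain whose increments take only the two values $+\gamma\alpha$ and $-\gamma(1-\alpha)$. Writing $\alpha=p/q$ with $\gcd(p,q)=1$, both increments lie in $\tfrac{\gamma}{q}\mathds{Z}$, so $(\alpha_t)$ lives on the countable lattice $\alpha_1+\tfrac{\gamma}{q}\mathds{Z}$, on which it is irreducible in its recurrent region (since $\gcd(p,q-p)=1$).

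\textbf{Stationarity and ergodic convergence.} With the quadratic Lyapunov function $V(a)=(a-\alpha)^2$, a direct computation for $\alpha_t\in[0,1]$ yields
\begin{equation*}
\mathds{E}[V(\alpha_{t+1})\mid\alpha_t]=(1-\gamma)^2 V(\alpha_t)+\gamma^2\alpha_t(1-\alpha_t),
\end{equation*}
giving a drift $-\gamma(2-\gamma)V(\alpha_t)+\gamma^2\alpha_t(1-\alpha_t)$ that is strictly negative outside a bounded set; on $\{\alpha_t\notin[0,1]\}$ the chain moves deterministically toward $[0,1]$. The Foster--Lyapunov criterion then delivers geometric positive recurrence, hence a unique invariant distribution $\pi_\gamma$, and the Markov ergodic theorem yields $\tfrac{1}{T}\sum_{t=1}^{T}L(\alpha_t)\to\mathds{E}_{\pi_\gamma}[L]$ a.s., with the right-hand side finite thanks to boundedness of $Q$ in (iv).

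\textbf{Asymptotic expansion.} Writing $u=\tilde\alpha-\alpha$ for $\tilde\alpha\sim\pi_\gamma$, the $\mathcal{C}^4$ regularity of $Q$ gives on $\{\tilde\alpha\in[0,1]\}$
\begin{equation*}
L(\tilde\alpha)=L_0-2Q'(1-\alpha)\,u+Q''(1-\alpha)\,u^2+O(|u|^3),
\end{equation*}
while $L$ is uniformly bounded elsewhere. Stationarity of the first moment gives $\mathds{E}_{\pi_\gamma}[p(\tilde\alpha)]=\alpha$, so $\mathds{E}_{\pi_\gamma}[u]$ reduces to the boundary defect $\mathds{E}_{\pi_\gamma}[(\tilde\alpha-1)\mathds{1}_{\tilde\alpha>1}+\tilde\alpha\,\mathds{1}_{\tilde\alpha<0}]$. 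Applying stationarity to $u^2$ with the drift identity above produces
\begin{equation*}
(2\gamma-\gamma^2)\,\mathds{E}_{\pi_\gamma}[u^2\mathds{1}_{\tilde\alpha\in[0,1]}]=\gamma^2\,\mathds{E}_{\pi_\gamma}[\tilde\alpha(1-\tilde\alpha)\mathds{1}_{\tilde\alpha\in[0,1]}]+(\text{boundary terms}),
\end{equation*}
whence $\mathds{E}_{\pi_\gamma}[u^2]=\tfrac{\gamma}{2}\alpha(1-\alpha)+o(\gamma)$ once the boundary contributions are shown to be negligible. Substituting into the Taylor expansion then yields the announced expression.

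\textbf{Main obstacle.} The delicate step is to justify simultaneously that (a) the boundary contributions in both moment computations are small enough, and (b) the remainder satisfies $\mathds{E}_{\pi_\gamma}[|u|^3]=O(\gamma^{3/2})$. I would achieve both by sharpening the Lyapunov argument with an exponential-type function, for instance $V_\lambda(a)=\exp(\lambda|a-\alpha|/\sqrt{\gamma})$, to establish sub-exponential concentration of $\tilde\alpha$ around $\alpha$ on the natural scale $\sqrt{\gamma}$. This would force $\mathds{P}_{\pi_\gamma}(\tilde\alpha\notin[0,1])$ to be exponentially small in $1/\sqrt{\gamma}$ (absorbing all boundary terms into the $o(\gamma)$ remainder) and directly provide $\mathds{E}_{\pi_\gamma}[|u|^3]=O(\gamma^{3/2})$, producing precisely the residual order in the statement.
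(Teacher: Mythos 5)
Your overall architecture matches the paper's: reduce the update to $\alpha_{t+1}=\alpha_t+\gamma(\alpha-B_{P(\alpha_t)})$ via exchangeability and the perfect-quantile assumption, identify a lattice state space using $\alpha\in\mathds{Q}$, invoke irreducibility and the ergodic theorem, and finish with a Taylor expansion of $Q$ around $1-\alpha$ controlled by stationary moments. Two remarks on the first half: the Foster--Lyapunov machinery is unnecessary, because the chain is automatically confined to $]\gamma(\alpha-1),1+\gamma\alpha[$ (outside $[0,1]$ the step is deterministic and points inward), so the state space is \emph{finite} and irreducibility alone gives a unique stationary law; and your parenthetical ``irreducible in its recurrent region'' glosses over what the paper treats as a full lemma, namely an induction constructing a positive-probability path between any two admissible lattice points while respecting the boundary constraints. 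These are fixable presentation issues, not errors.

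The genuine gap is in the expansion step, and it is exactly the step the paper flags as delicate. You stop the Taylor expansion at third order, so your remainder is $O(\mathds{E}_{\pi_\gamma}[|u|^3])$, and you need this to be $O(\gamma^{3/2})$ together with exponentially small boundary probabilities $\mathds{P}_{\pi_\gamma}(\tilde\alpha\notin[0,1])$. You do not prove either: you name the exponential Lyapunov function $V_\lambda(a)=\exp(\lambda|a-\alpha|/\sqrt{\gamma})$ that \emph{would} deliver sub-Gaussian concentration at scale $\sqrt{\gamma}$, but the drift verification for $V_\lambda$ (uniformly in $\gamma$, with the projection nonlinearity at the boundary) is precisely the hard part and is left entirely unexecuted. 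The elementary moment identities you do use cannot close this: stationarity gives $\mathds{E}[u^2]=O(\gamma)$ and (with more work) $\mathds{E}[u^4]=O(\gamma^{3/2})$, which by Cauchy--Schwarz only yields $\mathds{E}[|u|^3]=O(\gamma^{5/4})$ --- too weak for the claimed $O(\gamma^{3/2})$ error. The paper circumvents this entirely by pushing the expansion to fourth order with a Lagrange remainder: it then only needs the \emph{signed} third moment $\mathds{E}[(P(\tilde\alpha)-\alpha)^3]=O(\gamma^{3/2})$, which is accessible by telescoping the stationarity identity, plus the crude bound $\mathds{E}[(P(\tilde\alpha)-\alpha)^4]=O(\gamma^{3/2})$ to absorb the remainder; the paper's closing remark states explicitly that a bound on $\mathds{E}[|P(\tilde\alpha)-\alpha|^3]$ is \emph{not} guaranteed by these means. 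Your concentration route is plausible and, if carried out, would give a cleaner and arguably sharper argument, but as written the proof of the second assertion of the theorem rests on an unproven estimate.
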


To prove \Cref{thm:iid}, we rely on the following lemmas, that will be proved after the theorem. We denote $B_\beta$ a Bernoulli random variable of parameter $\beta$ and $P(x)$ designates the projection of $x$ onto $[0,1]$. Finally, for $\gamma > 0$, define the following Markov Chain:
\begin{equation}
    \alpha_{t+1} = \alpha_t + \gamma\left(\alpha - B_{P(\alpha_t)}\right) \text{for }t > 0, \\
    \label{eq:def_MC}
\end{equation}

We introduce $(p,q) \in \mathds{N}\times\mathds{N}^*$, $p < q$, s.t.~$\alpha=\frac{p}{q}$, and $\mathcal A=  \left\{ \alpha + \gamma\frac{\text{gcd}(q-p,p)}{q} \mathds{Z} \right\} \cap\ \  ]\gamma(\alpha-1),1+\gamma\alpha[.$

\begin{lemma}[Finite state space] 
Assume that $\alpha \in \mathds{Q}$. Then, for any $\gamma > 0$, the Markov Chain defined by $\alpha_1\in \mathcal A$ and ${\alpha_{t+1} = \alpha_t + \gamma\left(\alpha - B_{P(\alpha_t)}\right)}$, for $t > 0$  has a finite state space $\mathcal A$. 
\label{lem:iid_finite}
\end{lemma}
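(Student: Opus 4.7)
The plan is to isolate the two possible one-step increments of the chain, show that the trajectory stays on a single arithmetic progression aligned with $\mathcal A$, verify that the open interval $(\gamma(\alpha-1),1+\gamma\alpha)$ is forward invariant, and conclude by observing that a bounded arithmetic progression has finitely many elements.

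First I would unpack the transitions. Since $B_{P(\alpha_t)}\in\{0,1\}$, from state $\alpha_t$ the chain moves either to $\alpha_t+\gamma\alpha$ or to $\alpha_t-\gamma(1-\alpha)$. Writing $\alpha=p/q$, the two possible increments are $\gamma p/q$ and $-\gamma(q-p)/q$, and both belong to $\gamma\frac{d}{q}\mathds{Z}$ with $d:=\gcd(p,q-p)$. By induction on $t$, $\alpha_t-\alpha_1\in\gamma\frac{d}{q}\mathds{Z}$. Combined with the hypothesis $\alpha_1\in\alpha+\gamma\frac{d}{q}\mathds{Z}$ coming from $\alpha_1\in\mathcal A$, this yields $\alpha_t\in\alpha+\gamma\frac{d}{q}\mathds{Z}$ for every $t\geq 1$.

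Second I would verify that $I:=(\gamma(\alpha-1),1+\gamma\alpha)$ is forward invariant. If $\alpha_t\in[0,1]$, both candidate successors lie in $[\gamma(\alpha-1),1+\gamma\alpha]$, and an endpoint is attainable only when $\alpha_t\in\{0,1\}$; but at $\alpha_t=0$, $P(0)=0$ forces $B=0$ deterministically, so the chain moves to $\gamma\alpha$, which is strictly inside $I$, and the case $\alpha_t=1$ is symmetric. When instead $\alpha_t$ lies in $(\gamma(\alpha-1),0)$ or in $(1,1+\gamma\alpha)$, the projection $P$ equals $0$ or $1$ respectively, making $B$ deterministic and pushing the chain back towards $[0,1]$; a direct check then shows $\alpha_{t+1}\in I$ in these cases as well. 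Hence any orbit starting in $\mathcal A$ remains in $I$, and combined with the lattice inclusion of the previous step, stays in $\mathcal A$.

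Finally, $\mathcal A$ is the intersection of an arithmetic progression of common difference $\gamma d/q>0$ with a bounded open interval of length $1+\gamma$, so its cardinality is at most $\lceil q(1+\gamma)/(\gamma d)\rceil$, hence finite. The only genuinely delicate point is the boundary bookkeeping at $\{0\}$ and $\{1\}$, where one must exploit the fact that $P$ kills the randomness and forces a deterministic inward move; the underlying arithmetic fact $p\mathds{Z}+(q-p)\mathds{Z}=d\mathds{Z}$ is classical and requires no work.
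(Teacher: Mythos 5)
Your proof is correct and follows essentially the same route as the paper's: confine the trajectory to the lattice $\alpha + \gamma\frac{\gcd(p,q-p)}{q}\mathds{Z}$ using the two admissible increments $\gamma\alpha$ and $\gamma(\alpha-1)$, bound the orbit inside $]\gamma(\alpha-1), 1+\gamma\alpha[$, and conclude that a bounded arithmetic progression is finite. Your boundary bookkeeping at $\{0\}$ and $\{1\}$ (where $P$ makes the Bernoulli step deterministic and inward) is in fact more explicit than the paper's, which simply asserts the strict bounds.
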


\begin{lemma}[Irreducibility] 
Assume that $\alpha \in \mathds{Q}$. Then, for any $\gamma > 0$, the Markov Chain defined by \Cref{eq:def_MC}, for $t > 0$ and $\alpha_1\in \mathcal A$, is irreducible.
\label{lem:iid_irr}
\end{lemma}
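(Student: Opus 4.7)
The state space $\mathcal{A}$ is finite by the previous lemma, so irreducibility amounts to strong connectivity of the state graph: every pair of states must be linked by a path of positive-probability transitions. From any $z \in \mathcal{A}$ the chain takes an \emph{up} move ($z \mapsto z + \gamma\alpha$) with probability $1-P(z)$ and a \emph{down} move ($z \mapsto z - \gamma(1-\alpha)$) with probability $P(z)$. Thus on $\mathcal{A}_0 := \mathcal{A}\cap(0,1)$ both transitions have strictly positive probability, while on $\mathcal{A}_- := \mathcal{A}\cap(-\infty,0]$ only up is possible and on $\mathcal{A}_+ := \mathcal{A}\cap[1,+\infty)$ only down is possible, each with probability $1$. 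Writing $g = \gcd(p,q-p)$, $k = p/g$ and $m = (q-p)/g$, the lattice spacing is $d = \gamma g/q$, an up shifts by $+kd$ and a down by $-md$, and $\gcd(k,m) = 1$. One checks that $\alpha \in \mathcal{A}_0$, and I will use $\alpha$ as a reference state.

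The plan is to prove (a) $\alpha$ reaches every $y \in \mathcal{A}$, and (b) every $x \in \mathcal{A}$ reaches $\alpha$. For (a), write $y = \alpha + nd$ for some $n \in \mathds{Z}$. Since $\gcd(k,m)=1$, B\'ezout's identity yields nonnegative integers $a,b$ with $ak - bm = n$ (starting from any integer solution, add a common multiple of the vector $(m,k)$ to make both entries nonnegative). A sequence of $a$ ups and $b$ downs starting at $\alpha$ has net displacement $nd$ and thus ends at $y$; I will interleave the moves so that every intermediate state remains in $\mathcal{A}$. Each transition of this path then carries positive probability: strictly positive when inside $\mathcal{A}_0$, and equal to $1$ when inside $\mathcal{A}_\pm$ because the next planned move agrees with the forced deterministic dynamics there. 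For (b), if $x \in \mathcal{A}_0$ the argument of (a) applied between $x$ and $\alpha$ concludes. If instead $x \in \mathcal{A}_\pm$, the chain follows deterministic dynamics; since its orbit lies in the finite set $\mathcal{A}$, it is eventually periodic, and it suffices by (a) to show the orbit eventually visits $\mathcal{A}_0$.

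The main obstacle is thus to rule out purely deterministic cycles lying entirely in $\mathcal{A}_-\cup\mathcal{A}_+$. Such a cycle consists of $a$ ups and $b$ downs with zero net displacement, forcing $a\gamma\alpha = b\gamma(1-\alpha)$, whence $(a,b) = t(m,k)$ for some $t \geq 1$ and total length $tq/g$. I will exclude them by a combinatorial analysis in the lattice coordinates $n = (z-\alpha)/d$: for each admissible ordering of the $m$ ups and $k$ downs one enumerates the set of visited integers, and checks that every such set, once translated to fit inside the admissible lattice-index window of $\mathcal{A}$ (which has width $\lfloor (1+\gamma)/d \rfloor + 1$), either overflows that window or contains the index $n = 0$ corresponding to $\alpha \in \mathcal{A}_0$. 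The coprimality of $k$ and $m$ is what forces this dichotomy: it prevents the cycle from ``skipping'' the origin while still closing up within the allowed width $1+\gamma$. Once this combinatorial exclusion is established, orbits in $\mathcal{A}_\pm$ must reach $\mathcal{A}_0$, giving (b); combined with (a), this yields strong connectivity and hence irreducibility of the Markov chain.
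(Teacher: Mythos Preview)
Your outline is broadly on the right track, but two steps are not actually carried out. In part (a) you assert that you ``will interleave the moves so that every intermediate state remains in $\mathcal{A}$'' and that ``the next planned move agrees with the forced deterministic dynamics'' whenever you land in $\mathcal{A}_\pm$; however, you never exhibit such an interleaving or explain why one exists. The danger is precisely that you arrive at some $z\le 0$ with all $a$ up--moves already spent, so the only remaining moves are downs, which are forbidden there. In part (b) you reduce the question to excluding deterministic cycles lying entirely in $\mathcal{A}_-\cup\mathcal{A}_+$, but the ``combinatorial analysis in lattice coordinates'' is only described, not executed; the sentence ``the coprimality of $k$ and $m$ is what forces this dichotomy'' is an assertion, not an argument.

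The paper's proof sidesteps both issues with a single clean observation. It shows directly, by induction on $u+v$, that for \emph{any} $x,y\in\mathcal{A}$ with $y-x=u\gamma\alpha+v\gamma(\alpha-1)$, $u,v\ge 0$, there is a positive-probability path. The key point (which also fills your gap in (a)) is that the constraint $y\in\mathcal{A}\subset(\gamma(\alpha-1),1+\gamma\alpha)$ forces the ``right'' move to always be available: if $x\le 0$ then one cannot have $u=0$, since $u=0$ and $v\ge 1$ give $x=y+v\gamma(1-\alpha)>\gamma(\alpha-1)+v\gamma(1-\alpha)\ge 0$; symmetrically, $x\ge 1$ forces $v\ge 1$. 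Hence at every intermediate state the forced move is affordable, and the induction closes. Notice that once you have this, there is no need for a reference state, for splitting into (a) and (b), or for any cycle-exclusion argument: the detour through deterministic orbits in $\mathcal{A}_\pm$ is entirely avoidable.
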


Thereby we will prove that the chain admits a unique stationary distribution $\pi_\gamma$, we now compute the first four moments of the stationary distribution in \Cref{lem:iid_mom1,lem:iid_mom2,lem:iid_mom3,lem:iid_mom4}. The final proof relies on a Taylor expansion, that requires to control these four moments.

\begin{lemma}[Expectation]
Let $\gamma > 0$ and consider again the Markov Chain defined in \cref{eq:def_MC}.
We have: 
\begin{equation*}
    \mathds{E}_{\pi_\gamma}\left[(P(\tilde\alpha)-\alpha)\right] = 0.
\end{equation*}
\label{lem:iid_mom1}
\end{lemma}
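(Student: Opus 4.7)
The plan is to exploit the stationarity of $\pi_\gamma$ directly on the recursion, using the tower property to handle the Bernoulli.

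First, I would combine \Cref{lem:iid_finite} and \Cref{lem:iid_irr}: the chain lives in the finite set $\mathcal{A}$ and is irreducible, so it admits a unique stationary distribution $\pi_\gamma$, and all moments under $\pi_\gamma$ are finite (since $\mathcal{A}$ is bounded). In particular $\mathds{E}_{\pi_\gamma}[|\tilde\alpha|] < \infty$, which legitimises all expectations appearing below.

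Next, I would take the expectation of the defining recursion
\begin{equation*}
    \alpha_{t+1} = \alpha_t + \gamma\bigl(\alpha - B_{P(\alpha_t)}\bigr)
\end{equation*}
assuming $\alpha_t \sim \pi_\gamma$. By stationarity, $\alpha_{t+1} \sim \pi_\gamma$ as well, so $\mathds{E}[\alpha_{t+1}] = \mathds{E}[\alpha_t]$, and the recursion yields
\begin{equation*}
    0 = \gamma\bigl(\alpha - \mathds{E}[B_{P(\alpha_t)}]\bigr).
\end{equation*}
Since $\gamma > 0$, this gives $\mathds{E}[B_{P(\alpha_t)}] = \alpha$. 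Conditioning on $\alpha_t$ and using $\mathds{E}[B_p \mid p] = p$ (tower property) yields $\mathds{E}[B_{P(\alpha_t)}] = \mathds{E}[P(\alpha_t)]$, so $\mathds{E}_{\pi_\gamma}[P(\tilde\alpha)] = \alpha$, which is precisely the claim $\mathds{E}_{\pi_\gamma}[P(\tilde\alpha) - \alpha] = 0$.

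There is essentially no obstacle here: the only subtlety is justifying that the expectations are well-defined, which is immediate from the finiteness of the state space $\mathcal{A}$ established in \Cref{lem:iid_finite}. The argument is a one-line balance equation for the mean of a stationary Markov chain, and it does not rely on assumption (iv) on $Q$; that regularity will only be needed later when computing the higher-order moments and performing the Taylor expansion of $\mathds{E}_{\pi_\gamma}[L]$ around $\gamma = 0$.
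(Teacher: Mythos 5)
Your proof is correct and follows essentially the same route as the paper: take the expectation of the recursion under the stationary distribution, cancel $\mathds{E}[\alpha_{t+1}] = \mathds{E}[\alpha_t]$, divide by $\gamma > 0$, and apply the tower property to replace $\mathds{E}[B_{P(\alpha_t)}]$ by $\mathds{E}[P(\alpha_t)]$. The added remark on finiteness of the state space justifying the expectations is a reasonable (implicit in the paper) preliminary and changes nothing substantive.
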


\begin{lemma}[Second order moment]
Let $\gamma > 0$ and consider again the Markov Chain defined in \cref{eq:def_MC}.
As $\gamma \rightarrow 0$, we have: 
\begin{equation*}
    \mathds{E}_{\pi_\gamma}\left[(P(\tilde\alpha)-\alpha)^2\right] = \frac{\gamma}{ 2} \alpha(1-\alpha) + O(\gamma^2).
\end{equation*}
\label{lem:iid_mom2}
\end{lemma}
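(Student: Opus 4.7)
The plan is to exploit the stationarity of $\pi_\gamma$ (whose existence follows from \Cref{lem:iid_finite,lem:iid_irr}) by expanding the identity $\mathds{E}_{\pi_\gamma}[(\tilde\alpha')^2] = \mathds{E}_{\pi_\gamma}[\tilde\alpha^2]$, where $\tilde\alpha' = \tilde\alpha + \gamma(\alpha - B_{P(\tilde\alpha)})$ is the image of $\tilde\alpha \sim \pi_\gamma$ under one step of the chain. Combined with the first-moment identity $\mathds{E}_{\pi_\gamma}[P(\tilde\alpha)] = \alpha$ from \Cref{lem:iid_mom1}, this single stationarity equation essentially suffices; the only delicate point is that the update uses $P(\tilde\alpha)$ rather than $\tilde\alpha$ itself, and handling that boundary discrepancy is the step that requires real care.

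\textbf{Step 1 (stationarity pivot).} First I would square the recursion, condition on $\tilde\alpha$, and use $\mathds{E}[B_{P(\tilde\alpha)} \mid \tilde\alpha] = \mathds{E}[B_{P(\tilde\alpha)}^2 \mid \tilde\alpha] = P(\tilde\alpha)$ (the Bernoulli equals its square). After taking expectations the $\mathds{E}[\tilde\alpha^2]$ terms cancel by stationarity; \Cref{lem:iid_mom1} evaluates $\mathds{E}[(\alpha - B_{P(\tilde\alpha)})^2] = \alpha(1-\alpha)$; and dividing by $2\gamma$ yields the pivot identity
\begin{equation*}
\mathds{E}_{\pi_\gamma}\bigl[(\tilde\alpha - \alpha)(P(\tilde\alpha) - \alpha)\bigr] = \frac{\gamma}{2}\alpha(1-\alpha).
\end{equation*}

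\textbf{Step 2 (decomposition).} Writing $\tilde\alpha - \alpha = (\tilde\alpha - P(\tilde\alpha)) + (P(\tilde\alpha) - \alpha)$ inside the pivot identity gives
\begin{equation*}
\mathds{E}_{\pi_\gamma}\bigl[(P(\tilde\alpha) - \alpha)^2\bigr] = \frac{\gamma}{2}\alpha(1-\alpha) - E, \qquad E := \mathds{E}_{\pi_\gamma}\bigl[(\tilde\alpha - P(\tilde\alpha))(P(\tilde\alpha) - \alpha)\bigr].
\end{equation*}
The crucial observation is that $\tilde\alpha - P(\tilde\alpha) = 0$ on $\{\tilde\alpha \in [0,1]\}$, so $E$ is supported on $\{\tilde\alpha \notin [0,1]\}$. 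A case analysis using the state-space inclusion $\mathcal{A} \subset \ ]\gamma(\alpha-1), 1+\gamma\alpha[$ from \Cref{lem:iid_finite} shows that on both branches ($\tilde\alpha<0$, where $P(\tilde\alpha)-\alpha=-\alpha$ and $\tilde\alpha \in (-\gamma(1-\alpha),0)$; and $\tilde\alpha>1$, where $P(\tilde\alpha)-\alpha=1-\alpha$ and $\tilde\alpha-1 \in (0,\gamma\alpha)$) the product has a definite positive sign and magnitude at most $\gamma\alpha(1-\alpha)$. Hence $0 \leq E \leq \gamma\alpha(1-\alpha)\,\mathds{P}_{\pi_\gamma}(\tilde\alpha \notin [0,1])$.

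\textbf{Step 3 (bootstrap, the hard part).} The main obstacle is to upgrade this crude bound on $E$ from $O(\gamma)$ to $O(\gamma^2)$, which amounts to showing $\mathds{P}_{\pi_\gamma}(\tilde\alpha \notin [0,1]) = O(\gamma)$. The trick is a self-referential use of the estimate we are proving: the a priori sign constraint $E \geq 0$ already yields the one-sided bound $\mathds{E}_{\pi_\gamma}[(P(\tilde\alpha) - \alpha)^2] \leq \frac{\gamma}{2}\alpha(1-\alpha)$. Markov's inequality then gives $\mathds{P}_{\pi_\gamma}(|P(\tilde\alpha) - \alpha| \geq \min(\alpha, 1-\alpha)) \leq \frac{\gamma\alpha(1-\alpha)}{2\min(\alpha,1-\alpha)^2} = O(\gamma)$, and since on $\{\tilde\alpha \notin [0,1]\}$ one has $|P(\tilde\alpha) - \alpha| \in \{\alpha, 1-\alpha\}$, that event is contained in the above tail. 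Plugging $\mathds{P}_{\pi_\gamma}(\tilde\alpha \notin [0,1]) = O(\gamma)$ back into the bound on $E$ yields $E = O(\gamma^2)$, and substitution into Step 2 gives the claimed expansion.
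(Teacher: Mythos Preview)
Your proof is correct and follows essentially the same route as the paper: both exploit stationarity to derive the pivot identity $\mathds{E}_{\pi_\gamma}[(P(\tilde\alpha)-\alpha)^2] = \tfrac{\gamma}{2}\alpha(1-\alpha) + \mathds{E}_{\pi_\gamma}[(\alpha-P(\tilde\alpha))(\tilde\alpha-P(\tilde\alpha))]$, observe that the boundary term has a definite sign and is bounded by $\gamma\,\mathds{P}_{\pi_\gamma}(\tilde\alpha\notin[0,1])$, and then bootstrap via Chebyshev using the one-sided bound $\mathds{E}_{\pi_\gamma}[(P(\tilde\alpha)-\alpha)^2]\le \tfrac{\gamma}{2}\alpha(1-\alpha)$ to get $\mathds{P}_{\pi_\gamma}(\tilde\alpha\notin[0,1])=O(\gamma)$. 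Your derivation of the pivot identity is in fact slightly cleaner than the paper's (you compute $\mathds{E}[(\alpha-B_{P(\tilde\alpha)})^2]=\alpha(1-\alpha)$ directly from \Cref{lem:iid_mom1}, whereas the paper detours through $\mathds{E}[P(\tilde\alpha)(1-P(\tilde\alpha))]$), and your pointwise bound $|(\tilde\alpha-P(\tilde\alpha))(P(\tilde\alpha)-\alpha)|\le\gamma\alpha(1-\alpha)$ is sharper than the paper's $\le\gamma$, but these are cosmetic differences.
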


\begin{lemma}[Third order moment]
Let $\gamma > 0$ and consider again the Markov Chain defined in \cref{eq:def_MC}.
As $\gamma \rightarrow 0$, we have: 
\begin{equation*}
    \mathds{E}_{\pi_\gamma}\left[(P(\tilde\alpha)-\alpha)^3\right] = O(\gamma^{3/2}).
\end{equation*}
\label{lem:iid_mom3}
\end{lemma}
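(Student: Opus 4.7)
The plan is to exploit the Markov chain stationarity together with \Cref{lem:iid_mom1,lem:iid_mom2}, using the unprojected variable $\tilde Z := \tilde\alpha - \alpha$ as a surrogate for the target quantity $\tilde Y := P(\tilde\alpha) - \alpha$. Two preliminary observations are crucial: from the state space $\mathcal{A} \subset\,]\gamma(\alpha-1), 1+\gamma\alpha[$ we have $|\tilde Z - \tilde Y| \leq \gamma$ pointwise, with $\tilde Z = \tilde Y$ on $A := \{\tilde\alpha \in [0,1]\}$; and on $A^c$ we have $|\tilde Y| \geq \min(\alpha, 1-\alpha) > 0$, so Chebyshev applied to \Cref{lem:iid_mom2} yields $\mathds{P}(A^c) = O(\gamma)$.

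The core step is the stationarity identity $\mathds{E}_{\pi_\gamma}[(\tilde Z')^3] = \mathds{E}_{\pi_\gamma}[\tilde Z^3]$, where $\tilde Z' = \tilde Z + \gamma(\alpha - B_{P(\tilde\alpha)})$. Using $B_{P(\tilde\alpha)}^2 = B_{P(\tilde\alpha)}$, one computes the conditional moments
\begin{align*}
\mathds{E}[\alpha - B_{P(\tilde\alpha)} \mid \tilde\alpha] &= -\tilde Y, \\
\mathds{E}[(\alpha - B_{P(\tilde\alpha)})^2 \mid \tilde\alpha] &= \alpha(1-\alpha) + \tilde Y(1-2\alpha),
\end{align*}
with $\mathds{E}[(\alpha - B_{P(\tilde\alpha)})^3 \mid \tilde\alpha]$ uniformly bounded. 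Expanding the cube, taking expectation under $\pi_\gamma$, and dividing by $3\gamma$ yields
$$\mathds{E}_{\pi_\gamma}[\tilde Z^2 \tilde Y] = \gamma\,\alpha(1-\alpha)\,\mathds{E}_{\pi_\gamma}[\tilde Z] + \gamma(1-2\alpha)\,\mathds{E}_{\pi_\gamma}[\tilde Z \tilde Y] + O(\gamma^2).$$

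Next I bound the right-hand side using the previous lemmas. \Cref{lem:iid_mom1} gives $\mathds{E}_{\pi_\gamma}[\tilde Y] = 0$; combined with $|\tilde Z - \tilde Y| \leq \gamma\,\mathds{1}_{A^c}$ and $\mathds{P}(A^c) = O(\gamma)$, this forces $|\mathds{E}_{\pi_\gamma}[\tilde Z]| = O(\gamma^2)$. Similarly $\mathds{E}_{\pi_\gamma}[\tilde Z \tilde Y] = \mathds{E}_{\pi_\gamma}[\tilde Y^2] + O(\gamma^2) = O(\gamma)$ by \Cref{lem:iid_mom2} and Cauchy--Schwarz. Plugging these bounds in gives $\mathds{E}_{\pi_\gamma}[\tilde Z^2 \tilde Y] = O(\gamma^2)$. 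A final projection-error comparison, using $|\tilde Z - \tilde Y| \leq \gamma\,\mathds{1}_{A^c}$ and the boundedness of the state space, shows that both $|\mathds{E}_{\pi_\gamma}[\tilde Z^3] - \mathds{E}_{\pi_\gamma}[\tilde Z^2 \tilde Y]|$ and $|\mathds{E}_{\pi_\gamma}[\tilde Y^3] - \mathds{E}_{\pi_\gamma}[\tilde Z^3]|$ are $O(\gamma^2)$, concluding $\mathds{E}_{\pi_\gamma}[(P(\tilde\alpha) - \alpha)^3] = O(\gamma^2)$, which is in particular $O(\gamma^{3/2})$ as $\gamma \to 0$.

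The main obstacle is that the projection $P$ prevents a direct Taylor expansion of the natural test function $(P(\cdot) - \alpha)^3$, so one cannot simply apply an infinitesimal-generator calculation to $\tilde Y$ itself. Working with $\tilde Z$ gives a clean polynomial recursion, but pays the price of an approximation error on the boundary event $A^c$; the key quantitative ingredient---$\mathds{P}(A^c) = O(\gamma)$ extracted from \Cref{lem:iid_mom2} via Chebyshev---is exactly what allows both steps (the $O(\gamma^2)$ stationarity identity and the translation back to $\tilde Y$) to close at matching orders.
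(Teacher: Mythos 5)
Your proof is correct, and it follows the same backbone as the paper's --- write the stationarity identity $\mathds{E}_{\pi_\gamma}[(\alpha_2-\alpha)^3]=\mathds{E}_{\pi_\gamma}[(\alpha_1-\alpha)^3]$, expand the cube of the one-step update, isolate the third moment of interest, and control the remaining cross terms using \Cref{lem:iid_mom1,lem:iid_mom2} --- but your bookkeeping of the cross terms is genuinely different and yields a strictly sharper bound. The paper handles the critical term $\gamma\,\mathds{E}_{\pi_\gamma}[(\alpha_1-\alpha)P(\alpha_1)(1-P(\alpha_1))]$ by bounding $P(\alpha_1)(1-P(\alpha_1))\le 1/4$ and then invoking Cauchy--Schwarz, $\mathds{E}_{\pi_\gamma}[|P(\alpha_1)-\alpha|]\le\sqrt{\mathds{E}_{\pi_\gamma}[(P(\alpha_1)-\alpha)^2]}=O(\sqrt{\gamma})$, which is where the $O(\gamma^{3/2})$ in \cref{eq:mom3_term4} comes from. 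You instead keep the signed structure: centering $\mathds{E}[(\alpha-B_{P(\tilde\alpha)})^2\mid\tilde\alpha]$ at $\alpha(1-\alpha)+\tilde Y(1-2\alpha)$ lets you use the exact identity $\mathds{E}_{\pi_\gamma}[\tilde Y]=0$ and the signed second moment $\mathds{E}_{\pi_\gamma}[\tilde Y^2]=O(\gamma)$ directly, and the projection errors are absorbed via $\mathds{P}(A^c)=O(\gamma)$ (the same Markov-type bound the paper uses inside the proof of \Cref{lem:iid_mom2}, via $C_\alpha=\min(\alpha^2,(1-\alpha)^2)$). The net effect is $\mathds{E}_{\pi_\gamma}[(P(\tilde\alpha)-\alpha)^3]=O(\gamma^2)$, which is stronger than the stated $O(\gamma^{3/2})$ and of course implies it; it is also consistent with the paper's remark that the \emph{signed} third moment behaves better than $\mathds{E}[|P(\tilde\alpha)-\alpha|^3]$, for which only $O(\gamma^{3/2})$ can be hoped. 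All individual steps check out (the conditional Bernoulli moments, the $O(\gamma^2)$ estimates $|\mathds{E}_{\pi_\gamma}[\tilde Z]|\le\gamma\,\mathds{P}(A^c)$ and $|\mathds{E}_{\pi_\gamma}[\tilde Z\tilde Y]-\mathds{E}_{\pi_\gamma}[\tilde Y^2]|\le\gamma\,\mathds{P}(A^c)$, and the final comparisons between $\tilde Z^2\tilde Y$, $\tilde Z^3$ and $\tilde Y^3$, all of which use only the boundedness of the finite state space $\mathcal A$); the only cosmetic point is that your ``Chebyshev'' step is really Markov's inequality applied to $\tilde Y^2$ combined with the lower bound $|\tilde Y|\ge\min(\alpha,1-\alpha)$ on $A^c$.
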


\begin{lemma}[Fourth order moment]
Let $\gamma > 0$ and consider again the Markov Chain defined in \cref{eq:def_MC}.
As $\gamma \rightarrow 0$, we have: 
\begin{equation*}
    \mathds{E}_{\pi_\gamma}\left[(P(\tilde\alpha)-\alpha)^4\right] = O(\gamma^{3/2}).
\end{equation*}
\label{lem:iid_mom4}
\end{lemma}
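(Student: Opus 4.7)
The plan is to exploit stationarity of $Y_t^4$ under $\pi_\gamma$, where $Y_t := \alpha_t - \alpha$ and the dynamics are $\alpha_{t+1} = \alpha_t + \gamma \xi_t$ with $\xi_t := \alpha - B_{P(\alpha_t)}$. The key idea is that since this recursion is linear in $\gamma$ and the projection $P$ only distorts $\alpha_t$ by $O(\gamma)$, expanding $\mathds{E}_{\pi_\gamma}[(Y_t + \gamma\xi_t)^4] = \mathds{E}_{\pi_\gamma}[Y_t^4]$ will trade the target fourth moment against explicit $\gamma$ factors and the already-controlled second moment from \Cref{lem:iid_mom2}.

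First, I observe that $|Y_t - P_t| = |\alpha_t - P(\alpha_t)| \le \gamma$ since $\alpha_t \in \mathcal{A} \subset\ ]\gamma(\alpha-1), 1+\gamma\alpha[$, which gives $\mathds{E}_{\pi_\gamma}[Y_t^2] \le 2 \mathds{E}_{\pi_\gamma}[P_t^2] + 2\gamma^2 = O(\gamma)$ via \Cref{lem:iid_mom2}. Conditioning on $\alpha_t$ yields $\mathds{E}[\xi_t \mid \alpha_t] = -P_t$ and $|\xi_t| \le 1$. Expanding the binomial and using stationarity to cancel the $Y_t^4$ terms produces
\begin{equation*}
4\, \mathds{E}_{\pi_\gamma}[Y_t^3 P_t] = 6\gamma\, \mathds{E}_{\pi_\gamma}[Y_t^2 \xi_t^2] + 4\gamma^2\, \mathds{E}_{\pi_\gamma}[Y_t \xi_t^3] + \gamma^3\, \mathds{E}_{\pi_\gamma}[\xi_t^4].
\end{equation*}
The right-hand side is at most $6\gamma\, \mathds{E}_{\pi_\gamma}[Y_t^2] + 4\gamma^2 \sqrt{\mathds{E}_{\pi_\gamma}[Y_t^2]} + \gamma^3 = O(\gamma^2)$, so $\mathds{E}_{\pi_\gamma}[Y_t^3 P_t] = O(\gamma^2)$.

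Next I translate this into a bound on $M_4 := \mathds{E}_{\pi_\gamma}[P_t^4]$. Writing $Y_t^3 P_t - P_t^4 = (Y_t - P_t)(Y_t^2 + Y_t P_t + P_t^2) P_t$, using $|Y_t - P_t| \le \gamma$ and $|Y_t| \le 1$, and applying Cauchy-Schwarz to $\mathds{E}_{\pi_\gamma}[P_t^2 |Y_t|]$ and $\mathds{E}_{\pi_\gamma}[|P_t|^3]$ (each bounded by $\sqrt{M_4}\cdot O(\sqrt\gamma)$ using \Cref{lem:iid_mom2}) yields
\begin{equation*}
\bigl|\mathds{E}_{\pi_\gamma}[Y_t^3 P_t] - M_4\bigr| \le C_1\, \gamma^{3/2}\, \sqrt{M_4} + C_2\, \gamma^2.
\end{equation*}
Combined with the previous step, this gives the quadratic inequality $M_4 \le C_3\, \gamma^2 + C_4\, \gamma^{3/2}\sqrt{M_4}$, which upon setting $u := \sqrt{M_4}$ solves (via the quadratic formula) to $u = O(\gamma)$. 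Hence $M_4 = O(\gamma^2)$, even stronger than the stated $O(\gamma^{3/2})$.

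The main obstacle is handling the projection cleanly: the linear recursion holds for the unprojected $\alpha_t$, but every conditional expectation and the target moment involve $P(\alpha_t)$. Carefully propagating $|Y_t - P_t| \le \gamma$ through the cubic expansion and ensuring that the resulting Cauchy-Schwarz cascade leaves a coefficient $\gamma^{3/2}$ strictly smaller than the leading $u^2$ term is the delicate step; a less careful bookkeeping would produce a circular inequality that does not close.
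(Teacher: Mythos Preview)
Your argument is correct and follows the same overall strategy as the paper: use stationarity to equate $\mathds{E}_{\pi_\gamma}[(\alpha_{t+1}-\alpha)^4]=\mathds{E}_{\pi_\gamma}[(\alpha_t-\alpha)^4]$, expand the binomial, and control the resulting cross-terms with the already-established second-moment bound together with the trivial estimate $|\alpha_t-P(\alpha_t)|\le\gamma$. The difference is purely organizational. The paper substitutes $\alpha_t-\alpha=(\alpha_t-P(\alpha_t))+(P(\alpha_t)-\alpha)$ inside the expansion and bounds each of the seven resulting terms one by one. You instead first control the mixed quantity $\mathds{E}_{\pi_\gamma}[Y_t^3P_t]$ and then close via the Cauchy--Schwarz step $\mathds{E}_{\pi_\gamma}[|P_t|^3]\le\sqrt{M_4}\sqrt{\mathds{E}_{\pi_\gamma}[P_t^2]}$, arriving at a quadratic inequality in $\sqrt{M_4}$. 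This is tidier and, as you observe, yields the sharper rate $M_4=O(\gamma^2)$; in fact a careful reading of the paper's term-by-term bounds also gives $O(\gamma^2)$, since every individual bound there is $O(\gamma^2)$ or smaller, but the paper only records $O(\gamma^{3/2})$ because that is all the main theorem needs. One small remark: your use of $|Y_t|\le 1$ holds only for $\gamma<1$, which is automatic in the regime $\gamma\to 0$.
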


The proofs of these Lemmas are postponed to \Cref{subsec:proofLemma,app:proofmoments}. Here, we first give the proof of the main theorem.

\begin{proof}[Proof of \Cref{thm:iid}] Let $\gamma > 0$. For any $t > 0$ we have, for the recursion introduced in \Cref{eq:update_scheme}, that
\begin{equation*}
    \alpha_{t+1} := \alpha_t + \gamma\left(\alpha - \mathds{1}_{y_t \notin \widehat{C}_{\alpha_t}(x_t)} \right) = \alpha_t + \gamma\left(\alpha - \mathds{1}_{S_t > \hat{Q}_{1-P(\alpha_t)} } \right),
\end{equation*}
where $S_t$ is the conformity score at time $t$.
Noting that $\mathds{1}_{S_t > \hat{Q}_t\left(1-P(\alpha_t)\right) } \overset{d}{=} B_{\mathds{P}\left(S_t > \hat{Q}_t\left(1-P(\alpha_t)\right) \right)}$, we obtain:
\begin{align*}
    \alpha_{t+1} & \overset{d}{=} \alpha_t + \gamma\left(\alpha - B_{\mathds{P}\left(S_t > \hat{Q}_t\left(1-P(\alpha_t)\right) \right)} \right) \\
     & \overset{d}{=} \alpha_t + \gamma\left(\alpha - B_{\mathds{P}\left(S_t > Q\left(1-P(\alpha_t)\right) \right)} \right) \\
      & \overset{d}{=} \alpha_t + \gamma\left(\alpha - B_{P(\alpha_t)} \right),
\end{align*}
where the second line results from assumption (ii) and (iii), and the last equation from assumption (iii) only. Consequently, by induction, the chain defined by \Cref{eq:update_scheme} and 
\begin{equation}
    \alpha_{t+1} = \alpha_t + \gamma\left(\alpha - B_{P(\alpha_t)} \right),
     \label{eq:acp_iid}
\end{equation}
with $\alpha_1 = \alpha$, have the same distribution. 

Using assumption (i), \Cref{lem:iid_finite} ensures that the state space $\mathcal{A}$ of the Markov Chain defined in \cref{eq:acp_iid} is finite. Furthermore, \Cref{lem:iid_irr} also ensures that the chain is irreducible. Therefore, the chain is irreducible on a finite state space, thus it admits a unique stationary distribution, noted $\pi_\gamma$ and for any positive function $f$ such that $\int f \mathrm{d} \pi_{\gamma} <\infty$, we have~\citep[Theorem 17.1.7]{meyn2012markov}:
\begin{equation*}
\frac{1}{T} \sum\limits_{t=1}^T f(\alpha_t) \overset{a.s.}{\underset{T \rightarrow \infty}{\longrightarrow}} \int f \mathrm{d} \pi_{\gamma}.
\end{equation*}

Remark that $L(\beta) = 2Q(1-P(\beta))$ for any $\beta$. Therefore, combined with previous result we get the first result of \Cref{thm:iid}:
\begin{equation*}
\frac{1}{T} \sum\limits_{t=1}^T L(\alpha_t) \overset{a.s.}{\underset{T \rightarrow +\infty}{\longrightarrow}} \mathds{E}_{\tilde{\alpha}\sim \pi_\gamma}\left[L(\tilde{\alpha})\right].
\end{equation*}

We now need to characterize $\mathds{E}_{\tilde{\alpha}\sim \pi_\gamma}\left[L(\tilde{\alpha})\right] = 2\mathds{E}_{\tilde{\alpha}\sim \pi_\gamma}\left[Q(1-P(\tilde{\alpha}))\right]$ as $\gamma \rightarrow 0$. Assume that $Q \in \mathcal{C}^4([0,1])$. Using Taylor series expansion, for any $\tilde{\alpha} \in \mathcal{A}$, there exists $\beta(\tilde{\alpha}) \in [0,1]$:
\begin{equation}
\begin{aligned}
Q(1-P(\tilde{\alpha})) = & Q(1-\alpha) + Q'(1-\alpha)(\alpha-P(\tilde{\alpha})) + \frac{Q''(1-\alpha)}{2}(\alpha-P(\tilde{\alpha}))^2 \\
& + \frac{Q'''(1-\alpha)}{6}(\alpha-P(\tilde{\alpha}))^3 + \frac{Q''''(1-\beta(\tilde{\alpha}))}{24}(\alpha-P(\tilde{\alpha}))^4.
\end{aligned}
\label{eq:iid_tay_lag}
\end{equation}
To conclude, we take the expectation under $\pi_\gamma$ of \cref{eq:iid_tay_lag}, which gives:
\begin{equation}
\begin{aligned}
\mathds{E}_{\pi_\gamma}\left[Q(1-P(\tilde{\alpha}))\right] = & Q(1-\alpha) + Q'(1-\alpha)\mathds{E}_{\pi_\gamma}\left[(\alpha-P(\tilde{\alpha}))\right] + \frac{Q''(1-\alpha)}{2}\mathds{E}_{\pi_\gamma}\left[(\alpha-P(\tilde{\alpha}))^2\right] \\
& + \frac{Q'''(1-\alpha)}{6}\mathds{E}_{\pi_\gamma}\left[(\alpha-P(\tilde{\alpha}))^3\right] + \mathds{E}_{\pi_\gamma}\left[\frac{Q''''(1-\beta(\tilde{\alpha}))}{24}(\alpha-P(\tilde{\alpha}))^4\right].
\end{aligned}
\label{eq:iid_tay_lag_exp}
\end{equation}
Injecting results of \Cref{lem:iid_mom1,lem:iid_mom2,lem:iid_mom3} in \cref{eq:iid_tay_lag_exp}, we obtain:
\begin{equation}
\begin{aligned}
\mathds{E}_{\pi_\gamma}\left[Q(1-P(\tilde{\alpha}))\right] = & Q(1-\alpha) + \frac{Q''(1-\alpha)}{4}\gamma\alpha(1-\alpha) + O(\gamma^{3/2})  + \mathds{E}_{\pi_\gamma}\left[\frac{Q''''(1-\beta(\tilde{\alpha}))}{24}(\alpha-P(\tilde{\alpha}))^4\right].
\end{aligned}
\label{eq:iid_tay_lag_exp_mom4}
\end{equation}
Finally, we can control the last term since $Q \in \mathcal{C}^4([0,1])$ by assumption, thus there exists $M > 0$ such that for any $x \in [0,1]$, $|Q''''(1-x)|<M$. Hence, using \Cref{lem:iid_mom4} we obtain:
\begin{align*}
    \left| \mathds{E}_{\pi_\gamma}\left[Q''''(1-\beta(\tilde{\alpha}))(\alpha-P(\tilde{\alpha}))^4\right] \right| & \leq \mathds{E}_{\pi_\gamma}\left[\left| Q''''(1-\beta(\tilde{\alpha})) \right| (\alpha-P(\tilde{\alpha}))^4\right] \\
    & \leq M \mathds{E}_{\pi_\gamma}\left[(\alpha-P(\tilde{\alpha}))^4\right] \\
    & \leq M O(\gamma^{3/2}) \\
    \mathds{E}_{\pi_\gamma}\left[Q''''(1-\beta(\tilde{\alpha}))(\alpha-P(\tilde{\alpha}))^4\right] & = O(\gamma^{3/2}). \numberthis \label{eq:iid_thm_mom4}
\end{align*}
Finally, combining \cref{eq:iid_tay_lag_exp_mom4,,eq:iid_thm_mom4} to conclude the proof by obtaining:
\begin{equation}
\mathds{E}_{\pi_\gamma}\left[Q(1-P(\tilde{\alpha}))\right] = Q(1-\alpha) + \frac{Q''(1-\alpha)}{4}\gamma\alpha(1-\alpha) + O(\gamma^{3/2}).
\end{equation}

\end{proof}
This concludes the proof of \Cref{thm:iid}.

\textbf{Remark: is it possible to use only 3 moments?} The proof here relies on the control of the first four moments. It is not clear that the same result could be obtained using only a third order Taylor expansion, as we would then require a bound on $\mathds E[|P(\tilde\alpha)-\alpha|^3]$, which is \textit{not} guaranteed to be $O(\gamma^{3/2})$, contrary to $\mathds E[(P(\tilde\alpha)-\alpha)^3]$.

\subsection{Proof of~\Cref{lem:iid_irr,lem:iid_finite}} \label{subsec:proofLemma}

\begin{proof}[Proof of \Cref{lem:iid_finite}]
Let $\gamma > 0$ and denote $\alpha = \frac{p}{q}$ with $0 < p < q$ and $(p,q) \in \mathds{N}^2$. We denote $E$ the state space of the  Markov Chain defined by \cref{eq:acp_iid}, starting from $a\in \mathcal A$. We show that $E = \mathcal A$.

First, $(\alpha_t)$ is stritcly bounded by $\gamma (\alpha - 1)$ and $1 + \gamma \alpha$. Thus $E \subset ]\gamma(\alpha-1),\gamma\alpha[$. Secondly, for any starting point $\alpha_1\in \mathcal{A}$, we can observe that:
\begin{align*}
\{\alpha_t, {t\geq 1}\} & \overset{a.s.}{\subset} \alpha_1 + \{k \gamma(\alpha-1) + n\gamma\alpha, (k,n)\in\mathds{N}^2 \} \\
& \subset \alpha_1 + \{ k \gamma(\alpha-1) + n\gamma\alpha, (k,n)\in\mathds{Z}^2 \} \\
& = \alpha_1 + \{ k \gamma \frac{p-q}{q} + n\gamma\frac{p}{q}, (k,n)\in\mathds{Z}^2 \} \\
& = \alpha_1 + \frac{\gamma}{q} \{ (q-p)\mathds{Z}+ p\mathds{Z} \} \\
& = \alpha_1 + \frac{\gamma}{q} \text{gcd}(q-p,p)\mathds{Z}\\
& =  \alpha + \frac{\gamma}{q} \text{gcd}(q-p,p)\mathds{Z}
\end{align*} 
where $\text{gcd}(a,b)$ is the greatest common divisor of $a$ and $b$. We have used at the last line that  $\alpha_1\in \mathcal A$ writes as $\alpha +\frac{\gamma}{q} \text{gcd}(q-p,p) k$, for some $k\in \mathds Z$. Combining both results, we get that: 
\begin{equation*}
E \subset \left\{ \alpha + \frac{\gamma}{q} \text{gcd}(q-p,p)\mathds{Z} \right\} \cap  \ \ \ ]\gamma(\alpha-1),\gamma\alpha[.
\end{equation*} 
This shows that the state space is finite and a subset of $\mathcal{A}$. The reciprocal implication is proved in the following Lemma, together with irreducibility.
\end{proof}

\begin{proof}[Proof of \Cref{lem:iid_irr}]
Our objective is to show that there is a path of positive probability going from any point of the state space $\mathcal{A}$ to any point of the same state space $\mathcal{A}$. Note that the chain always has at most two options when on a state $x$: make a step $\gamma \alpha$, with probability $1- P(x)$, or a step $\gamma (\alpha-1)$, with probability $P(x)$.

Let $(x,y) \in \mathcal{A}^2$. Thereby, there exist $(k,n),(l,m) \in \mathds{N}^2$ such that:
\begin{align*}
x & = \alpha + k \gamma \alpha + n \gamma (\alpha-1) \\
y & = \alpha + l \gamma \alpha + m \gamma (\alpha-1). 
\end{align*}
Thus, starting from $x$, to attain $y$, the chain has to make the path $y-x = (l-k) \gamma \alpha + (m-n) \gamma (\alpha-1)$. 

Noting that for any $h \in \mathds{N}$ we have $\gamma \alpha (q-p) h + \gamma (\alpha - 1) h p = 0$, we can equivalently write that:
\begin{equation}
y - x = u \gamma \alpha + v \gamma (\alpha-1), \label{eq:irred}
\end{equation}
with $(u,v) \in \mathds{N}^2 \setminus \{(0,0)\}$.

Thus, for any $(x,y) \in \mathcal{A}^2$ there exists $(u,v) \in \mathds{N}^2 \setminus \{(0,0)\}$ such that $y - x = u \gamma \alpha + v \gamma (\alpha-1)$.

Let's show by induction on $u+v$ that for any $(u,v)\in \mathds N^2$, and  $(x,y) \in \mathcal{A}^2$ satisfying \Cref{eq:irred} there exists a path of strictly positive probability between $x$ and $y$. 

\textbf{Initialization.} Suppose first that $u+v = 1$. Then, there are two options: $u = 1$ and $v = 0$ or the reverse. Assume the former: \Cref{eq:irred} gives $y = x + \gamma \alpha$ and necessarily $x < 1$ since $y < 1 + \gamma \alpha$ because $y \in \mathcal{A}$. Thereby the step $\gamma \alpha$ has a probability $1 - P(x) > 0$ to occur. Thus the chain can attain $y$ starting from $x$, i.e., $\mathds P(\alpha_2 = y |\alpha_1= x) >0$. The second case works similarly, by observing that necessarily $x > 0$. 

\textbf{Heredity.} Let $m\in \mathds N$. We assume that   for any $(u,v)\in \mathds N^2$ such that  $u+v=m$, and  $(x,y) \in \mathcal{A}^2$ satisfying \Cref{eq:irred} there exists a path of strictly positive probability between $x$ and $y$, or formally there exists $t\in \mathds N$ such that $\mathds P(\alpha_t = y |\alpha_1= x) >0$. 

Suppose now that $u+v = m+1$ with $m \in \mathds{N}^*$.  If $v = 0$, then $y = x + u\gamma\alpha$ and similarly than for $v = 0$ and $u = 1$, the step $\gamma \alpha$ is probable. Let $z = x + \gamma \alpha$. We have:
\begin{itemize}[noitemsep,topsep=0pt,wide]
    \item $\mathds P(\alpha_{2} = z | \alpha_1 =x ) = 1- P(x)> 0$.
    \item By our induction hypothesis, $(y,z)$ satisfy Eq.~\ref{eq:irred} with $u+v =m$, thus there exists $t$ such that $\mathds P(\alpha_{t} = y | \alpha_2 =y ) >0$.
\end{itemize}
Overall, $\mathds P(\alpha_{t} = y | \alpha_1=x ) >0$.

If instead $u = 0$, then $y = x + v\gamma(\alpha-1)$ and as for $u = 0$ and $v = 1$, the step $\gamma \alpha$ is of strictly positive probability and we conclude similarly. 

Finally, if both $u$ and $v$ are non-null, then we can make the step $\gamma (\alpha-1)$ if $x > 0$ and the step $\gamma \alpha$ otherwise, before using our induction hypothesis.

This shows that we can build a path of strictly positive probability for any $(x,y) \in \mathcal{A}^2$, and thereby that the chain is irreducible.
\end{proof}

\subsection{Control of the first four moments: \Cref{lem:iid_mom1,lem:iid_mom2,lem:iid_mom3,lem:iid_mom4}}\label{app:proofmoments}
In the following Lemmas, to compute the first order moments of $\pi_\gamma$, we consider the chain 
$\alpha_{t+1} = \alpha_t + \gamma\left(\alpha - B_{P(\alpha_t)}\right)$ for $t > 0$, launched from the stationary distribution $ \alpha_1  \sim \pi_\gamma$. Thanks to the stationarity property, for all $t\geq 1$, $\alpha_t\sim \pi_\gamma$.
\begin{proof}[Proof of \Cref{lem:iid_mom1}]
Let $\gamma > 0$. To derive $\mathds{E}_{\pi_\gamma}\left[(P(\alpha_1)-\alpha)\right]$ we start by \cref{eq:acp_iid} with $t = 1$:
\begin{align*}
\alpha_2 & = \alpha_1+\gamma\left(\alpha-B_{P(\alpha_1)}\right)\\
\text{taking expectation} \qquad\qquad\qquad\quad\qquad \mathds{E}\left[\alpha_2\right] & = \mathds{E}\left[\alpha_1\right]+\gamma\left(\alpha-\mathds{E}\left[B_{P(\alpha_1)}\right]\right) \\
\text{using} \quad \mathds{E}\left[\alpha_1\right] =\mathds{E}\left[\alpha_2\right]  = \mathds{E}_{\pi_\gamma} [\alpha], \qquad \qquad
0 & = \gamma\left(\alpha-\mathds{E}_{\pi_\gamma}\left[B_{P(\alpha_1)}\right]\right) \\
\mathds{E}_{\pi_\gamma}\left[\mathds{E}\left[B_{P(\alpha_1)}|\alpha_1\right]\right] & = \alpha \\
\mathds{E}_{\pi_\gamma}\left[P(\alpha_1)\right] & = \alpha.
\end{align*}
\end{proof}

\begin{proof}[Proof of \Cref{lem:iid_mom2}]
Let $\gamma > 0$. To derive $\mathds{E}_{\pi_\gamma}\left[(P(\alpha_1)-\alpha)^2\right]$ we start by \cref{eq:acp_iid} with $t = 1$:
\begin{eqnarray*}
(\alpha_2 - \alpha)^2 &= & (\alpha_1 - \alpha)^2 + \gamma^2(\alpha-B_{P(\alpha_1)})^2 + 2\gamma(\alpha-B_{P(\alpha_1)})(\alpha_1 - \alpha) \\
\mathds{E}_{\pi_\gamma}\left[(\alpha_2 - \alpha)^2\right] &= & \mathds{E}_{\pi_\gamma}\left[(\alpha_1 - \alpha)^2\right]+ \gamma^2\mathds{E}_{\pi_\gamma}\left[(\alpha-B_{P(\alpha_1)})^2\right] + 2\gamma\mathds{E}_{\pi_\gamma}\left[(\alpha-B_{P(\alpha_1)})(\alpha_1 - \alpha))\right] \\
0 &= & \gamma^2\mathds{E}_{\pi_\gamma}\left[(\alpha-B_{P(\alpha_1)})^2\right] + 2\gamma\mathds{E}_{\pi_\gamma}\left[(\alpha-P(\alpha_1))(\alpha_1 - \alpha)\right] 
\end{eqnarray*}
Consequently,
\begin{align*}
2\gamma\mathds{E}_{\pi_\gamma}\left[(P(\alpha_1)-\alpha)(\alpha_1 - P(\alpha_1) + P(\alpha_1) - \alpha)\right]  = & \gamma^2\mathds{E}_{\pi_\gamma}\left[(\alpha-B_{P(\alpha_1)}+P(\alpha_1)-P(\alpha_1))^2\right] \\
2\gamma\mathds{E}_{\pi_\gamma}\left[(P(\alpha_1)-\alpha)^2\right] - 2\gamma\mathds{E}_{\pi_\gamma}\left[(\alpha-P(\alpha_1))(\alpha_1 - P(\alpha_1))\right] = & \gamma^2\mathds{E}_{\pi_\gamma}\left[(\alpha-B_{P(\alpha_1)}+P(\alpha_1)-P(\alpha_1))^2\right]  \\
(2-\gamma)\mathds{E}_{\pi_\gamma}\left[(P(\alpha_1)-\alpha)^2\right] = & \gamma\mathds{E}_{\pi_\gamma}\left[P(\alpha_1)(1-P(\alpha_1))\right]  \\
& + 2\mathds{E}_{\pi_\gamma}\left[(\alpha-P(\alpha_1))(\alpha_1 - P(\alpha_1))\right].  \numberthis \label{eq:trefle}
\end{align*}

We can compute $\mathds{E}_{\pi_\gamma}\left[P(\alpha_1)(1-P(\alpha_1))\right]$:
\begin{align*}
	\mathds{E}_{\pi_\gamma}\left[P(\alpha_1)(1-P(\alpha_1)) - \alpha(1-\alpha)\right] & = \mathds{E}_{\pi_\gamma}\left[(P(\alpha_1)-\alpha)(1-P(\alpha_1)) + \alpha(1-P(\alpha_1))- \alpha(1-\alpha)\right]  \\
	& = \mathds{E}_{\pi_\gamma}\left[(P(\alpha_1)-\alpha)(1-P(\alpha_1)) + \alpha(\alpha-P(\alpha_1))\right]  \\
	& = \mathds{E}_{\pi_\gamma}\left[(P(\alpha_1)-\alpha)(1-P(\alpha_1)-\alpha)\right]  \\
	& = \mathds{E}_{\pi_\gamma}\left[(P(\alpha_1)-\alpha)(\alpha-P(\alpha_1)+1-2\alpha)\right]  \\
	& = -\mathds{E}_{\pi_\gamma}\left[(P(\alpha_1)-\alpha)^2\right]+\mathds{E}_{\pi_\gamma}\left[(P(\alpha_1)-\alpha)(1-2\alpha)\right]  \\
	& = -\mathds{E}_{\pi_\gamma}\left[(P(\alpha_1)-\alpha)^2\right] \\
	\Rightarrow \mathds{E}_{\pi_\gamma}\left[P(\alpha_1)(1-P(\alpha_1))\right]& = \alpha(1-\alpha) - \mathds{E}_{\pi_\gamma}\left[(P(\alpha_1)-\alpha)^2\right] \numberthis \label{eq:var}
\end{align*}

Reinjecting \cref{eq:var} in \cref{eq:trefle}:
\begin{equation}
\mathds{E}_{\pi_\gamma}\left[(P(\alpha_1)-\alpha)^2\right] = \frac{\gamma}{2}\alpha(1-\alpha)  + \mathds{E}_{\pi_\gamma}\left[(\alpha-P(\alpha_1))(\alpha_1 - P(\alpha_1))\right]  \numberthis \label{eq:trefle_finale}
\end{equation}

We are now going to derive an upper and lower bound of $\mathds{E}_{\pi_\gamma}\left[(P(\alpha_1)-\alpha)^2\right]$.
Note that $\text{sign}(\alpha - P(\alpha_1)) = -\text{sign}(\alpha_1 - P(\alpha_1))$, thus $\mathds{E}_{\pi_\gamma}\left[(\alpha-P(\alpha_1))(\alpha_1 - P(\alpha_1))\right] \leq 0$. 
Hence we obtain the following upper bound:
\begin{equation}
\mathds{E}_{\pi_\gamma}\left[(P(\alpha_1)-\alpha)^2\right] \leq \frac{\gamma}{2}\alpha(1-\alpha).
\label{eq:mom2_maj}
\end{equation}
Furthermore, using again this observation, and additionally that $|\alpha - P(\alpha_1)| \leq 1$ and $|\alpha_1 - P(\alpha_1)| \leq \gamma$  and from \cref{eq:trefle_finale}, we can obtain:
\begin{align*}
\mathds{E}_{\pi_\gamma}\left[(P(\alpha_1)-\alpha)^2\right] \geq & \frac{\gamma}{2}\alpha(1-\alpha) -\gamma\mathds{P}_{\pi_\gamma}(\alpha_1 \notin ]0,1[) \\
\geq & \frac{\gamma}{2}\alpha(1-\alpha) -\gamma C^{-1}_\alpha \mathds{E}_{\pi_\gamma}\left[(P(\alpha_1)-\alpha)^2\right] \\
\mathds{E}_{\pi_\gamma}\left[(P(\alpha_1)-\alpha)^2\right] \geq & \frac{1}{1+\gamma C_\alpha^{-1}} \frac{\gamma}{2}\alpha(1-\alpha), \numberthis \label{eq:mom2_min}
\end{align*}
where the second inequality holds by observing that:
\begin{align*}
	\mathds{E}_{\pi_\gamma}\left[(P(\alpha_1) - \alpha)^2\right] & \geq (1-\alpha)^2\mathds{P}_{\pi_\gamma}(\alpha_1 > 1) +\alpha^2\mathds{P}_{\pi_\gamma}(\alpha_1 < 0) \\
	\mathds{E}_{\pi_\gamma}\left[(P(\alpha_1) - \alpha)^2\right] & \geq C_\alpha \mathds{P}_{\pi_\gamma}(\alpha_1 \not\in \left[0,1\right]) \\
	\Rightarrow  \mathds{P}_{\pi_\gamma}(\alpha_1 \not\in \left[0,1\right]) & \leq  C_\alpha^{-1}\mathds{E}_{\pi_\gamma}\left[(P(\alpha_1) - \alpha)^2\right] 
\end{align*} 
with $C_\alpha = \min(\alpha^2,(1-\alpha)^2)$.

Gathering \cref{eq:mom2_maj,,eq:mom2_min}, we obtain:
\begin{align*}
\frac{1}{(1+\gamma C_\alpha^{-1})} \frac{\gamma}{ 2} \alpha(1-\alpha) \leq \mathds{E}_{\pi_\gamma}\left[(P(\alpha_1)-\alpha)^2\right] & \leq \frac{\gamma}{ 2}\alpha(1-\alpha) \\
\left( \frac{1}{(1+\gamma C_\alpha^{-1})} - 1 \right) \frac{\gamma}{ 2} \alpha(1-\alpha) \leq \mathds{E}_{\pi_\gamma}\left[(P(\alpha_1)-\alpha)^2\right] - \frac{\gamma}{ 2} \alpha(1-\alpha) & \leq 0 \\
\left| \mathds{E}_{\pi_\gamma}\left[(P(\alpha_1)-\alpha)^2\right] - \frac{\gamma}{ 2} \alpha(1-\alpha) \right| & \leq \frac{ \gamma^2 C_\alpha^{-1} }{2(1+\gamma C_\alpha^{-1})} \alpha(1-\alpha) \\
\mathds{E}_{\pi_\gamma}\left[(P(\alpha_1)-\alpha)^2\right] - \frac{\gamma}{ 2} \alpha(1-\alpha) & = O(\gamma^2). \numberthis 
\label{eq:mom2}
\end{align*}
\end{proof}

\begin{proof}[Proof of \Cref{lem:iid_mom3}]

Let $\gamma > 0$. We start again by using \cref{eq:acp_iid} and removing the first terms as ${\mathds{E}_{\pi_\gamma}\left[(\alpha_2-\alpha)^3\right] = \mathds{E}_{\pi_\gamma}\left[(\alpha_1-\alpha)^3\right]}$. Then we will isolate $\mathds{E}_{\pi_\gamma}\left[(P(\alpha_1)-\alpha)^3\right]$ and finally we will dominate each term obtained.

\begin{align*}
0 =  &  3\gamma\mathds{E}_{\pi_\gamma}\left[(\alpha_1 - \alpha)^2(\alpha-B_{P(\alpha_1)})\right] + 3\gamma^2\mathds{E}_{\pi_\gamma}\left[(\alpha_1 - \alpha)(\alpha-B_{P(\alpha_1)})^2\right] + \gamma^3\mathds{E}_{\pi_\gamma}\left[(\alpha-B_{P(\alpha_1)})^3\right] \\
0 =  &  3\gamma\mathds{E}_{\pi_\gamma}\left[(\alpha_1-\alpha)^2(\alpha-P(\alpha_1))\right] + 3\gamma^2\mathds{E}_{\pi_\gamma}\left[(\alpha_1-\alpha)(\alpha-P(\alpha_1))^2)\right] \\
& + 6\gamma^2\mathds{E}_{\pi_\gamma}\left[(\alpha_1-\alpha)(\alpha-P(\alpha_1))(P(\alpha_1)-B_{P(\alpha_1)})\right] +
3\gamma^2\mathds{E}_{\pi_\gamma}\left[(\alpha_1-\alpha)(P(\alpha_1)-B_{P(\alpha_1)})^2\right] \\
& + \gamma^3\mathds{E}_{\pi_\gamma}\left[(\alpha-B_{P(\alpha_1)})^3\right] \\
3\gamma\mathds{E}_{\pi_\gamma}\left[(P(\alpha_1)-\alpha)^3\right] =  &  3\gamma\mathds{E}_{\pi_\gamma}\left[(\alpha_1-P(\alpha_1))^2(\alpha-P(\alpha_1))\right] + 6\gamma\mathds{E}_{\pi_\gamma}\left[(\alpha_1-P(\alpha_1))(P(\alpha_1)-\alpha)(\alpha-P(\alpha_1))\right] \\
& + 3\gamma^2\mathds{E}_{\pi_\gamma}\left[(\alpha_1-\alpha)(\alpha-P(\alpha_1))^2)\right] + 3\gamma^2\mathds{E}_{\pi_\gamma}\left[(\alpha_1-\alpha)P(\alpha_1)(1-P(\alpha_1))\right] \\
& + \gamma^3\mathds{E}_{\pi_\gamma}\left[(\alpha-B_{P(\alpha_1)})^3\right] \\
3\mathds{E}_{\pi_\gamma}\left[(P(\alpha_1)-\alpha)^3\right] =  &  3\mathds{E}_{\pi_\gamma}\left[(\alpha_1-P(\alpha_1))^2(\alpha-P(\alpha_1))\right] - 6\mathds{E}_{\pi_\gamma}\left[(\alpha_1-P(\alpha_1))(P(\alpha_1)-\alpha)^2\right] \\
& + 3\gamma\mathds{E}_{\pi_\gamma}\left[(\alpha_1-\alpha)(\alpha-P(\alpha_1))^2)\right] + 3\gamma\mathds{E}_{\pi_\gamma}\left[(\alpha_1-\alpha)P(\alpha_1)(1-P(\alpha_1))\right] \\
& + \gamma^2\mathds{E}_{\pi_\gamma}\left[(\alpha-B_{P(\alpha_1)})^3\right] \\
3 \left| \mathds{E}_{\pi_\gamma}\left[(P(\alpha_1)-\alpha)^3\right] \right| \leq  &  3 \left| \mathds{E}_{\pi_\gamma}\left[(\alpha_1-P(\alpha_1))^2(\alpha-P(\alpha_1))\right] \right| + 6 \left| \mathds{E}_{\pi_\gamma}\left[(\alpha_1-P(\alpha_1))(P(\alpha_1)-\alpha)^2\right] \right|\\
& + 3 \gamma \left|\mathds{E}_{\pi_\gamma}\left[(\alpha_1-\alpha)(\alpha-P(\alpha_1))^2)\right] \right| + 3\gamma\left|\mathds{E}_{\pi_\gamma}\left[(\alpha_1-\alpha)P(\alpha_1)(1-P(\alpha_1))\right] \right| \\
& + \gamma^2\left|\mathds{E}_{\pi_\gamma}\left[(\alpha-B_{P(\alpha_1)})^3\right] \right| \numberthis \label{eq:mom3_interm}.
\end{align*}

To conclude, we can bound each term of the right hand side of \cref{eq:mom3_interm}. In order of appearance we obtain:
\begin{align*}
\left| \mathds{E}_{\pi_\gamma}\left[(\alpha_1-P(\alpha_1))^2(\alpha-P(\alpha_1))\right]\right| & \leq  \mathds{E}_{\pi_\gamma}\left[(\alpha_1-P(\alpha_1))^2\left|\alpha-P(\alpha_1)\right|\right] \\
\left| \mathds{E}_{\pi_\gamma}\left[(\alpha_1-P(\alpha_1))^2(\alpha-P(\alpha_1))\right]\right| & \leq \gamma^2. \numberthis \label{eq:mom3_term1}
\end{align*}
\begin{align*}
\left| \mathds{E}_{\pi_\gamma}\left[(\alpha_1-P(\alpha_1))(P(\alpha_1)-\alpha)^2\right] \right| & \leq \mathds{E}_{\pi_\gamma}\left[\left|\alpha_1-P(\alpha_1)\right|(P(\alpha_1)-\alpha)^2\right]  \\
& \leq \gamma \mathds{E}_{\pi_\gamma}\left[(P(\alpha_1)-\alpha)^2\right]  \\
\left| \mathds{E}_{\pi_\gamma}\left[(\alpha_1-P(\alpha_1))(P(\alpha_1)-\alpha)^2\right] \right| & \leq \frac{\gamma^2}{2}\alpha(1-\alpha) + O(\gamma^3) , \numberthis \label{eq:mom3_term2}
\end{align*}
where the last equality is obtained by using \Cref{lem:iid_mom2}.
\begin{align*}
\gamma \left|\mathds{E}_{\pi_\gamma}\left[(\alpha_1-\alpha)(\alpha-P(\alpha_1))^2)\right] \right| & \leq \gamma \mathds{E}_{\pi_\gamma}\left[\left|\alpha_1-\alpha\right|(\alpha-P(\alpha_1))^2)\right] \\
& \leq \gamma D_{\gamma,\alpha} \mathds{E}_{\pi_\gamma}\left[(\alpha-P(\alpha_1))^2)\right] \\
\gamma \left|\mathds{E}_{\pi_\gamma}\left[(\alpha_1-\alpha)(\alpha-P(\alpha_1))^2)\right] \right| & \leq D_{\gamma,\alpha} \frac{\gamma^2}{2}\alpha(1-\alpha) + O(\gamma^3), \numberthis \label{eq:mom3_term3}
\end{align*}
again using \Cref{lem:iid_mom2}, and with $D_{\gamma,\alpha} = \max(1+\gamma\alpha, \gamma(1-\alpha)) - \alpha = O(1)$.

\begin{align*}
    \gamma \left|\mathds{E}_{\pi_\gamma}\left[(\alpha_1-\alpha)P(\alpha_1)(1-P(\alpha_1))\right] \right| & \leq \gamma \left|\mathds{E}_{\pi_\gamma}\left[(\alpha_1-P(\alpha_1))P(\alpha_1)(1-P(\alpha_1))\right] \right| + \gamma \left|\mathds{E}_{\pi_\gamma}\left[(P(\alpha_1)-\alpha)P(\alpha_1)(1-P(\alpha_1))\right] \right| \\
    & \leq \gamma \frac{1}{4} \mathds{E}_{\pi_\gamma}\left[\left|\alpha_1-P(\alpha_1)\right|\right] + \gamma  \frac{1}{4} \mathds{E}_{\pi_\gamma}\left[\left| P(\alpha_1)-\alpha \right|\right] \\
    & \leq \frac{\gamma^2}{4} + \frac{\gamma}{4} \sqrt{\mathds{E}_{\pi_\gamma}\left[(P(\alpha_1)-\alpha)^2\right]}\\
    & \leq \frac{\gamma^2}{4} + \frac{\gamma}{4} \sqrt{\frac{\gamma}{ 2} \alpha(1-\alpha) + O(\gamma^2)}\\
    \gamma \left|\mathds{E}_{\pi_\gamma}\left[(\alpha_1-\alpha)P(\alpha_1)(1-P(\alpha_1))\right] \right| & \leq O(\gamma^{3/2}), \numberthis \label{eq:mom3_term4}
\end{align*}
where the last inequality comes from \Cref{lem:iid_mom2} a third time.

\begin{equation}
    \gamma^2\left|\mathds{E}_{\pi_\gamma}\left[(\alpha-B_{P(\alpha_1)})^3\right] \right| \leq \gamma^2\max(\alpha^3,(1-\alpha)^3).
    \label{eq:mom3_term5}
\end{equation}

Gathering \crefrange{eq:mom3_term1}{eq:mom3_term5} together with \cref{eq:mom3_interm}, we obtain the following upper bound:
\begin{equation*}
3 \left| \mathds{E}_{\pi_\gamma}\left[(P(\alpha_1)-\alpha)^3\right] \right| \leq 3 \gamma^2 + 3\gamma^2\alpha(1-\alpha) + O(\gamma^3) + 3 D_{\gamma,\alpha} \frac{\gamma^2}{2}\alpha(1-\alpha) + O(\gamma^3) + O(\gamma^{3/2}) + \gamma^2 \max(\alpha^3,(1-\alpha)^3),
\end{equation*}
which leads to:
\begin{equation}
    \mathds{E}_{\pi_\gamma}\left[(P(\alpha_1)-\alpha)^3\right] = O(\gamma^{3/2}). \label{eq:mom3}
\end{equation}
\end{proof}

\begin{proof}[Proof of \Cref{lem:iid_mom4}]
Let $\gamma > 0$. For the fourth order moment, the proof works in the same way for the third order moment, \Cref{lem:iid_mom3}.

\begin{align*}
0 =  &  4\gamma\mathds{E}_{\pi_\gamma}\left[(\alpha_1 - \alpha)^3(\alpha-B_{P(\alpha_1)})\right] + 6\gamma^2\mathds{E}_{\pi_\gamma}\left[(\alpha_1 - \alpha)^2(\alpha-B_{P(\alpha_1)})^2\right] \\
& + 4\gamma^3\mathds{E}_{\pi_\gamma}\left[(\alpha_1 - \alpha)(\alpha-B_{P(\alpha_1)})^3\right] + \gamma^4\mathds{E}_{\pi_\gamma}\left[(\alpha-B_{P(\alpha_1)})^4\right] \\
0 =  &  4\gamma\mathds{E}_{\pi_\gamma}\left[(\alpha_1 - P(\alpha_1) + P(\alpha_1) - \alpha)^3(\alpha-P(\alpha_1))\right] \\
& + 6\gamma^2\mathds{E}_{\pi_\gamma}\left[(\alpha_1 - \alpha)^2(\alpha-P(\alpha_1)+P(\alpha_1)-B_{P(\alpha_1)})^2\right] \\
& + 4\gamma^3\mathds{E}_{\pi_\gamma}\left[(\alpha_1 - \alpha)(\alpha-B_{P(\alpha_1)})^3\right] + \gamma^4\mathds{E}_{\pi_\gamma}\left[(\alpha-B_{P(\alpha_1)})^4\right] \\
4 \gamma \mathds{E}_{\pi_\gamma}\left[(P(\alpha_1)-\alpha)^4\right] =  &  4\gamma\mathds{E}_{\pi_\gamma}\left[(\alpha_1 - P(\alpha_1))^3(\alpha-P(\alpha_1))\right] + 12\gamma\mathds{E}_{\pi_\gamma}\left[(\alpha_1 - P(\alpha_1))^2(P(\alpha_1)-\alpha)(\alpha-P(\alpha_1))\right] \\
& + 12\gamma\mathds{E}_{\pi_\gamma}\left[(\alpha_1 - P(\alpha_1))(P(\alpha_1)-\alpha)^2(\alpha-P(\alpha_1))\right] \\
& + 6\gamma^2\mathds{E}_{\pi_\gamma}\left[(\alpha_1 - \alpha)^2(\alpha-P(\alpha_1))^2\right] + 0 + 6\gamma^2\mathds{E}_{\pi_\gamma}\left[(\alpha_1 - \alpha)^2(P(\alpha_1)-B_{P(\alpha_1)})^2\right] \\
& + 4\gamma^3\mathds{E}_{\pi_\gamma}\left[(\alpha_1 - \alpha)(\alpha-B_{P(\alpha_1)})^3\right] + \gamma^4\mathds{E}_{\pi_\gamma}\left[(\alpha-B_{P(\alpha_1)})^4\right] \\
4 \mathds{E}_{\pi_\gamma}\left[(P(\alpha_1)-\alpha)^4\right] =  &  4\mathds{E}_{\pi_\gamma}\left[(\alpha_1 - P(\alpha_1))^3(\alpha-P(\alpha_1))\right] - 12\mathds{E}_{\pi_\gamma}\left[(\alpha_1 - P(\alpha_1))^2(P(\alpha_1)-\alpha)^2\right] \\
& - 12\mathds{E}_{\pi_\gamma}\left[(\alpha_1 - P(\alpha_1))(P(\alpha_1)-\alpha)^3\right] \\
& + 6\gamma\mathds{E}_{\pi_\gamma}\left[(\alpha_1 - \alpha)^2(\alpha-P(\alpha_1))^2\right] + 6\gamma\mathds{E}_{\pi_\gamma}\left[(\alpha_1 - \alpha)^2P(\alpha_1)(1-P(\alpha_1))\right] \\
& + 4\gamma^2\mathds{E}_{\pi_\gamma}\left[(\alpha_1 - \alpha)(\alpha-B_{P(\alpha_1)})^3\right] + \gamma^3\mathds{E}_{\pi_\gamma}\left[(\alpha-B_{P(\alpha_1)})^4\right] \\
4 \left| \mathds{E}_{\pi_\gamma}\left[(P(\alpha_1)-\alpha)^4\right] \right| \leq  &  4 \left| \mathds{E}_{\pi_\gamma}\left[(\alpha_1 - P(\alpha_1))^3(\alpha-P(\alpha_1))\right] \right| + 12 \left| \mathds{E}_{\pi_\gamma}\left[(\alpha_1 - P(\alpha_1))^2(P(\alpha_1)-\alpha)^2\right] \right| \\
& + 12 \left| \mathds{E}_{\pi_\gamma}\left[(\alpha_1 - P(\alpha_1))(P(\alpha_1)-\alpha)^3\right] \right|\\
& + 6\gamma \left| \mathds{E}_{\pi_\gamma}\left[(\alpha_1 - \alpha)^2(\alpha-P(\alpha_1))^2\right] \right| + 6\gamma \left| \mathds{E}_{\pi_\gamma}\left[(\alpha_1 - \alpha)^2P(\alpha_1)(1-P(\alpha_1))\right] \right| \\
& + 4\gamma^2 \left| \mathds{E}_{\pi_\gamma}\left[(\alpha_1 - \alpha)(\alpha-B_{P(\alpha_1)})^3\right] \right| + \gamma^3 \left| \mathds{E}_{\pi_\gamma}\left[(\alpha-B_{P(\alpha_1)})^4\right] \right|.  \numberthis \label{eq:mom4_interm}
\end{align*}
We are now going to dominate each term of the right hand side of \cref{eq:mom4_interm} in order of appearance.

\begin{align*}
    \left| \mathds{E}_{\pi_\gamma}\left[(\alpha_1 - P(\alpha_1))^3(\alpha-P(\alpha_1))\right] \right| & \leq \mathds{E}_{\pi_\gamma}\left[\left|\alpha_1 - P(\alpha_1)\right|^3\left|\alpha-P(\alpha_1)\right|\right] \\
     \left| \mathds{E}_{\pi_\gamma}\left[(\alpha_1 - P(\alpha_1))^3(\alpha-P(\alpha_1))\right] \right| & \leq \gamma^3 \numberthis \label{eq:mom4_term1}
\end{align*}
\begin{align*}
    \left| \mathds{E}_{\pi_\gamma}\left[(\alpha_1 - P(\alpha_1))^2(P(\alpha_1)-\alpha)^2\right] \right| & = \mathds{E}_{\pi_\gamma}\left[(\alpha_1 - P(\alpha_1))^2(P(\alpha_1)-\alpha)^2\right] \\
    \left| \mathds{E}_{\pi_\gamma}\left[(\alpha_1 - P(\alpha_1))^2(P(\alpha_1)-\alpha)^2\right] \right| & \leq \gamma^2. \numberthis \label{eq:mom4_term2}
\end{align*}
\begin{align*}
    \left| \mathds{E}_{\pi_\gamma}\left[(\alpha_1 - P(\alpha_1))(P(\alpha_1)-\alpha)^3\right] \right| & \leq \mathds{E}_{\pi_\gamma}\left[\left| \alpha_1 - P(\alpha_1)\right|\left| P(\alpha_1)-\alpha\right|^3\right] \\
    & \leq \gamma \mathds{E}_{\pi_\gamma}\left[\left| P(\alpha_1)-\alpha\right|^3\right] \\
    \left| \mathds{E}_{\pi_\gamma}\left[(\alpha_1 - P(\alpha_1))(P(\alpha_1)-\alpha)^3\right] \right| & \leq O(\gamma^{5/2}). \numberthis \label{eq:mom4_term3}
\end{align*}
where the last inequality holds using \Cref{lem:iid_mom3}.
\begin{align*}
    \gamma \left| \mathds{E}_{\pi_\gamma}\left[(\alpha_1 - \alpha)^2(\alpha-P(\alpha_1))^2\right] \right| & = \gamma \mathds{E}_{\pi_\gamma}\left[(\alpha_1 - \alpha)^2(\alpha-P(\alpha_1))^2\right] \\ 
    & \leq \gamma D_{\gamma,\alpha}^2 (\frac{\gamma}{2} \alpha (1-\alpha) + O(\gamma^2)) \\
    \gamma \left| \mathds{E}_{\pi_\gamma}\left[(\alpha_1 - \alpha)^2(\alpha-P(\alpha_1))^2\right] \right| & \leq D_{\gamma,\alpha}^2 \frac{\gamma^2}{2} \alpha (1-\alpha) + O(\gamma^3). \numberthis \label{eq:mom4_term4}
\end{align*}
again where we've used \Cref{lem:iid_mom3}, and re-used its notation $D_{\gamma,\alpha} = \max(1+\gamma\alpha, \gamma(1-\alpha)) - \alpha = O(1)$.

\begin{align*}
    \gamma \left| \mathds{E}_{\pi_\gamma}\left[(\alpha_1 - \alpha)^2P(\alpha_1)(1-P(\alpha_1))\right] \right| 
     = & \gamma \left| \mathds{E}_{\pi_\gamma}\left[(\alpha_1 - P(\alpha_1))^2P(\alpha_1)(1-P(\alpha_1))\right] \right. \\
     & + 2\mathds{E}_{\pi_\gamma}\left[(\alpha_1 - P(\alpha_1))(P(\alpha_1) - \alpha)P(\alpha_1)(1-P(\alpha_1))\right] \\
     & + \left. \mathds{E}_{\pi_\gamma}\left[(P(\alpha_1) - \alpha)^2P(\alpha_1)(1-P(\alpha_1))\right] \right| \\
    \leq & \frac{\gamma}{4} \mathds{E}_{\pi_\gamma}\left[(\alpha_1 - P(\alpha_1))^2\right] + \frac{\gamma}{2} \mathds{E}_{\pi_\gamma}\left[\left| \alpha_1 - P(\alpha_1) \right| \left| P(\alpha_1) - \alpha \right| \right] \\
     & + \frac{\gamma}{4}\mathds{E}_{\pi_\gamma}\left[(P(\alpha_1) - \alpha)^2\right] \\
    \gamma \left| \mathds{E}_{\pi_\gamma}\left[(\alpha_1 - \alpha)^2P(\alpha_1)(1-P(\alpha_1))\right] \right|  \leq & \frac{\gamma^3}{4} + \frac{\gamma^2}{2} + \frac{\gamma^2}{8}\alpha(1-\alpha) + O(\gamma^3).\numberthis \label{eq:mom4_term5}
\end{align*}
again where we've used \Cref{lem:iid_mom3}.

\begin{align*}
    \gamma^2 \left| \mathds{E}_{\pi_\gamma}\left[(\alpha_1 - \alpha)(\alpha-B_{P(\alpha_1)})^3\right] \right| & \leq \gamma^2 \mathds{E}_{\pi_\gamma}\left[\left| \alpha_1 - \alpha \right| \left| \alpha-B_{P(\alpha_1)} \right| ^3\right] \\
    & \leq \gamma^2 D_{\gamma,\alpha }\mathds{E}_{\pi_\gamma}\left[\left| \alpha-B_{P(\alpha_1)} \right| ^3\right] \\
    \gamma^2 \left| \mathds{E}_{\pi_\gamma}\left[(\alpha_1 - \alpha)(\alpha-B_{P(\alpha_1)})^3\right] \right| & \leq \gamma^2 D_{\gamma,\alpha } \max(\alpha^3,(1-\alpha)^3). \numberthis \label{eq:mom4_term6}
\end{align*}
\begin{equation}
    \gamma^3\left|\mathds{E}_{\pi_\gamma}\left[(\alpha-B_{P(\alpha_1)})^4\right] \right| \leq \gamma^3\max(\alpha^4,(1-\alpha)^4).
    \label{eq:mom4_term7}
\end{equation}

Gathering \crefrange{eq:mom4_term1}{eq:mom4_term7} together with \cref{eq:mom4_interm}, we obtain finally:
\begin{equation}
    \mathds{E}_{\pi_\gamma}\left[(P(\alpha_1)-\alpha)^4\right] = O(\gamma^{3/2}).
    \label{eq:mom4}
\end{equation}
\end{proof}

\subsection{Proof of \Cref{thm:ar}}
In this section, we prove \Cref{thm:ar}. Recall the theorem:

\begin{customTheorem}[\ref{thm:ar}]
Assume that: (i) $\alpha \in \mathds{Q}$; (ii) the residuals follow an AR(1) process (i.e., $\varepsilon_{t+1} = \varphi\varepsilon_{t}+\xi_{t+1}$ with $(\xi_t)_t$ i.i.d. random variables admitting a continuous density with respect to Lebesgue measure, of support $\mathcal{S}$) clipped at a large value $R$, and $[-R,R] \subset \mathcal{S}$;
(iii) the quantile function $Q$ of the stationary distribution of $(\varepsilon_t)_t$ is known; (iv) $Q$ is bounded by $R$. 
Then $(\alpha_t,\varepsilon_{t-1})$ is a homogeneous Markov Chain in $\mathds R^2$ that admits a unique stationary distribution $\pi_{\gamma,\varphi}$. Moreover, $$\frac{1}{T}\sum\limits_{t=1}^T L(\alpha_t) \overset{a.s.}{\underset{T \rightarrow +\infty}{\longrightarrow}} \mathds{E}_{\pi_{\gamma,\varphi}}[L].$$ 
\end{customTheorem}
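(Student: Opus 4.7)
The proof will mirror the structure of the exchangeable case (Theorem~\ref{thm:iid}) but on the enlarged state space needed to encode the AR(1) memory. First I would verify the Markov property: under assumption (iii), the miscoverage indicator $\mathds{1}\{y_t\notin \widehat C_{\alpha_t}(x_t)\}$ reduces to $\mathds{1}\{|\varepsilon_t|>Q(1-P(\alpha_t))\}$, so $\alpha_{t+1}$ is a deterministic function of $(\alpha_t,\varepsilon_t)$, while the clipped recursion gives $\varepsilon_t$ as a deterministic function of $\varepsilon_{t-1}$ and the independent innovation $\xi_t$. Hence $(\alpha_{t+1},\varepsilon_t) = F(\alpha_t,\varepsilon_{t-1},\xi_t)$ is a homogeneous Markov chain driven by i.i.d.\ noise. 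By a direct adaptation of Lemma~\ref{lem:iid_finite}, the $\alpha$-coordinate remains in the finite lattice $\mathcal A$; combined with the clipping $\varepsilon_t\in[-R,R]$, the state space is the compact set $\mathcal A\times[-R,R]$.

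Next, to obtain existence and uniqueness of $\pi_{\gamma,\varphi}$ simultaneously, I would verify a Doeblin (minorization) condition for some fixed iterate $P^{n_0}$ of the kernel. Two ingredients feed into this. First, because $\xi_t$ has a continuous density strictly positive on an open subset of $[-R,R]\subset\mathcal S$, the conditional law of $\varepsilon_t$ given $\varepsilon_{t-1}=e$ dominates a fixed positive multiple of Lebesgue measure on some open subset of $[-R,R]$, uniformly in $e\in[-R,R]$. Second, whenever $P(\alpha_t)\in(0,1)$, the conditional probability of each of the two possible $\alpha$-transitions is bounded away from $0$ uniformly in $\varepsilon_{t-1}$, because $|Q(1-P(\alpha_t))|<R$. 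I would then combine these with a finite path argument \emph{à la} Lemma~\ref{lem:iid_irr}: for any target pair $(a^\star,e^\star)\in\mathcal A\times(-R,R)$, there is a prescribed sequence of at most $n_0=n_0(\gamma,\alpha)$ transitions that routes the discrete coordinate through $\mathcal A$ to $a^\star$ while leaving the continuous coordinate with positive density in a neighborhood of $e^\star$. The boundary values $P(\alpha_t)\in\{0,1\}$, where the $\alpha$-dynamics become deterministic, re-enter the interior after a fixed finite number of steps depending only on $\gamma,\alpha$, so they do not obstruct the construction. Piecing these together yields $P^{n_0}((a,e),\cdot)\ge\delta\,\tilde\nu(\cdot)$ for some probability measure $\tilde\nu$ and $\delta>0$, and Doeblin's theorem delivers uniqueness of $\pi_{\gamma,\varphi}$ together with geometric ergodicity.

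Finally, since $L(\alpha)=2Q(1-P(\alpha))$ is bounded by $2R$ under assumption~(iv), it is $\pi_{\gamma,\varphi}$-integrable, and the ergodic theorem for positive Harris-recurrent Markov chains (e.g.\ Theorem~17.0.1 in Meyn \& Tweedie) yields
\[
\frac{1}{T}\sum_{t=1}^T L(\alpha_t)\xrightarrow[T\to\infty]{a.s.} \mathds E_{\pi_{\gamma,\varphi}}[L].
\]
The main obstacle is the minorization step: in contrast with the i.i.d.\ case, where each $\alpha$-jump has an intrinsic Bernoulli probability, the jump probabilities here depend on the current $\varepsilon_{t-1}$, so the discrete path argument on $\mathcal A$ must be interlocked with the continuous mixing provided by the AR(1) innovation, while carefully handling the degenerate boundary regime $P(\alpha_t)\in\{0,1\}$.
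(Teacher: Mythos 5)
Your proposal follows essentially the same route as the paper: the same augmented chain $(\alpha_t,\varepsilon_{t-1})$ on the state space $\mathcal A\times[-R,R]$, the same interleaving of a finite path argument on the discrete coordinate with the mixing supplied by the continuous density of the innovations, and the same appeal to Meyn--Tweedie ergodic theory for the almost-sure convergence of the averages. The only difference is one of packaging: you phrase the key mixing step as a Doeblin minorization of a fixed iterate $P^{n_0}$ (yielding uniform ergodicity), whereas the paper establishes $\psi$-irreducibility with a uniform lower bound over starting points within a bounded number of steps, concludes Harris recurrence, and then invokes Theorems 10.0.1 and 17.1.7 of Meyn and Tweedie --- both arguments rest on the same construction and reach the same conclusion at a comparable level of rigor.
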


We consider $Z_t= (\alpha_t, \epsilon_{t-1}) $ defined in the state-space  $\mathcal{Z}= \mathcal A \times [-R,R]$ by 
\begin{align*}
\begin{cases}
\alpha_{t+1}&=\alpha_{t}+\gamma\left(\alpha-\mathds{1}\{ | \epsilon_{t} | > \widehat{Q}_{1-P(\alpha_t)} \}\right), \\
\epsilon_{t} &= -R \vee (\varphi\epsilon_{t-1} +\xi_t) \wedge R
\end{cases}
\end{align*}
That is, $ (\alpha_t )_{t\geq 0} $ is the recurrence defined by \Cref{eq:update_scheme}, and $ (\epsilon_{t})_{t\geq 0} $ is an AR(1) process with parameters $ \varphi $ clipped at some large value $ R $. Finally,  $ (\xi_t)_{t} $ is a sequence of i.i.d. r.v. admitting a continuous density with respect to the Lebesgue measure, of support $\mathcal{S} \supset [-R,R]$.

This chain is defined for parameters $ \alpha, R $ considered as fixed, and we focus on the influence of $ \gamma, \varphi $.
The main difference w.r.t. the previous section is that the state space is not countable anymore. More precisely, the state space is a product of a finite discrete set and an interval of $ \mathds R $.

The state-space $ \mathcal Z $ is $ \mathcal A \times [-R,R] $
, where $ \mathcal A $ is defined in the previous \Cref{subsec:proofiid}.
We equip $ \mathcal Z $ with the $ \sigma $-algebra $ \mathcal F = \mathcal P (\mathcal A) \times \mathcal B(\mathds R)$, where $ \mathcal P (\mathcal A)  $ is the power-set of the finite set $ \mathcal A $ and $ B(\mathds R ) $ is the borel set of $ \mathds R $.

\begin{lemma}
	The sequence $ (Z_t)_{t\geq 0} $ is a Markov chain. Moreover, the chain is Harris-recurrent, and admits a stationary distribution $ \pi_{\gamma, \varphi} $.
\end{lemma}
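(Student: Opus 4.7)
The Markov property is immediate: from the two recursions defining $Z_{t+1}=(\alpha_{t+1},\varepsilon_t)$, we can write $Z_{t+1}=F(Z_t,\xi_t)$ for a measurable map $F$ that does not depend on $t$, with the innovations $(\xi_t)_{t\geq 1}$ i.i.d.\ and independent of $(Z_s)_{s\leq t}$. Hence $(Z_t)$ is a time-homogeneous Markov chain on the compact state space $\mathcal Z=\mathcal A\times[-R,R]$.

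For Harris recurrence and the existence of a stationary distribution, the plan is to apply the Meyn--Tweedie framework \citep{meyn2012markov}: I will establish (i) $\psi$-irreducibility for a suitable non-trivial measure $\psi$, and (ii) that the whole state space $\mathcal Z$ is small, together with aperiodicity. Compactness of $\mathcal Z$ will then yield positive Harris recurrence and a unique invariant probability via a Doeblin-type argument.

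For (i), I take $\psi$ as the product of the counting measure on $\mathcal A$ and the Lebesgue measure on $[-R,R]$. Starting from any $(\alpha_0,e_{-1})\in\mathcal Z$, the combinatorial construction of \Cref{lem:iid_irr} furnishes, for any target $\alpha^\star\in\mathcal A$, a finite sequence of indicator outcomes that drives the $\alpha$-coordinate from $\alpha_0$ to $\alpha^\star$. The key point is that each prescribed outcome $\mathds 1\{|\varepsilon_t|>Q(1-P(\alpha_t))\}$ has strictly positive conditional probability given $\varepsilon_{t-1}$ whenever $P(\alpha_t)\in(0,1)$: indeed the law of $\xi_t$ has a density positive on every open subset of $[-R,R]$ (since $[-R,R]\subset\mathcal S$) and $Q$ is bounded by $R$, so both $\{|\varepsilon_t|>Q(1-P(\alpha_t))\}$ and its complement carry positive mass. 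The degenerate boundary states $P(\alpha_t)\in\{0,1\}$ force a single outcome of the indicator, but from any such state $\alpha$ can be pushed back deterministically into $(0,1)$ in finitely many steps before resuming the construction. After $\alpha^\star$ is reached, one extra innovation spreads $\varepsilon_t$ over a subinterval of $[-R,R]$ with a density bounded below, ensuring that any subset of positive Lebesgue measure is hit with positive probability; this gives $\psi$-irreducibility.

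For (ii), I would derive a uniform minorisation by exploiting compactness of $\mathcal Z$ together with continuity and positivity of the density of $\xi$ on $[-R,R]$: there exist $n_0\in\mathds N$, $\eta>0$ and a non-trivial measure $\nu$ on $\mathcal Z$ such that $P^{n_0}(z,\cdot)\geq \eta\,\nu(\cdot)$ uniformly in $z\in\mathcal Z$. This simultaneously shows that $\mathcal Z$ is small (hence petite) and that the chain is aperiodic. The classical Meyn--Tweedie theory then delivers positive Harris recurrence and the existence of a unique stationary distribution $\pi_{\gamma,\varphi}$. The main obstacle is step (i): tracking the coupling between the discrete $\alpha$-dynamics on $\mathcal A$ and the continuous $\varepsilon$-dynamics on $[-R,R]$, and in particular routing the trajectory around the boundary states $P(\alpha_t)\in\{0,1\}$ where one of the two indicator outcomes has probability zero.
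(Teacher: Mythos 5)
Your Markov-property argument and your $\psi$-irreducibility step are essentially the paper's: write $Z_{t+1}=F_\gamma(Z_t,\xi_t)$, reuse the path construction of \Cref{lem:iid_irr} for the $\alpha$-coordinate, and use positivity of the innovation density on $[-R,R]$ to give each prescribed indicator outcome positive conditional probability and to spread mass over Lebesgue-positive subsets of $[-R,R]$. That part is sound.

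Step (ii), however, contains a genuine error: the chain is \emph{not} aperiodic, and the whole state space $\mathcal Z$ is \emph{not} small for any fixed $n_0$. Write $\alpha=p/q$ in lowest terms. The $\alpha$-coordinate lives on the lattice $\alpha+\frac{\gamma}{q}\mathds{Z}$ (intersected with the admissible interval), and every single step moves it by either $+\gamma\alpha=+p$ units of $\gamma/q$ or $+\gamma(\alpha-1)=-(q-p)$ units. Since $-(q-p)\equiv p \pmod q$, \emph{both} admissible increments shift the lattice label by $p$ modulo $q$, and $\gcd(p,q)=1$, so the $\alpha$-marginal cycles through the $q$ residue classes with period exactly $q$ (period $10$ for $\alpha=0.1$). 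Consequently $\mathcal Z$ decomposes into $q$ cyclic classes; for two starting points $z,z'$ in different classes the measures $P^{n_0}(z,\cdot)$ and $P^{n_0}(z',\cdot)$ have disjoint supports for every $n_0$, so no uniform minorisation $P^{n_0}(z,\cdot)\geq\eta\,\nu(\cdot)$ over all $z\in\mathcal Z$ can hold with a non-trivial $\nu$, and the Doeblin/aperiodicity route collapses as stated. The repair is to drop aperiodicity entirely: either show $\mathcal Z$ is \emph{petite} (minorisation for a sampled kernel, i.e.\ averaged over a window of times), or do what the paper does — establish a lower bound $\mathds{P}(Z_t\in F\mid Z_0=z_0)>\delta$ uniform in $z_0$ for some $t\leq 1/(\alpha\gamma)$ depending on $z_0$, conclude that every Borel set of positive $\psi$-measure is visited infinitely often a.s.\ (Harris recurrence), and invoke Theorem 10.0.1 of \citet{meyn2012markov} for the unique stationary distribution. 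Neither Harris recurrence, nor existence and uniqueness of $\pi_{\gamma,\varphi}$, nor the ergodic law of large numbers (Theorem 17.1.7) used later requires aperiodicity, so the conclusion survives once this step is rewritten.
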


\begin{proof}
	We observe that 
\begin{equation}\label{eq:defF}
	Z_{t+1} = \left(\begin{array}{c}
\alpha_{t+1} \\
\epsilon_{t}
	\end{array}\right) = \left(\begin{array}{c}
	\alpha_{t}+\gamma \left(\alpha-\mathds{1}\{ | \varphi \epsilon_{t-1} + \xi_t | > \widehat{Q}_{1-P(\alpha_t)} \}\right) \\
	 -R \vee (\varphi\epsilon_{t-1} +\xi_t) \wedge R
	\end{array}\right) = : F_\gamma(Z_t, \xi_t).
\end{equation}
	
For a function $ F_\gamma: \mathds R^2 \times \mathds R $.
Consequently, $ Z_t $ follows a \textit{Non-Linear State Space} model \citep[Section 2.2.2 and Chapter 7]{meyn2012markov}.
We denote $ P_\gamma $ the probability kernel or Markov transition function, that is, for any $ z = (a,e) \in \Z $, and $ F \in \F $:
\begin{align*}
	P_\gamma(z, F) &=\mathds P (Z_1 \in F | Z_0 = z).
\end{align*}	
Remark that relying on \Cref{eq:defF}, we have an explicit formula for $ P_\gamma $. Defining the sequence of  functions $ (F_t)_{t\geq 1} $ such that 
$$
 F_{t+1}\left(z_{0}, \xi_{1}, \ldots \xi_{t+1}\right)=F_\gamma \left(F_{t}\left(z_{0}, \xi_{1}, \ldots \xi_{t}\right), \xi_{t+1}\right) 
$$
where $z_{0}$ and $(\xi_{i})$ are arbitrary real numbers. By induction we have that for any initial condition $Z_{0}=z_{0} \in \mathcal Z$ and any $t \in \mathds{N}$,
$$
Z_{t}=F_{t}\left(z_{0}, \xi_{1}, \ldots, \xi_{t}\right),
$$
which immediately implies that the $t$-step transition function may be expressed as
$$
\begin{aligned}
P_\gamma^{t}(z, F) &=\mathds{P}\left(F_{k}\left(z, \xi_{1}, \ldots, \xi_{t}\right) \in F\right) =\int \cdots \int \mathds{1}\left\{F_{k}\left(z, \xi_{1}, \ldots, \xi_{t}\right) \in F \right\} p \left(d \xi_{1}\right) \ldots p\left(d \xi_{t}\right)
\end{aligned}
$$
where $ p $ is the distribution of $ \xi $.

We first prove that the chain is $ \psi $-irreductible, for $ \psi = \mu \otimes \lambda_{\text{Leb}} $, with $ \mu  $ the uniform probability measure on $ \mathcal A $ and $\lambda_{\text{Leb}}$ the Lebesgue measure.
For any $ z_0= (a_0, e_0)\in \Z $ and $ F= \{a'\} \times O $, with $ O $ open set, we have that
\begin{align*}
	\mathds P(Z_t \in F | Z_0 = z_0) >0
\end{align*}
for some $ t $ large enough. Indeed, 
\begin{enumerate}[noitemsep,topsep=0pt]
	\item There exists a path $ (a_1, \dots, a_t) $ from $ a_0 $ to $ a' $ such that for all $ i\in \{1, \dots , t-1\} $, $ 0<a_i<1 $.
	\item  Let $ F_i $ be the set of values of $ \epsilon_{i} $ such that we obtain $ a_{i} $ from $ a_{i-1} $.
	\item Then if $ 0<a'<1 $, we can directly conclude, as we have that  for all $ i\in \{1, \dots , t\} $,  $\mathds P(Z_i \in \{a_{i}\}\times  F_i) |  Z_{i-1} =({a_{i-1}}, z_{i-1}))) > \delta > 0$,  thus $\mathds P(Z_t \in F | Z_0 = z_0) >\delta^n>0$.
	\item The argument extends to the case where $ a' \notin (0,1) $: one only has to account for the fact that the last step can for $ a $ be in  made in both directions (increasing or decreasing), depending on the set of values in $ O  $.
\end{enumerate}

Moreover, the argument can be extended to show that for any $ a', O $, there exists $ \delta $ such that for all $a_0,e_0$, there exists $ t \le \frac{1}{\alpha \gamma} $ such that 
\begin{align*}
	\mathds P(Z_t \in F | Z_0 = z_0)  >\delta
\end{align*}
Which proves that the chain will visit infinitely many times any borel set $ F $ with probability 1, and is consequently Harris-recurrent~\citep[Chapter 9]{meyn2012markov}. 
Using Theorem 10.0.1 in \citet{meyn2012markov}, we conclude that the chain admits a unique stationary distribution $ \pi_{\gamma, \varphi} $.

Finally, applying \citep[Theorem 17.1.7][]{meyn2012markov} to the later result gives: $$\frac{1}{T}\sum\limits_{t=1}^T L(\alpha_t) \overset{a.s.}{\underset{T \rightarrow +\infty}{\longrightarrow}} \mathds{E}_{\pi_{\gamma,\varphi}}[L].$$ 
\end{proof}

\subsection{Numerical study of ACI efficiency with AR(1) residuals, with respect to the median length}
\label{app:ar_aci_numerical_median}

We here reproduce the same experiment as in \Cref{sec:theory_ar}, but focus on the \textit{efficiency} as the median of the intervals' lengths instead of the average (after imputation). Results are given in \Cref{fig:aci_ar_numerical_median}.

\begin{figure}[!h]
    \centering
    \begin{minipage}[b]{0.49\textwidth}
        \centering
        \includegraphics[scale=0.4]{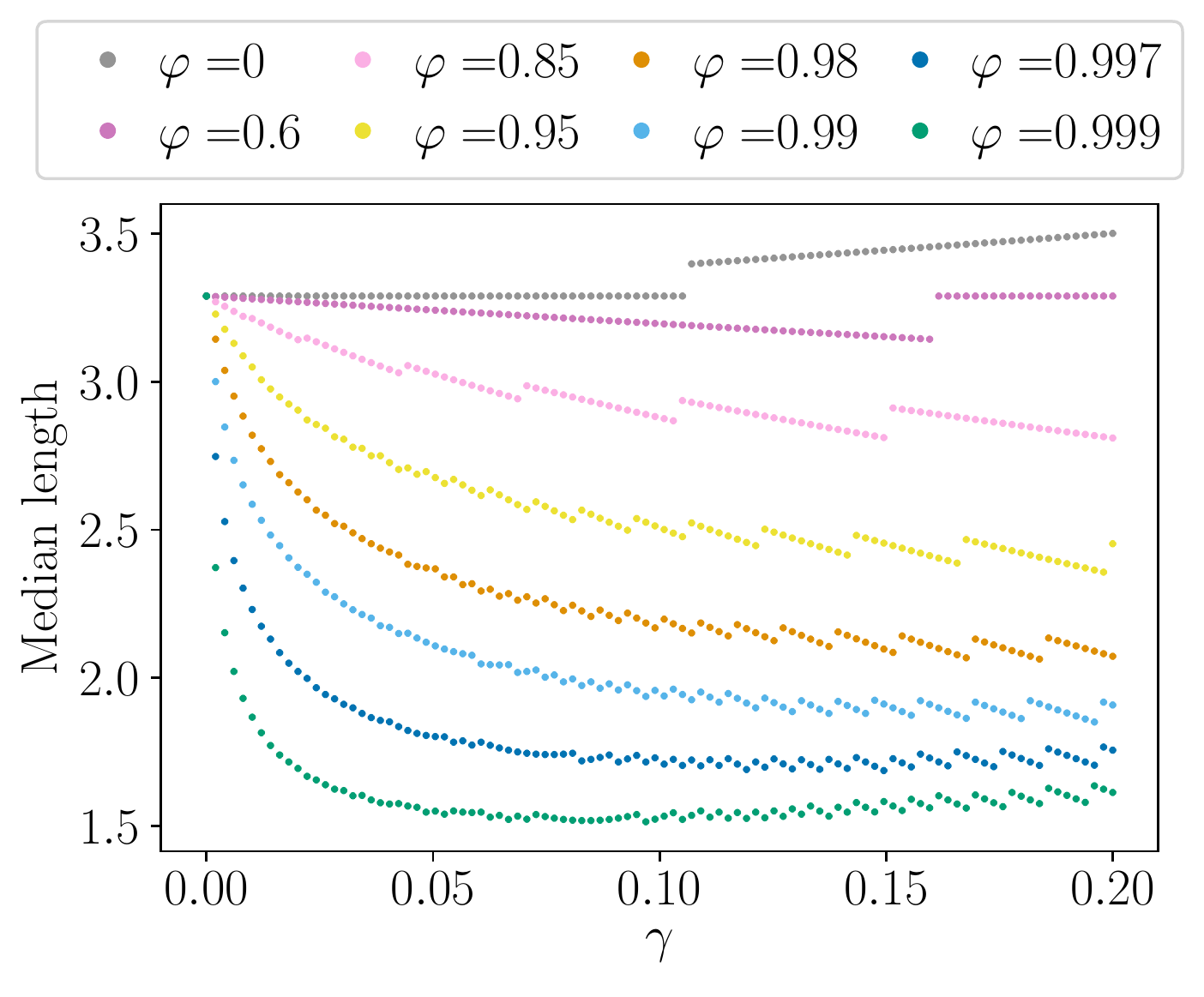}
    \end{minipage}
    \hfill
    \begin{minipage}[b]{0.49\textwidth}
        \centering
        \includegraphics[scale=0.4]{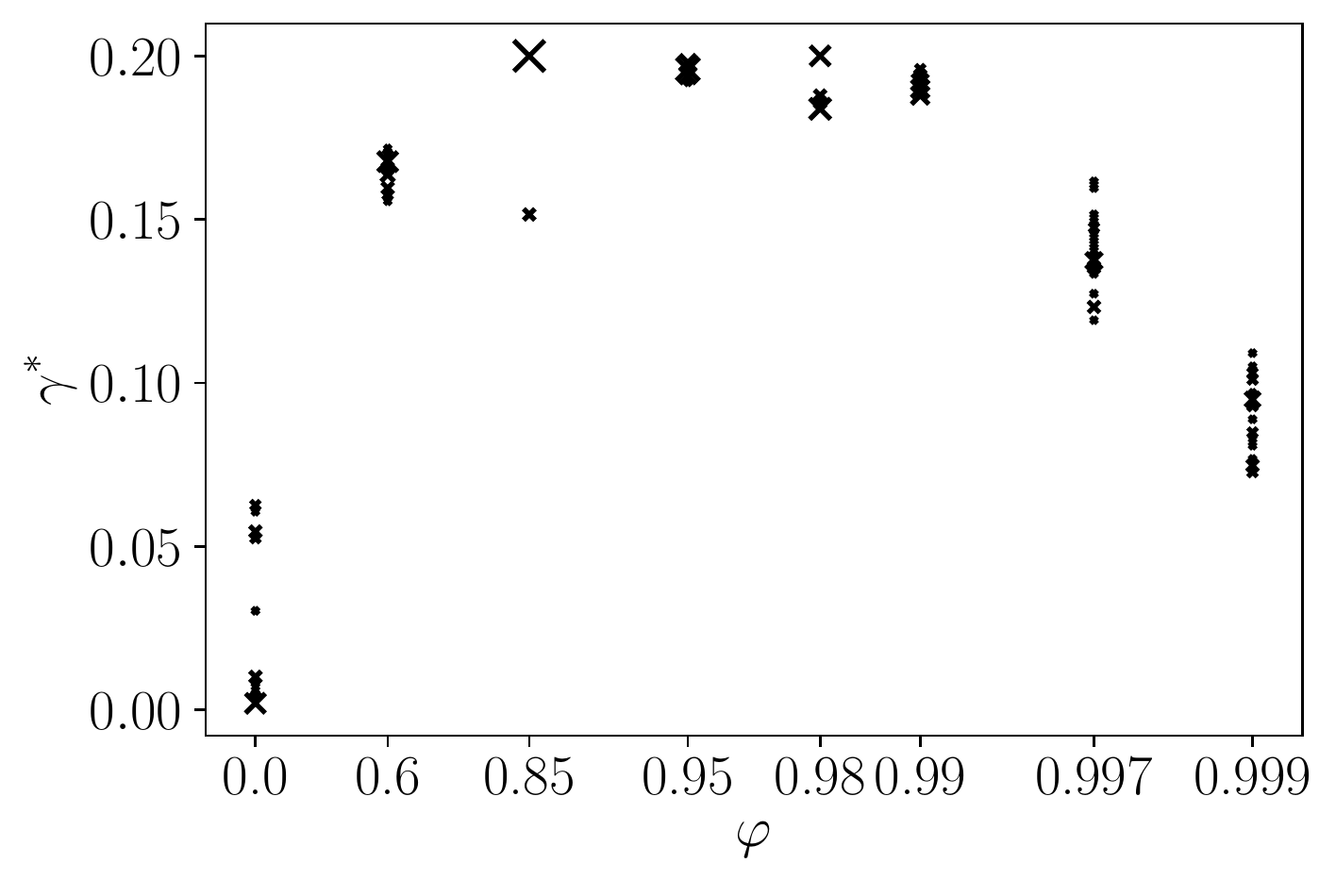}
    \end{minipage}
    \caption{Left: evolution of the median length depending on $\gamma$ for various~$\varphi$. Right: $\gamma^*$ minimizing the average length for each $\varphi$.}
\label{fig:aci_ar_numerical_median}
\end{figure}

Observations are very similar to the average length case, especially regarding (i) the monotonicity of the median interval length w.r.t. $\varphi$,  (ii) the existence of a minimum $\gamma^{+}_\varphi$ to the function $\gamma \mapsto \rm{Med}_{\pi_\gamma}[\tilde\alpha]:= \arg\!\min_m \mathds E_{\pi_{\gamma,\varphi}}[|\tilde \alpha-m|]$ (iii) the non-monotony of $\phi \mapsto \gamma^{+}_\varphi$. 

\section{Experimental details.}
\label{app:experimental_details}
\subsection{Details ARMA(1,1) processes}
\label{app:arma}

\begin{definition}[ARMA(1,1) process]
We say that $\varepsilon_t$ is an ARMA(1,1) process if for any $t$:
$$ \varepsilon_{t+1} =  \varphi \varepsilon_{t} + \xi_{t+1} + \theta \xi_t , $$
with: 
\begin{itemize}
    \item $\theta + \varphi \neq 0$, $|\varphi| < 1$ and $|\theta| < 1$;
    \item $\xi_t$ is a white noise of variance $\sigma^2$, called the \textit{innovation}.
\end{itemize}
\label{def:arma}
\end{definition}

The asymptotic variance of this process is:
\begin{equation}
    \text{Var}(\varepsilon_t) = \sigma^2\frac{1 - 2 \varphi \theta + \theta^2}{1 - \varphi^2}.
\label{eq:var_arma}
\end{equation}

An ARMA(1,1) is thus characterised by three parameters: the coefficients $\varphi$ and $\theta$ and the innovation's variance $\sigma^2$. 
The larger the coefficients, in absolute value, the greater the time dependence and variance.
Note that when $\varphi = 0$, the ARMA(0,1) process corresponds to a MA(1) and when $\theta = 0$,  the ARMA(1,0) process corresponds to an AR(1). 

To fix the asymptotic variance of an ARMA(1,1) of parameters $\varphi$ and $\theta$ to $v$, we fix $\sigma^2 = v\frac{1-\varphi^2}{1-2\varphi\theta+\theta^2}$.

\subsection{Random forest parameters}
\label{app:rf}

All the random forests model have the same parameters, that are the following:
\begin{itemize}
    \item Number of trees: 1000
    \item Minimum sample per leaf: 1 (default)
    \item Maximum number of features: $d$ (default)
\end{itemize}

Furthermore, for EnbPI, as there is already an individual bootstrap in the algorithm, the random forest regressors do not bootstrap them again. 

\subsection{Details about the baselines and comparison}
\label{app:comp_methods}

\subsubsection{EnbPI full algorithm}
\label{app:enbpi}

In order to be self-contained and precise the modifications done in EnbPI V2, the EnbPI algorithm from \citet{pmlr-v139-xu21h} is recalled in the following. In \textcolor{blindpurple}{purple} we precise the difference in EnbPI V2.

\begin{algorithm}
\caption{Sequential Distribution-free Ensemble Batch Prediction Intervals (EnbPI)}
\label{alg:EnbPI}
\begin{algorithmic}[1] 
\REQUIRE Training data $\left\{\left(x_{i}, y_{i}\right)\right\}_{i=1}^{T}$, regression algorithm $\mathcal{A}$, decision threshold $\alpha$, aggregation function $\phi$, number of bootstrap models $B$, the batch size $s$, and test data $\left\{\left(x_{t}, y_{t}\right)\right\}_{t=T+1}^{T+T_{1}}$, with $y_{t}$ revealed only after the batch of $s$ prediction intervals with $t$ in the batch are constructed.
\ENSURE Ensemble prediction intervals $\{C_{\alpha}(x_t) \}_{t=T+1}^{T+T_1}$
\FOR {$b = 1, \dots, B$}
\STATE Sample with replacement an index set $S_{b}=$ $\left(i_{1}, \ldots, i_{T}\right)$ from indices $(1, \ldots, T)$
\STATE Compute $\hat{f}^{b}=\mathcal{A}\left(\left\{\left(x_{i}, y_{i}\right) \mid i \in S_{b}\right\}\right)$
\ENDFOR
\STATE Initialise $\varepsilon = \{\}$
\FOR {$i = 1, \dots, T$}
\STATE $\hat{f}_{-i}^{\phi}\left(x_{i}\right)=\phi\left(\left\{\hat{f}^{b}\left(x_{i}\right) \mid i \notin S_{b}\right\}\right)$
\STATE Compute $\hat{\varepsilon}_{i}^{\phi}=\left|y_{i}-\hat{f}_{-i}^{\phi}\left(x_{i}\right)\right|$
\STATE $\varepsilon=\varepsilon \cup\left\{\hat{\varepsilon}_{i}^{\phi}\right\}$
\ENDFOR
\FOR {$t = T+1, \dots, T+T_1$}
\STATE Let $\hat{f}_{-t}^{\phi}\left(x_{t}\right)=(1-\alpha)$ quantile of $\left\{\hat{f}_{-i}^{\phi}\left(x_{t}\right)\right\}_{i=1}^{T}$ \textcolor{blindpurple}{\textbf{EnbPI V2:} this is replaced by $\hat{f}_{-t}^{\phi}\left(x_{t}\right) = \phi\left(\left\{\hat{f}_{-i}^{\phi}\left(x_{t}\right)\right\}_{i=1}^{T}\right)$.}
\STATE Let $w_{t}^{\phi}=(1-\alpha)$ quantile of $\varepsilon$ 
\STATE Return $C_{T, t}^{\phi, \alpha}\left(x_{t}\right)=\left[\hat{f}_{-t}^{\phi}\left(x_{t}\right) \pm w_{t}^{\phi}\right]$
\IF {$t-T=0 \mod s$}
\FOR {$j=t-1,\dots,t-1$}
\STATE Compute $\hat{\varepsilon}_{j}^{\phi}=\left|y_{j}-\hat{f}_{-j}^{\phi}\left(x_{t}\right)\right|$
\STATE $\boldsymbol{\varepsilon}=\left(\boldsymbol{\varepsilon}-\left\{\hat{\varepsilon}_{1}^{\phi}\right\}\right) \cup\left\{\hat{\varepsilon}_{i}^{\phi}\right\}$ and reset index of $\boldsymbol{\varepsilon}$
\ENDFOR
\ENDIF
\ENDFOR
\end{algorithmic}
\end{algorithm}

\paragraph{Remark on the bootstrap approach.} The bootstrap scheme is not adapted to time series, even if such strategies have been developed \citep{hardle2003bootstrap, kreiss2012hybrid, cai2012simple}, and could be used to improve the adequation of EnbPI with the time series framework. Furthermore, recent works have proposed modifications of RF in the dependent setting \citep{goehry2020random, goehry2021random, rf_spatial_depend}. Generalizing these improvements to any ensemble method and use it for EnbPI could also enhance its performance, but is out of the scope of this paper.

\subsubsection{Details on the implementation}
\label{app:baselines_comp}

We conclude this section by summarizing computational aspects of the methods.
One of the contributions is to provide a unified experimental framework. Therefore, in \Cref{tab:code_comp}, we display the current available code for these methods, and what is available in the proposed repository.

\begin{table}[!h]
\caption{Summary of available code online for each method and the proposed code in the repository. The programming language is specified, and, when relevant, the nature of the code.}
\label{tab:code_comp}
\vskip 0.15in
\begin{center}
\begin{small}
\begin{tabular}{lllll}
\toprule
& \multicolumn{2}{c}{Currently available} & \multicolumn{2}{c}{Contribution}\\ \cmidrule(l){2-3}\cmidrule(l){4-5}
Methods & Language & Details & Language & Options \\ \midrule
CP & R &  & Python &  \\
OSCP & not available &  & Python & randomised split \\
EnbPI & Python &  & Python & same aggregation function \\
ACI & R script & no general function & Python & randomised split \\
\bottomrule
\end{tabular}
\end{small}
\end{center}
\vskip -0.1in
\end{table}

\section{Additional experiments on synthetic data sets}
\label{app:syn_expe}

In this section, we provide supplemental results on the synthetic data sets presented in \Cref{sec:data_syn}. 

First, in \Cref{app:aci_emp_gamma} the sensitivity analysis of ACI $\gamma$ as well as the comparison to the naive strategy and AgACI is extended to AR(1) and MA(1) processes of asymptotic variance 10.

Then, in \Cref{app:syn_expe_baselines}, the comparison of all the CP methods for time series (initiated in \Cref{sec:comp_syn}) is also extended to these noises, that is AR(1) and MA(1) processes of asymptotic variance 10 (\Cref{app:syn_expe_baselines_10}), and to ARMA(1,1), AR(1) and MA(1) processes of asymptotic variance 1 (\Cref{app:syn_expe_baselines_1}). 

Next, we discuss in \Cref{sec:comp_syn} that the improved \textit{validity} for $\gamma = 0.05$ in comparison to $\gamma = 0.01$ comes at the cost of more infinite intervals. This analysis is detailed in \Cref{app:acp_inf}.

Finally, we compare randomized and sequential split in \Cref{app:randomized_sequential}.

\paragraph{Imputation.} The rationale to impute the infinite intervals is the following. We take the maximum of the absolute values of the residuals on the test set, noted $|\varepsilon|_{\max}$. Then, for any $t \in \llbracket T_0+1, T_0+T_1 \rrbracket$, if the predicted upper (resp. lower) bound $\hat{b}^{(u)}_t(x_t)$ is such that $\hat{b}_t(x_t) > \hat{\mu_t}(x_t) + |\varepsilon|_{\max}$ (resp. $\hat{b}^{(\ell)}_t(x_t) < \hat{\mu_t}(x_t) - |\varepsilon|_{\max}$) we impute it by $\hat{\mu_t}(x_t) + |\varepsilon|_{\max}$ (resp. $\hat{\mu_t}(x_t) - |\varepsilon|_{\max}$).
\subsection{Additional experimental results of ACI sensitivity to $\gamma$, presented in \Cref{sec:acp_gamma_emp}}
\label{app:aci_emp_gamma}

In this subsection, we provide similar results to those of \Cref{sec:acp_gamma_emp}, for different models on the noise. Especially, we consider AR(1) and MA(1) processes. 

\textbf{Observations.} The behaviour of the AR(1) process is very similar to the one of ARMA(1,1). On the other hand, for the MA case, the dependence structure is too weak to observe a significant effect of $\gamma$. All ACI methods produce  nearly valid intervals, with coverage above $89.25\%$.

Results are given in \Cref{fig:aci_ar_var10_app,fig:aci_ma_var10_app}. 

\begin{figure}[!h]
\gs\gs
\centering
\includegraphics[scale=0.22]{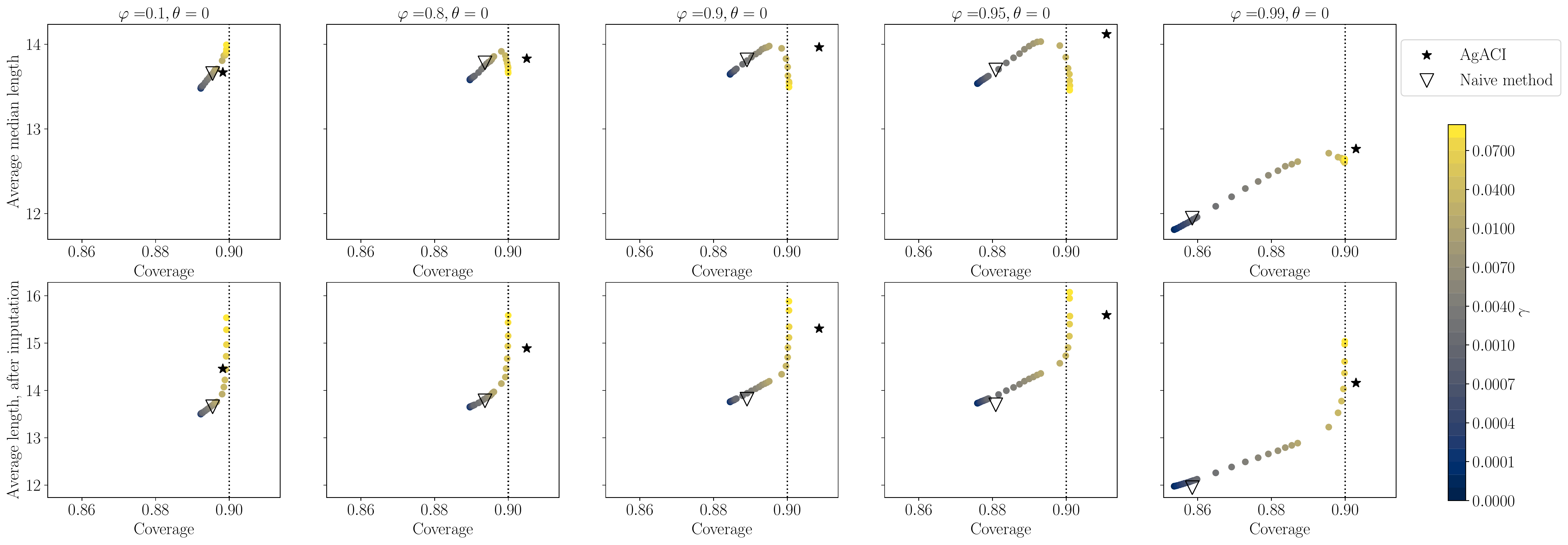}
\gs\gs
\caption{ACI performance with various $\theta$, $\varphi$ and $\gamma$ on data simulated according to \cref{eq:friedman} with a Gaussian AR(1) noise of asymptotic variance 10 (see \Cref{app:arma}). Top row: average median length with respect to the coverage. Bottom row: percentage of infinite intervals. Stars correspond to the proposed online expert aggregation strategy, and empty triangles to the naive choice.}
\label{fig:aci_ar_var10_app}
\end{figure}

\begin{figure}[!h]
\gs\gs
\centering
\includegraphics[scale=0.22]{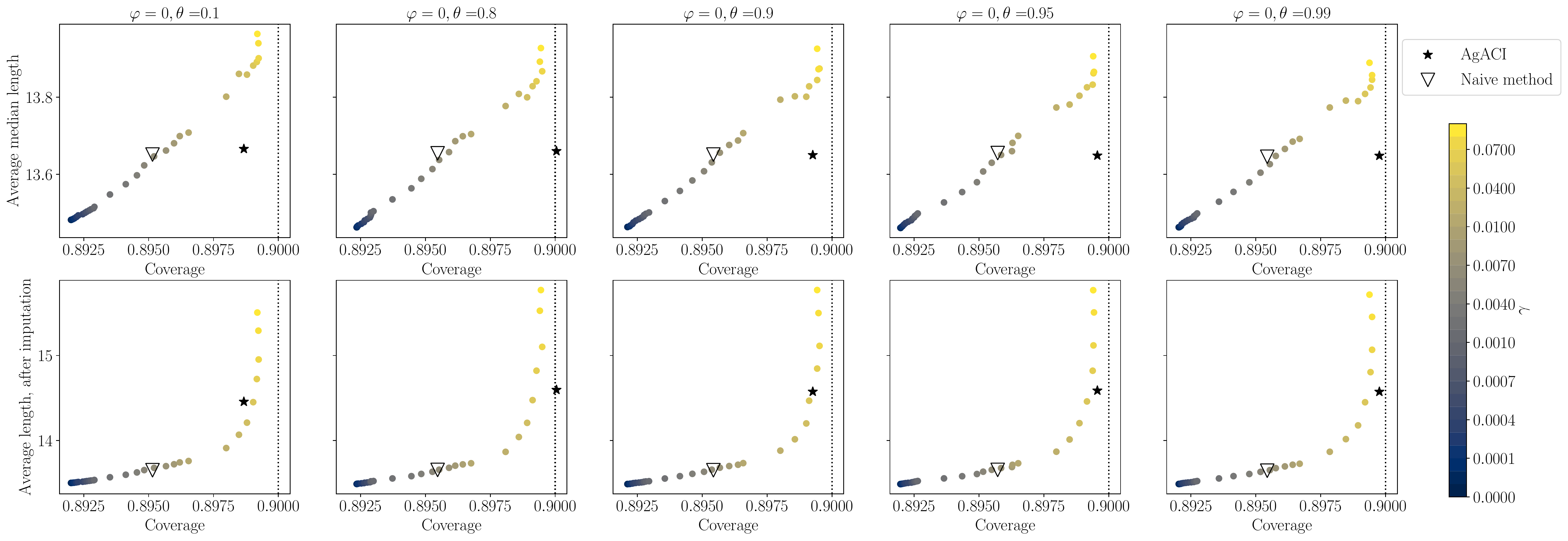}
\gs\gs
\caption{ACI performance with various $\theta$, $\varphi$ and $\gamma$ on data simulated according to \cref{eq:friedman} with a Gaussian MA(1) noise of asymptotic variance 10 (see \Cref{app:arma}). Top row: average median length with respect to the coverage. Bottom row: percentage of infinite intervals. Stars correspond to the proposed online expert aggregation strategy, and empty triangles to the naive choice.}
\label{fig:aci_ma_var10_app}
\end{figure}

\subsection{Comparison to baselines, extension of \Cref{sec:comp_syn}}
\label{app:syn_expe_baselines}

\subsubsection{Asymptotic variance fixed to 10.}
\label{app:syn_expe_baselines_10}

\Cref{fig:baseline_ar_ma_10} displays the results on data generated according to \Cref{sec:data_syn}, for an asymptotic variance of the noise of 10 (as in \Cref{fig:arma_fixed_10}), when this noise is an AR(1) or MA(1) process.

\begin{figure}[!h]
\centerline{\includegraphics[scale=0.28]{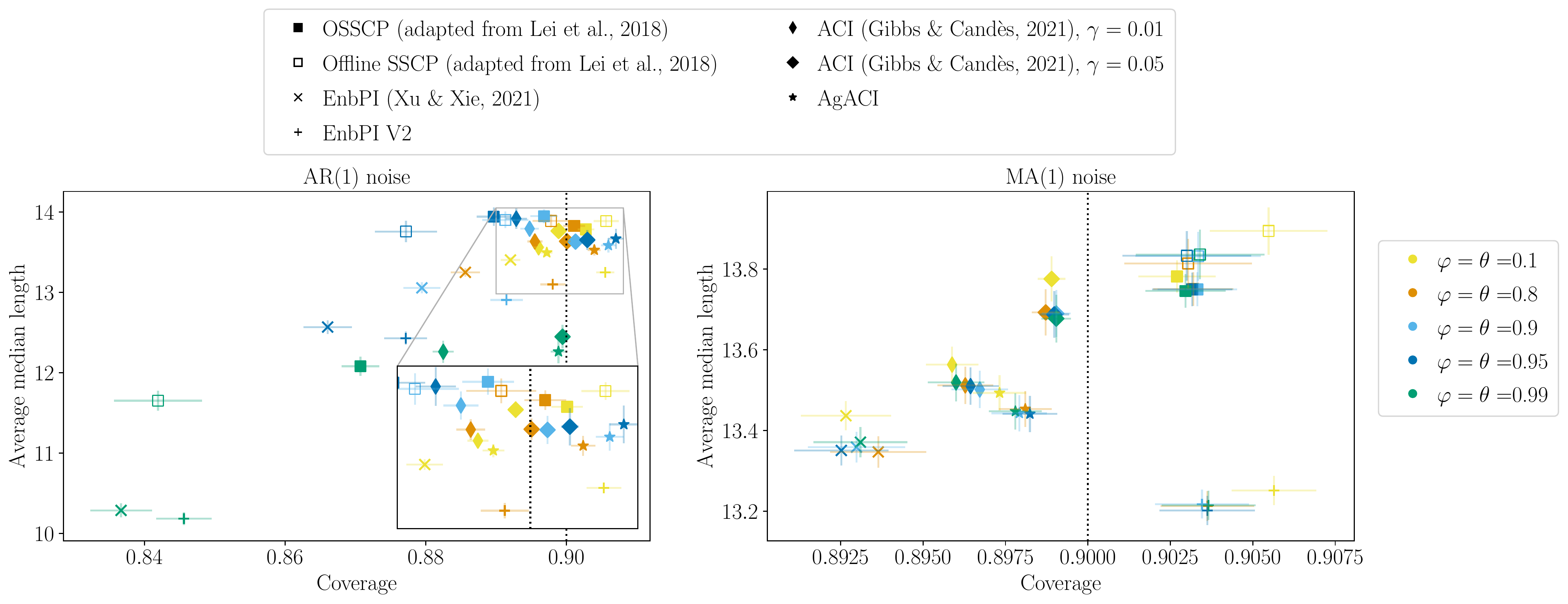}}
\caption{Performance of various interval prediction methods on data simulated according to \cref{eq:friedman} with a Gaussian AR(1) (left) and MA(1) (right) noise of asymptotic variance 10 (see \Cref{app:arma}). Results aggregated from 500 independent runs. Empirical standard error are displayed.}
\label{fig:baseline_ar_ma_10}
\end{figure}

\textbf{Observations.} As in the previous section, the methods' performances are greatly impacted by the type and strength of dependence structure. \Cref{fig:baseline_ar_ma_10} shows that while ARMA(1,1) and AR(1) noises lead to similar patterns, it is not the case for an MA(1) noise. In the latter, $\theta$ has little influence: the five performances (one for each $\theta$) are similar within each method. In addition, offline sequential SCP is very close to OSSCP. This is expected as a MA(1) process has very short memory, and the temporal dependence is thus small even for $\theta=0.99$. 

\subsubsection{Asymptotic variance fixed to 1.}
\label{app:syn_expe_baselines_1}

We now fix the asymptotic variance of the noise to 1. The results are plotted in \Cref{fig:baseline_arma_ar_ma_var1}. Note that this is an easier setting than previously, as the signal to noise ratio is higher for this asymptotic variance.

\textbf{Observations.} Similarly to \Cref{fig:baseline_ar_ma_10}, $\theta$ has little influence when the noise is a MA(1). On AR(1) and ARMA(1,1) noises (left and middle subplots), the patterns are similar. First, we observe again the improvement thanks to the online mode (empty squares versus solid ones), which increases when the dependence increases. Second, all the methods achieve \textit{validity} or are significantly closer to achieving it than when the asymptotic variance is set to 10 (this is related to the high signal to noise ratio mentioned at the beginning of this section). Third, EnbPI V2 is \textit{valid} for $\varphi = \theta \leq 0.95$ and provides the most \textit{efficient} intervals for theses values. Nevertheless, its performances, as well as those of EnbPI, follow a clear trend (similar to that of \Cref{fig:arma_fixed_10}): when the dependence increases, the coverage decreases, as well as the length. EnbPI does not seem to be robust to the increasing temporal dependence in these experiments. 

\begin{figure}[!h]
\centerline{\includegraphics[scale=0.28]{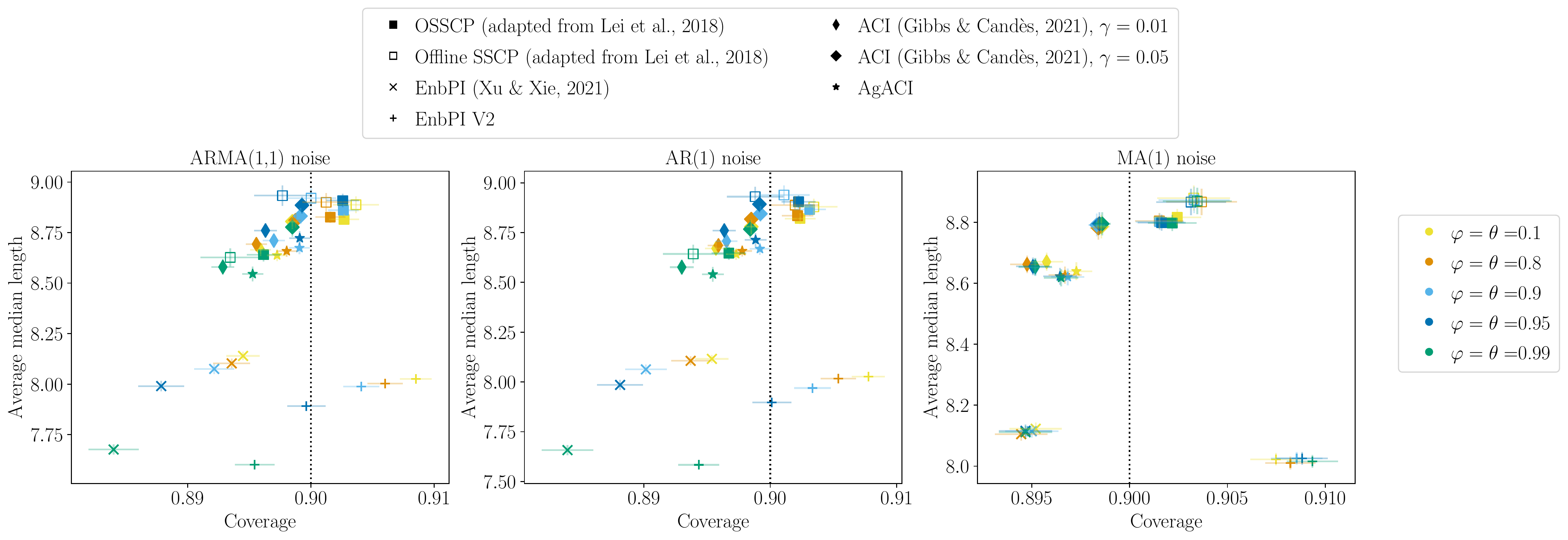}}
\caption{Performance of interval prediction methods on data simulated according to \cref{eq:friedman} with an ARMA(1,1) (left), AR(1) (center) and MA(1) (right) noise with a $\mathcal{N}(0,1\frac{1-\varphi^2}{1-2\varphi\theta+\theta^2})$ \textit{innovation}. Results aggregated from 500 independent runs. Empirical standard error are displayed.}
\label{fig:baseline_arma_ar_ma_var1}
\end{figure}

\subsection{Closer look at infinite intervals}
\label{app:acp_inf}

\begin{wraptable}[20]{r}{0.47\linewidth}
 \vspace{-1cm}
   \caption{Percentage of infinite intervals for ACI, on an ARMA(1,1) noise (first five rows), on an AR(1) noise ($\theta = 0$, next five rows) and a MA(1) noise ($\varphi = 0$, last five rows). The central two columns present the percentage of infinite intervals, for $\gamma = 0.01$ and $\gamma = 0.05$. The last column represents the proportion of points for which $\gamma = 0.05$ predicts $\mathds{R}$ and that are \textit{not} covered for $\gamma = 0.01$.}
\label{tab:acp_inf_ar_ma}
\begin{center}
\begin{small}
\vspace{-0.2cm}
\resizebox{\linewidth}{!}{\begin{tabular}{l|c|c|lr}
\toprule
Noise parameters & $\gamma = 0.01$ & $\gamma = 0.05$ & Intersection \\ 
\midrule
$\varphi = \theta = 0.1$ & 0 & 1.12 & 53\ \  out of 562 &(9.43\%)  \\
$\varphi = \theta = 0.8$ & 0 & 2.76  & 263 out of 1381& (19.04\%) \\
$\varphi = \theta = 0.9$ & 0 & 3.72 & 425 out of 1862& (22.83\%) \\
$\varphi = \theta = 0.95$ & 0.03 & 4.45 & 514 out of 2224& (23.11\%) \\
$\varphi = \theta = 0.99$ & 0.04 & 6.22  & 554 out of 3109 &(17.82\%) \\
\hline
$\varphi = 0.1$ & 0 & 1 & 37\ \  out of 500 &(7.40\%) \\
$\varphi = 0.8$ & 0 & 2.75 & 212 out of 1373 &(15.44\%) \\
$\varphi = 0.9$ & 0 & 3.24 & 359 out of 1622 &(22.13\%) \\
$\varphi = 0.95$ & 0.03 & 4.32 & 488 out of 2160& (22.59\%) \\
$\varphi = 0.99$ & 0.06 & 6.15 & 560 out of 3073 &(18.22\%) \\
\hline
$\theta = 0.1$ & 0 & 1.03 & 38 out of 516 &(7.36\%) \\
$\theta = 0.8$ & 0 & 1.42 & 49 out of 710 &(6.90\%) \\
$\theta = 0.9$ & 0 & 1.54 & 47 out of 772 &(6.09\%) \\
$\theta = 0.95$ & 0 & 1.54 & 45 out of 770 &(5.84\%) \\
$\theta = 0.99$ & 0 & 1.56 & 53 out of 781 &(6.79\%) \\
\bottomrule
\end{tabular}}
\end{small}
\end{center}
\vskip -0.1in
\end{wraptable}
In this subsection, we investigate further the infite intervals generated by ACI for ARMA(1,1), AR(1) and MA(1) noise models. We report the results in \Cref{tab:acp_inf_ar_ma}.  The central two columns present the percentage of infinite intervals, for $\gamma = 0.01$ and $\gamma = 0.05$. 
A first obvious observation is that the number of infinite intervals is orders of magnitude smaller for $\gamma=0.01$ than for $\gamma=0.05$. 
The last column represents the proportion of points for which $\gamma = 0.05$ predicts $\mathds{R}$ and that are \textit{not} covered for $\gamma = 0.01$. This suggests that for those intervals, predicting an infinite interval was somehow justified in the sense that the point was seemingly challenging to cover (as $\gamma = 0.01$ failed to cover).
For example, in the first line ($\varphi = \theta = 0.1$) we read that there are 562 points that result in infinite intervals for $\gamma = 0.05$, among which 53 lead to finite predictions for $\gamma = 0.01$ failing to cover on that point. This means only 9.43 \% of 562 infinite intervals that can be considered as ``somehow justified''. This analysis highlights that $\gamma = 0.05$ seem to predict  more infinite intervals than necessary, to compensate for easy errors as explained in \Cref{sec:acp}.

\subsection{Randomised, sequential and other splits.}
\label{app:randomized_sequential}

In \Cref{fig:arma_fixed_10_randomized_errtype}, we compare the sequential split strategy (dark markers) used in our experiments to the randomised version (clear markers), on online SCP. We observe that the intervals produced by the randomised version are significantly smaller than the sequential one, while covering slightly less. 

Another splitting strategy would consist in calibrating on the first points and training on the last ones. Up to our knowledge, this has not been used in practice. This way, we could hope to obtain a better model for the point prediction task. Nevertheless, we would be calibrating on really different data than the test ones. Thereby, the impact of this scheme regarding the interval prediction task performance is not straightforward. This is why we focus here on the sequential split, which is the most intuitive approach. Analysing further all of these effects theoretically or with extensive numerical experiments would be beneficial to the time series conformal prediction domain.

\begin{figure}[!h]
\centerline{\includegraphics[scale=0.28]{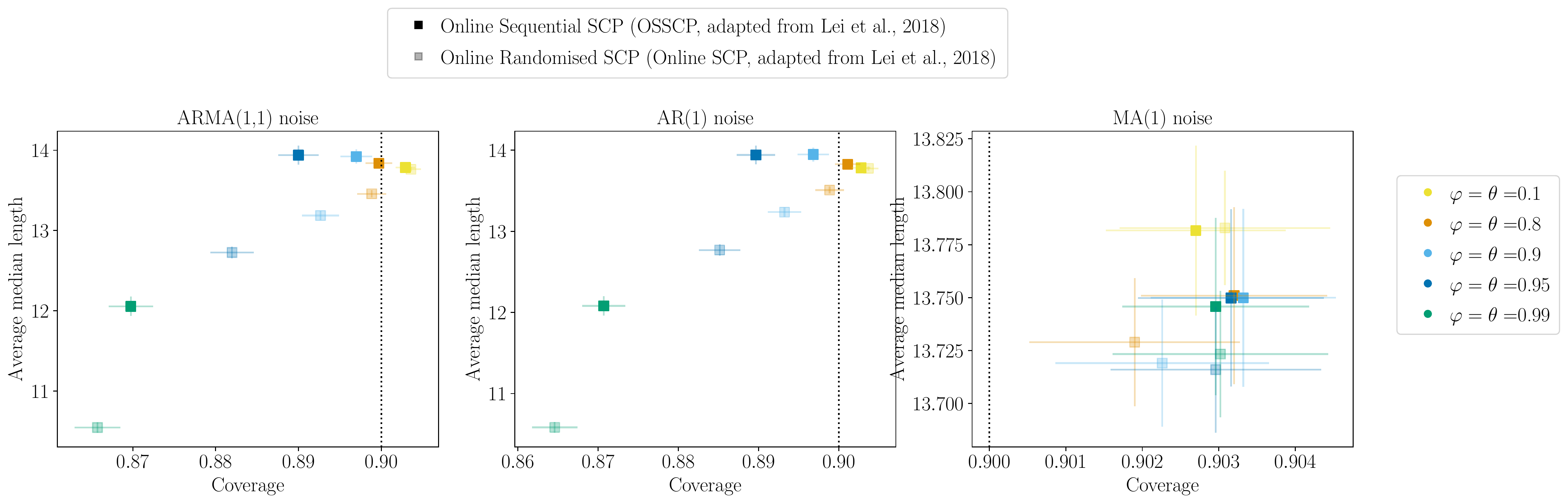}}
\caption{Performance of  interval prediction methods on data simulated according to \cref{eq:friedman} with a Gaussian ARMA(1,1) (left), AR(1) (middle) and MA(1) (right) noise of asymptotic variance 10 (see \Cref{app:arma}). Randomised methods are displayed. Results aggregated from 500 independent runs. Empirical standard error are displayed.}
\label{fig:arma_fixed_10_randomized_errtype}
\end{figure}

\section{Forecasting French electricity spot prices}
\label{app:price}
\subsection{Details about the data set}
\label{app:price_data}

\Cref{tab:spot_dataset} presents an extract of the French electricity spot prices data set used in \Cref{sec:comp_real}.
In this table, $2 \times 23$ columns are hidden for clarity and space: the 24 prices of $D-7$ and the 24 prices of $D-7$ are used as variables.

\begin{table}[!h]
\caption{Extract of the built data set, for French electricity spot price forecasting.}
\label{tab:spot_dataset}
\vskip 0.15in
\begin{center}
\begin{small}
\begin{tabular}{c|ccccc}
\toprule
Date and time & Price & Price D-1 & Price D-7 & For. cons. & DOW \\
\midrule
11/01/16 0PM & 21.95 &  15.58 & 13.78 & 58800 & Monday \\
11/01/16 1PM & 20.04 & 19.05 & 13.44 & 57600 & Monday \\
\vdots & \vdots & \vdots & \vdots & \vdots & \vdots  \\
12/01/16 0PM & 21.51 &  21.95 & 25.03 & 61600 & Tuesday \\
12/01/16 1PM & 19.81 & 20.04 & 24.42 & 59800 & Tuesday \\
\vdots & \vdots & \vdots & \vdots & \vdots & \vdots  \\
18/01/16 0PM & 38.14 & 37.86 & 21.95 & 70400 & Monday \\
18/01/16 1PM & 35.66 & 34.60 & 20.04 & 69500 & Monday \\
\vdots & \vdots & \vdots & \vdots & \vdots & \vdots  \\
\bottomrule
\end{tabular}
\end{small}
\end{center}
\vskip -0.1in
\end{table}

\subsection{Forecasting year 2019}
\label{app:price_visu_int}

In \Cref{fig:ex_int_all} we observe that on January 25, 2019, the forecasts are very different from the actual values. Nevertheless, the prediction intervals manage to include these observations for almost all hours (except after 5 pm) and almost all methods (EnbPI does not include points earlier, starting at 11 am).

\begin{figure}[!h]
     \centering
     \begin{subfigure}[OSSCP]{
         \includegraphics[width=0.36\textwidth]{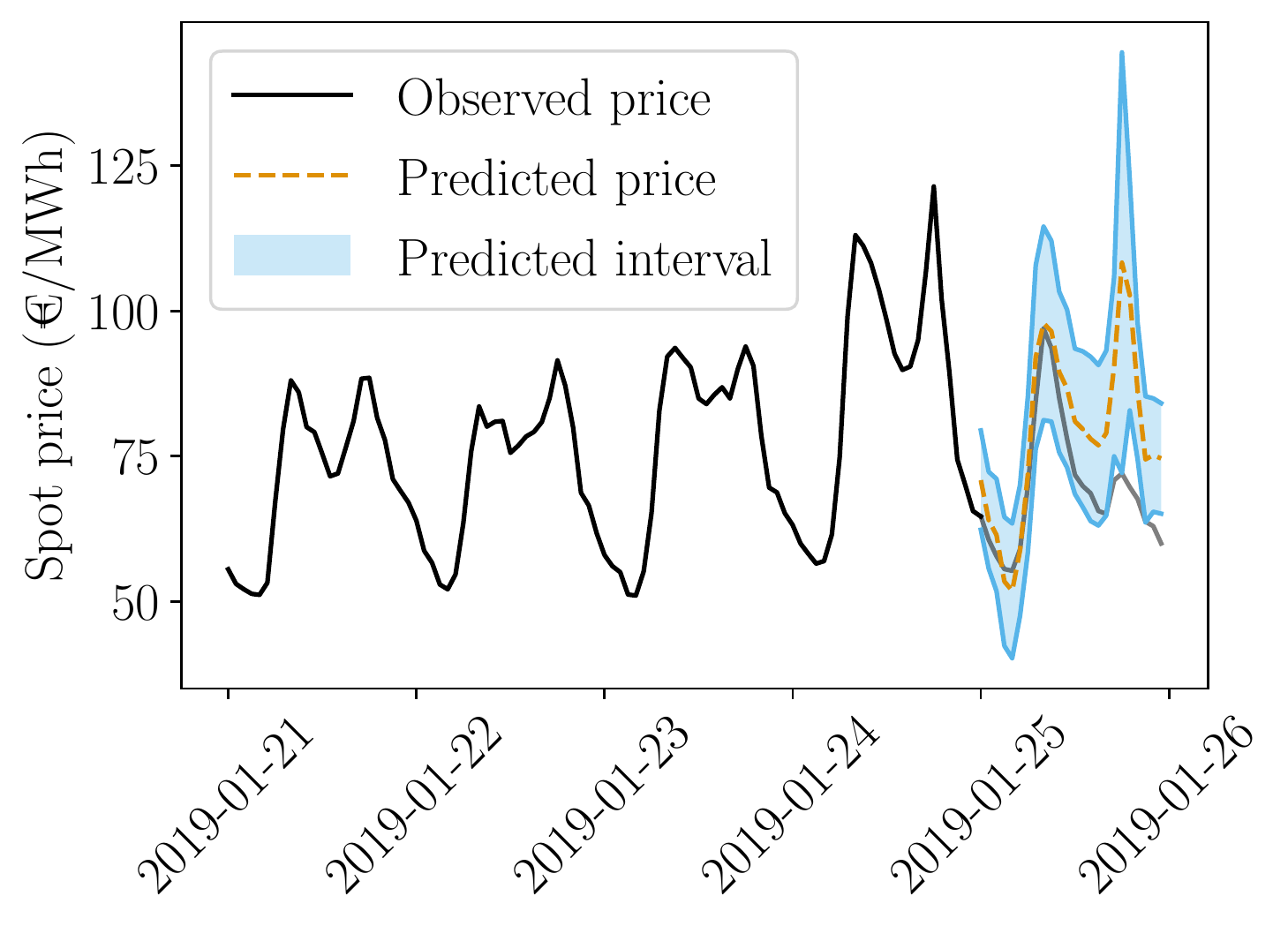}
         \label{fig:ex_int_cp}
         }
     \end{subfigure}
     \begin{subfigure}[ACI with $\gamma = 0.01$]{
         \includegraphics[width=0.36\textwidth]{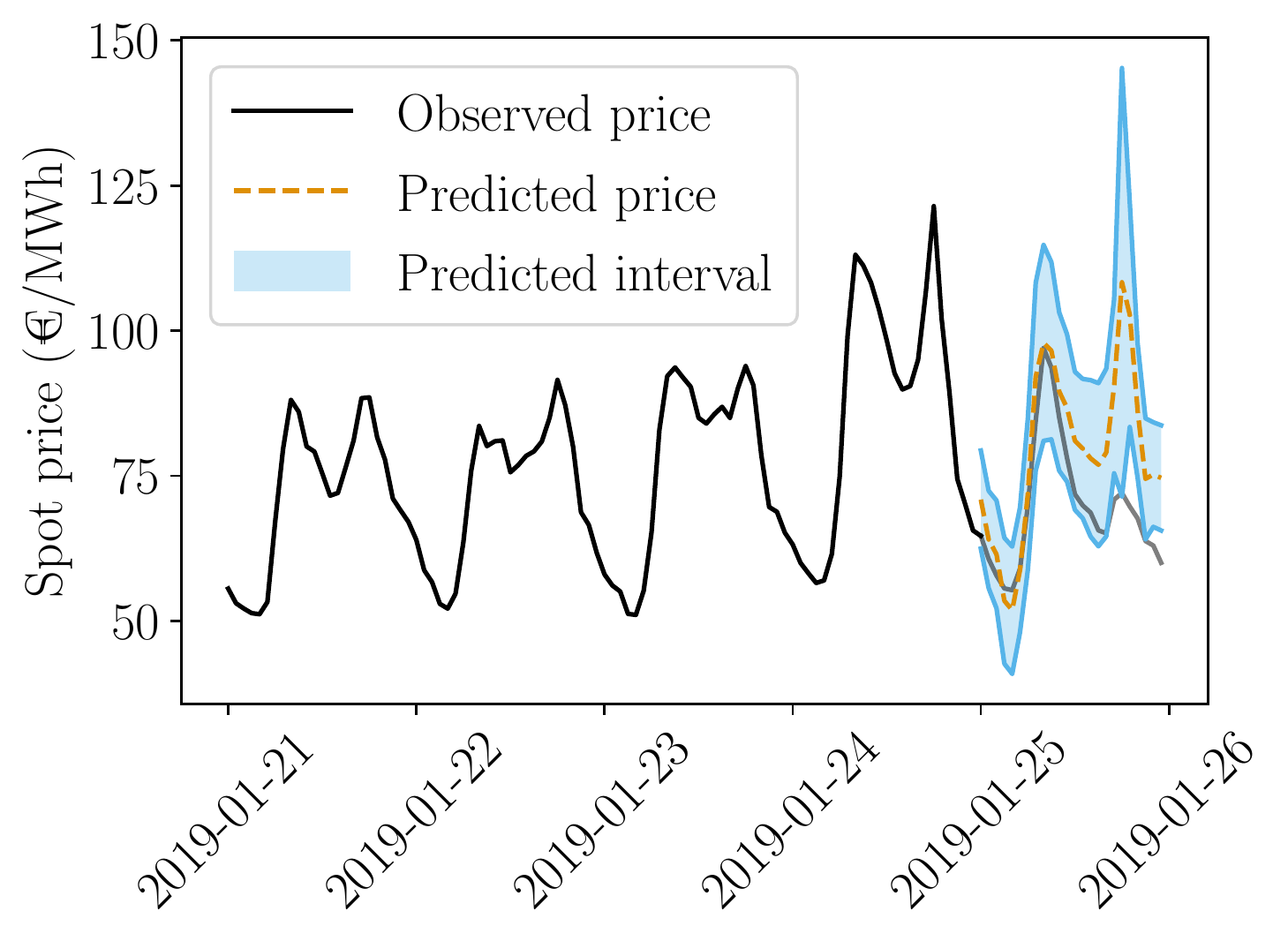}
         \label{fig:ex_int_acp_0.01}
         }
     \end{subfigure}
     \begin{subfigure}[EnbPI V2]{
         \includegraphics[width=0.36\textwidth]{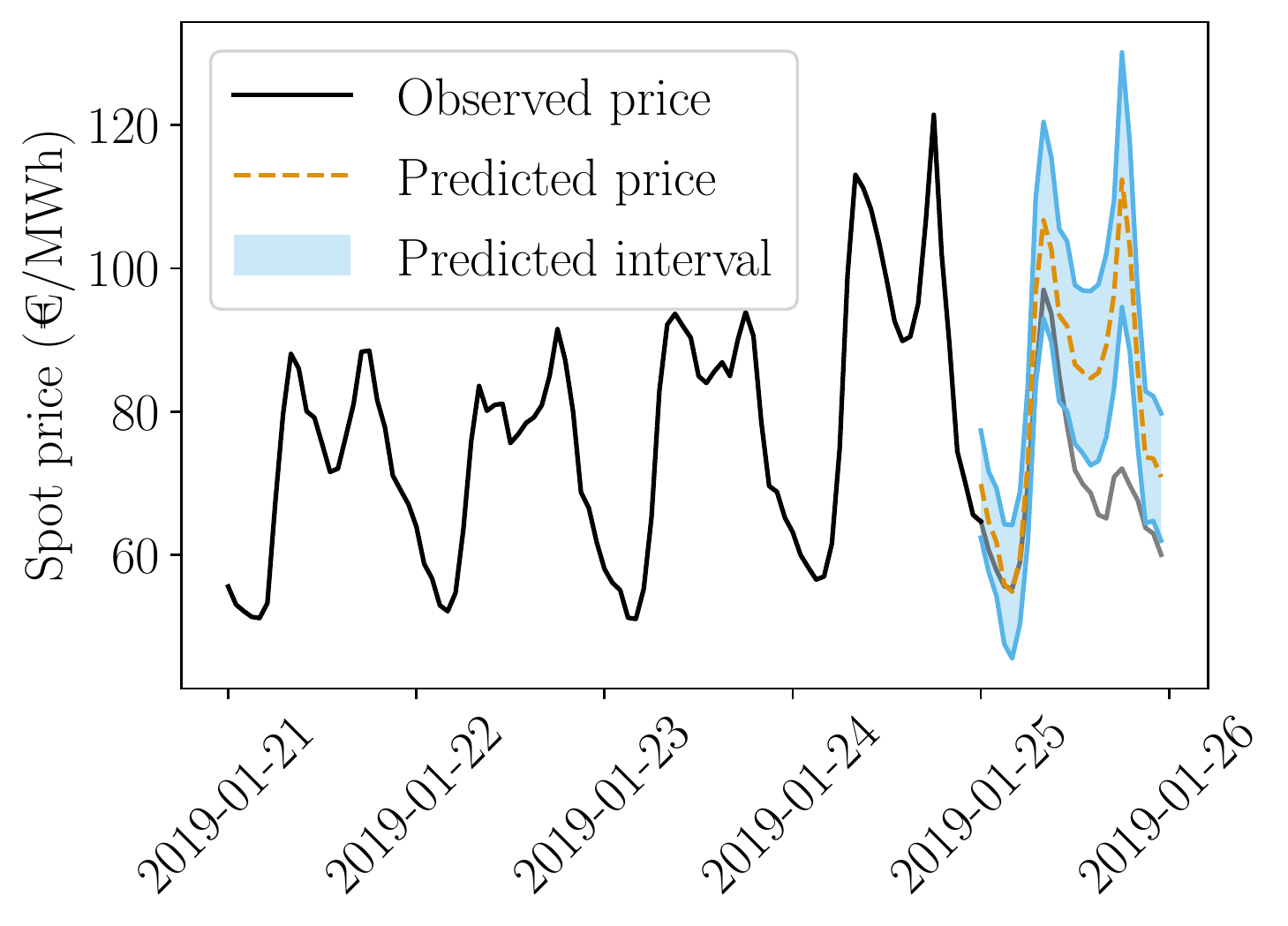}
         \label{fig:ex_int_enbpi}
         }
     \end{subfigure}
     \begin{subfigure}[ACI with $\gamma = 0.05$]{
         \includegraphics[width=0.36\textwidth]{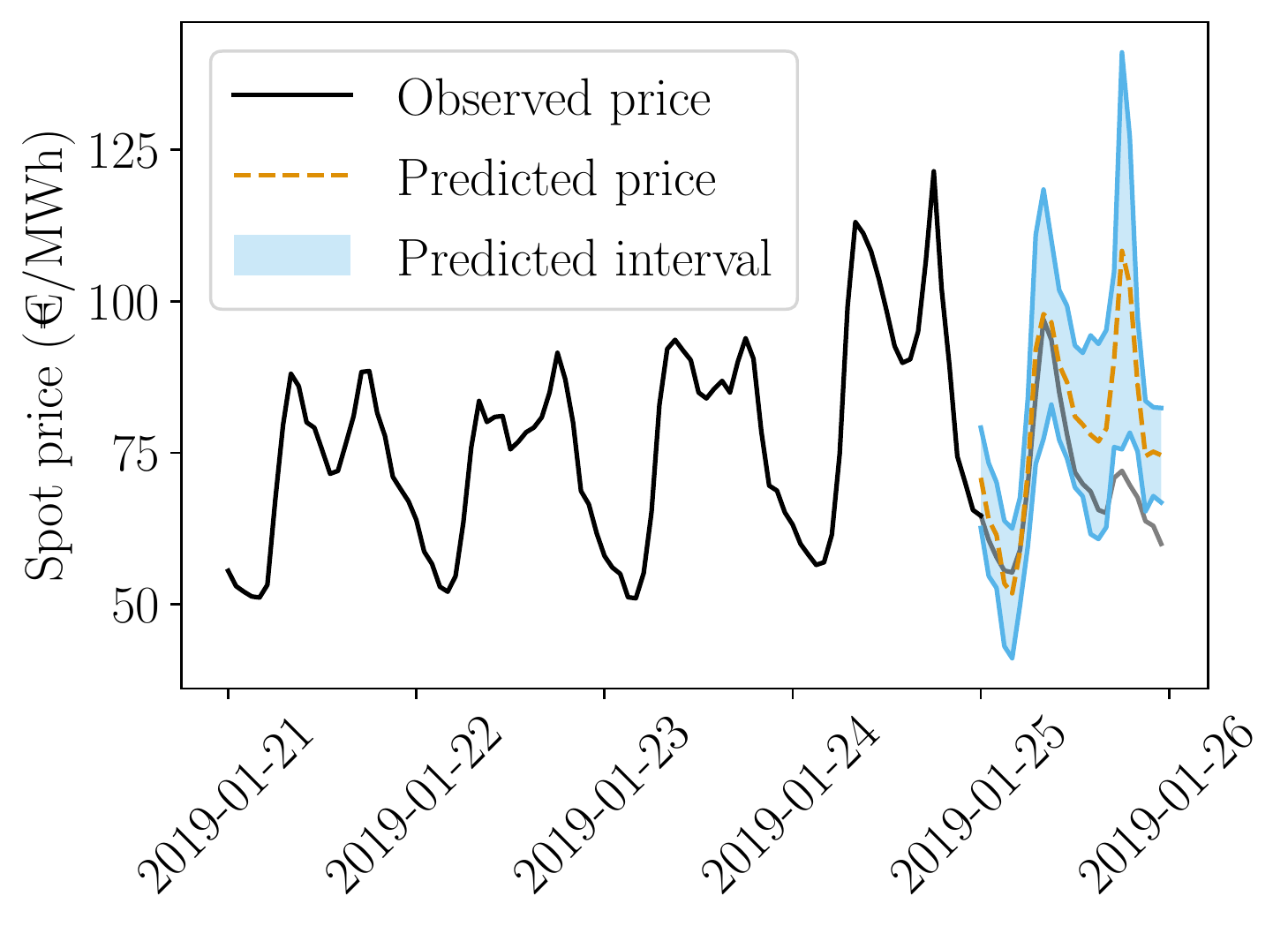}
         \label{fig:ex_int_acp_0.05}
     }
     \end{subfigure}
     \caption{Representation of predicted intervals around point forecasts on the 25th of January of 2019.}
     \label{fig:ex_int_all}
\end{figure}

\end{document}